\DeclareMathOperator*{\arginf}{arg\,inf}
\theoremstyle{plain}
\newtheorem{thm}{Theorem}[section]
\newtheorem{prop}[thm]{Proposition}
\newtheorem{assu}[thm]{Assumption}
\theoremstyle{definition}
\newtheorem{defn}[thm]{Definition}
\newtheorem{nota}[thm]{Notation}
\theoremstyle{remark}
\newtheorem{rem}[thm]{Remark}
\numberwithin{equation}{section}
\newcommand{\EE}{\mathbb{E}}
\newcommand{\PP}{\mathbb{P}}
\newcommand{\ud}{\,\mathrm{d}}
\newcommand{\mc}[1]{\mathcal{#1}}
\newcommand{\EPS}{\varepsilon}
\newcommand{\R}{\mathbb{R}}
\newcommand{\F}{\mathscr{F}}
\newcommand{\SET}[1]{\left\{#1\right\}}
\newcommand{\E}{\mathbb{E}}
\newcommand{\NORM}[1]{\Vert#1\Vert}
\newcommand{\A}{\mathscr{A}}
\newcommand{\BAR}[1]{\overline{#1}}
\newcommand{\transpose}{^{\operatorname{T}}}
\newcommand{\Tr}{\mathrm{Tr}}
\newcommand{\N}{\mathbb{N}}
\newcommand{\X}{\mathscr{X}}
\title{Finite-Agent Stochastic Differential Games on Large Graphs: II. Graph-Based Architectures}
\author{Ruimeng Hu\thanks{Department of Mathematics, and Department of Statistics and Applied Probability, University of California, Santa Barbara, CA 93106-3080, {\em rhu@ucsb.edu}.} \and  Jihao Long\thanks{Institute for Advanced Algorithms Research, Shanghai, China, \em{longjh1998@gmail.com}.} \and Haosheng Zhou\thanks{Department of Statistics and Applied Probability, University of California, Santa Barbara, CA 93106-3110, \em{hzhou593@ucsb.edu}.}}
\date{\today}
\begin{document}

\maketitle
% Abstract
\begin{abstract}
We propose a novel neural network architecture, called Non-Trainable Modification (NTM), for computing Nash equilibria in stochastic differential games (SDGs) on graphs. These games model a broad class of graph-structured multi-agent systems arising in finance, robotics, energy, and social dynamics, where agents interact locally under uncertainty. The NTM  architecture imposes a graph-guided sparsification on feedforward neural networks, embedding fixed, non-trainable components aligned with the underlying graph topology. This design enhances interpretability and stability, while significantly reducing the number of trainable parameters in large-scale, sparse settings.

We theoretically establish a universal approximation property for NTM in static games on graphs and numerically validate its expressivity and robustness through supervised learning tasks. Building on this foundation, we incorporate NTM into two state-of-the-art game solvers, Direct Parameterization and Deep BSDE (backward stochastic differential equation), yielding their sparse variants (NTM-DP and NTM-DBSDE). Numerical experiments on three SDGs across various graph structures demonstrate that NTM-based methods achieve performance comparable to their fully trainable counterparts, while offering improved computational efficiency.
\end{abstract}

\textbf{Key words:}  \textit{Stochastic differential games on graphs, Nash equilibrium, non-trainable modification, graph neural networks}

%\tableofcontents

\section{Introduction}

Graph-structured multi-agent systems arise in diverse applications such as financial networks \cite{elliott2014financial,jackson2021systemic}, distributed robotics \cite{bullo2009distributed,julian2012distributed}, social interactions \cite{freeman2004development,tabassum2018social}, and large-scale energy markets \cite{ghappani2025consensus,rokhforoz2023multi}.
Modeling these systems as stochastic differential games (SDGs) on graphs provides a rigorous framework for analyzing strategic behavior under uncertainty and local interactions. However, computing Nash equilibria (NE) in such games presents significant analytical and computational challenges, especially in high-dimensional and/or sparse settings where analytical tractability is limited and traditional numerical methods do not scale \cite{daskalakis2009complexity}.

In our earlier work \cite{hu2024finite}, we analyzed finite-agent linear-quadratic (LQ) games on graphs, one of the few classes of games that offer certain tractability. That work introduced a general class of graph-structured SDGs with heterogeneous player interactions and established convergence guarantees for fictitious play \cite{brown1949some,brown1951iterative}, a widely used learning scheme for computing Nash equilibrium (NE).
For vertex-transitive graphs, we further derived semi-explicit equilibrium characterizations as efficient constructions of numerical baselines.
In addition, we illustrated the interplay between game dynamics and graph topology through numerical experiments. These results motivate us to design numerically efficient and interpretable neural network (NN) architectures for approximating NE strategies in general graph-based games.

This paper, the second part of the series, focuses on algorithmic design. We introduce a novel NN architecture called \textbf{Non-Trainable Modification (NTM)}, which integrates graph topology directly into the network design by introducing fixed, non-trainable weights based on the adjacency structure. This graph-guided sparsification enhances interpretability, maintains training stability, and reduces the number of trainable parameters, an especially desirable feature for large and sparse graphs. Compared to conventional graph convolutional networks (GCNs) \cite{defferrard2016convolutional,kipf2016semi}, NTM offers better alignment with game-theoretic structures and is well-suited for strategy parameterization in stochastic games on graphs.

\smallskip
\noindent\textbf{Related Literature.}
Recent developments in deep learning have inspired numerous new methods for solving SDGs, most of which consist of two critical components: a technique for handling multi-agent interactions and an optimization scheme for strategy updates.
Commonly used techniques include fictitious play (FP) \cite{brown1949some,brown1951iterative}, where each player optimizes its own strategy while assuming other players use fixed past strategies, and policy iteration \cite{zhang2023global}, where all players follow their previous strategies to evaluate the value function, followed by a policy improvement.
In terms of the optimization scheme, algorithms are motivated by different characterizations of the optimality/equilibrium criteria.
Direct parameterization \cite{bachouch2022deep,han2016deep} aligns with directly minimizing the expected cost, deep Galerkin method \cite{al2019applications,sirignano2018dgm} focuses on solving the Hamilton-Jacobi-Bellman (HJB) system, while BSDE-based solvers \cite{han2017deep,han2020deep,hure2020deep} are motivated by the Feynman-Kac characterization of the HJB solution.
We refer readers to the survey \cite{hu2023recent} for a comprehensive overview of the development of deep learning-based game solvers.

Motivated by the non-Euclidean structure of graphs, the machine learning community has developed graph neural networks (GNNs), which outperform traditional graph embedding approaches in various graph-related tasks \cite{zhou2020graph}.
Most GNNs leverage message passing mechanisms and fall into two broad categories: spectral-based, which propagate information via the graph Laplacian \cite{defferrard2016convolutional,kipf2016semi}, and spatial-based, which operate directly on local neighborhoods \cite{atwood2016diffusion}. GNNs have achieved notable success in diverse domains such as chemistry and biology \cite{duvenaud2015convolutional,fout2017protein}, knowledge graphs \cite{bordes2013translating}, and generative modeling \cite{bojchevski2018netgan}. We refer readers to \cite{wu2020comprehensive,zhang2019graph} for comprehensive surveys.

Despite the success of GNNs in many domains, little work has explored their application to stochastic differential games (SDGs) on graphs. Directly applying standard GNN architectures, such as GCNs, in conjunction with fictitious play or policy iteration often yields unstable or suboptimal performance. In our experiments, we observe that GCNs, originally designed for feature extraction in transductive learning and node classification tasks, can be sensitive to parameter initialization and fail to produce robust or consistent results across test cases. These observations highlight the need for developing alternative graph-based architectures tailored to the structure and demands of graph-based SDGs. The NTM architecture proposed in this work provides one such approach.

\smallskip
\noindent\textbf{Main Contributions.}
The main contributions of this paper are summarized as follows.
To the best of our knowledge, this work provides one of the first systematic studies on the design of theoretically grounded graph-based architectures for numerically solving graphical games.

\begin{enumerate}[(i)]
    \item \textbf{A novel graph-based architecture with theoretical guarantees.}
    We propose the NTM architecture, a graph-guided modification of feedforward neural networks (FNNs) with fixed non-trainable components. 
    We establish a universal approximation result for Nash equilibrium (NE) strategies in static graphical games, and characterize how such non-trainable components influence the expressivity of the architecture.
    This provides a theoretical foundation for incorporating graph structures directly into neural architectures for equilibrium approximation.
    \item \textbf{Empirical expressivity and stability analysis.}
    We benchmark NTM against standard FNNs and state-of-the-art GCNs through extensive supervised learning experiments. 
    The results demonstrate that NTM achieves competitive expressivity while exhibiting improved training stability compared to GCNs.
    To the best of our knowledge, the training instability of GCNs has received limited attention in the context of function approximation.
    \item \textbf{Integration into deep learning-based game solvers.} We integrate NTM into two deep-learning-based solvers, Direct Parameterization (DP) \cite{han2016deep} and Deep BSDE \cite{han2017deep}, resulting in new algorithms (NTM-DP and NTM-DBSDE) for solving SDGs on graphs. Numerical results show that these methods achieve comparable performance to their original counterparts while significantly reducing the number of trainable parameters, highlighting the potential of structured architectures for complexity reduction, an important aspect that remains underexplored for deep learning-based game solvers.
\end{enumerate}

\smallskip
\noindent \textbf{Organization of the paper.} The rest of the paper is organized as follows: Section~\ref{sec:games_graphs} introduces the setup of SDGs on graphs and provides a brief review of deep fictitious play (DFP), which serves as the basis of later discussion.
Section~\ref{sec:arch} defines and investigates the NTM architecture, proving a universal approximation result, illustrating the interpretability, and presenting supervised learning results.
The combination of NTM and state-of-the-art game solvers results in non-trainable versions of numerical algorithms, which are further tested in Section~\ref{sec:numerics}.
Through numerical experiments conducted on three different models of SDGs on graphs, we demonstrate the validity of NTM for strategy parameterization, introducing sparsity while maintaining comparable performance.
Finally, Section~\ref{sec:conclusion_future} concludes the paper and provides possible directions for future research.

\smallskip
\noindent\textbf{Common notations.} We summarize some commonly used notations throughout the paper:
let \([N] := \{1,2,\ldots,N\}\), and \(\mathbb{S}^{N\times N}\) be the set of real symmetric \(N\times N\) matrices.
The vector concatenation function is denoted by \(\mathrm{Concat}(\cdot)\).
For dimensions \(d_1,d_2\in\N^+\): \(I_{d_1} \in \R^{d_1\times d_1}\) denotes the identity matrix, \(0_{d_1} \in \R^{d_1}\) (resp. \(0_{d_1\times d_2} \in \R^{d_1\times d_2}\)) denotes the zero vector (resp. matrix), and \(\mathbf 1_{d_1} \in \R^{d_1}\) (resp. \(\mathbf 1_{d_1\times d_2} \in \R^{d_1\times d_2}\)) denotes the all-one vector (resp. matrix).  
The Hadamard product is denoted by \(\odot\), and the Kronecker (tensor) product is denoted by \(\otimes\).
By default, \(\|\cdot\|\) stands for the matrix \(2\)-norm, and the operator \(\mathrm{diag}\) converts a vector into a diagonal matrix.

\section{Stochastic Differential Games on Graphs}\label{sec:games_graphs}

In this section, we describe a class of $N$-player stochastic differential games associated with a connected, simple, undirected graph  \(G=(V,E)\). Each vertex in the graph represents a player, and two players interact directly if and only if there is an edge between their corresponding nodes $v_i$ and $v_j$. 

\begin{defn}
    \label{defn:graph}
    Let \(G = (V,E)\) be a finite, connected graph with simple undirected edges, where \(V = \{v_1,\ldots,v_N\}\) denotes the set of vertices, and each edge \(e\in E\) is an unordered pair \(e = (u,v)\) for some \(u,v\in V\).
    Two vertices \(u,v\in V\) are said to be \emph{adjacent}, denoted \(u\sim v\), if and only if \((u,v)\in E\).
    The \emph{degree} of a vertex \(v\in V\), denoted by \(d_v\), is the number of edges connected to \(v\).
    The \emph{\(\ell\)-neighborhood} of a vertex \(v\in V\) is defined as the set of vertices that are exactly \(\ell\) steps away from \(v\), i.e., 
     $  \mathcal{N}^\ell_G(v) := \{u\in V:\text{there exists a path of length \(\ell\) in \(G\) between \(u\) and \(v\)}\}.$
    In particular, the \emph{\(1\)-neighborhood} of vertex \(v\), denoted by \(\mathcal{N}_G(v)\), consists of all vertices adjacent to \(v\) in \(G\).
    When the context is clear, we omit the dependence on $G$ and use the shorthand notation \(\mathcal{N}^{\ell(i)}\) for the \(\ell\)-neighborhood of vertex \(v_i\).

    For such a graph \(G\), the (normalized) graph Laplacian \(L\in\mathbb{S}^{N\times N}\) is defined as
    \begin{equation}
        \label{eqn:L}
        L_{ij} := \begin{cases}
            1 & \text{if}\ i=j\\
            -\frac{1}{\sqrt{d_{v_i}d_{v_j}}} & \text{if}\ i\neq j\ \text{and}\ v_i\sim v_j\\
            0 & \text{otherwise,}
        \end{cases},\ \forall i,j\in[N].
    \end{equation}
\end{defn}

As established in spectral graph theory \cite{chung1997spectral}, the graph Laplacian $L$ provides a powerful matrix representation that captures key structural properties of a graph, such as connectivity and isoperimetric features.

\subsection{The Game Setup}\label{sec:setup}

Consider a filtered probability space \((\Omega,\F,\{\F_t\}_{t\geq 0},\PP)\), supporting independent Brownian motions \(\{W^0_t\},\{W^1_t\},\ldots,\{W^N_t\}\) and \(\F_t = \sigma(W^0_s,\ldots,W^N_s,\ \forall s\in[0,t])\).
The state process of player \(i\), denoted by \(\{X^i_t\}\), is controlled through its strategy process \(\{\alpha^i_t\}\) and influenced by its $\ell$-neighborhood processes, subject to both idiosyncratic noises \(\{W^i_t\}\) and common noises modeled by \(\{W^0_t\}\):
\begin{equation}
    \label{eqn:state_dynamics}
    \ud X^i_t = b^i(t,X^i_t,X^{\mathcal{N}^{\ell(i)}}_t,\alpha^i_t)\ud t + \sigma^i(t,X^i_t,X^{\mathcal{N}^{\ell(i)}}_t,\alpha^i_t) \ud W^i_t + \sigma_0^i(t,X^i_t,X^{\mathcal{N}^{\ell(i)}}_t,\alpha^i_t) \ud W^0_t,\ \forall i\in[N].
\end{equation}
Here, \(X^{\mathcal{N}^{\ell(i)}}_t := [X^{j_1}_t,\ldots,X^{j_{k(i;\ell)}}_t]\transpose\) is the vector of states of all the players in the \(\ell\)-neighborhood of vertex \(v_i\), explicitly written as \(\mathcal{N}^{\ell(i)} = \{v_{j_1},...,v_{j_{k(i;\ell)}}\}\subset V\), where \(k(i;\ell) := |\mathcal{N}^{\ell(i)}|\).
Without loss of generality, we assume that both the state and control processes take values in \(\R\).

The game is defined on a finite time horizon \([0,T]\). Player \(i\) specifies its strategy \(\{\alpha^i_t\}_{t\in[0,T]}\) from the admissible set
\begin{equation}
    \label{eqn:admissible}
    \A := \left\{\alpha:\alpha\ \text{is progressively measurable w.r.t.}\ \{\F_t\},\ \E \int_0^T |\alpha_t|^2\ud t<\infty\right\},
\end{equation}
to minimize its expected cost of the form:
\begin{equation}
    \label{eqn:J}
    J^i(\alpha) := \E \left[\int_0^T f^i(t,X^i_t,X^{\mathcal{N}^{\ell(i)}}_t,\alpha^i_t)\ud t + g^i(X^i_T,X^{\mathcal{N}^{\ell(i)}}_T)\right],
\end{equation}
where the running cost \(f^i\) and the terminal cost \(g^i\) are given.

For well-posedness, the coefficients and cost functionals are assumed to satisfy the conditions in \cite[Assumption(Games), Section~2.1]{carmona2018probabilistic}, ensuring that the state dynamics in \eqref{eqn:state_dynamics} admit a unique strong solution for any admissible strategy profile \(\alpha := (\alpha^1,\ldots,\alpha^N)\in \A^N\).

\begin{rem}[Model interpretation]
    \label{rem:model_interpret}
    The state dynamics~\eqref{eqn:state_dynamics} and cost functionals~\eqref{eqn:J} represent a class of stochastic differential games whose information propagation is governed by the underlying graph structure.
    Specifically, the dynamics of \(\{X^i_t\}\) and the corresponding incentives received by player \(i\) depend explicitly only on the states of players within the
     \(\ell\)-neighborhood of vertex \(v_i\).
     In this paper, we mainly focus on the case \(\ell = 1\), while leaving general values of \(\ell\) for future research.

    Notably, when the underlying graph $G$ is complete (\(G = K_N\)), the proposed framework reduces to the standard formulation for SDGs studied in the literature (e.g., \cite{carmona2013mean}).
    Although Definition~\ref{defn:graph} assumes that the graph $G$ is connected, the model can naturally extend to disconnected graphs by applying the same formulation independently to each connected component of \(G\).
\end{rem}

\medskip

In competitive games, the notion of solution is given by the Nash equilibrium defined as follows.
\begin{defn}[Nash Equilibrium]
    \label{defn:NE}
    A collection of strategies of all players \(\hat{\alpha} := (\hat\alpha^1, \ldots, \hat\alpha^N) \) is called a Nash equilibrium (NE) if 
    \begin{align}
        J^i((\alpha,\hat{\alpha}^{-i}))\geq J^i(\hat{\alpha}),\ \forall i\in[N],\ \forall \alpha \in \A,
    \end{align}
    where \((\alpha,\hat{\alpha}^{-i}) := (\hat{\alpha}^1, \ldots, \hat{\alpha}^{i-1}, \alpha,\hat{\alpha}^{i+1}, \ldots ,\hat{\alpha}^N)\)
    denotes the strategy profile obtained by replacing player \(i\)'s strategy with \(\alpha\) while maintaining all other players' strategies fixed.
\end{defn}

In other words, a Nash equilibrium is a collection of strategies from which no player has an incentive to unilaterally deviate, provided that all other players adhere to their equilibrium strategies.

Additionally, one must explicitly specify the information set available to each player for decision-making. Different choices of information sets give rise to different strategy concepts, such as open-loop, closed-loop, and Markovian strategies. In this paper, our discussion is restricted to Markovian strategies, meaning player $i$'s strategy takes the form \(\alpha^i_t = \phi_i(t,X_t)\) where \(\phi_i:[0,T]\times \R^N\to\R\) is a deterministic feedback function. To align with the admissible set $\A$ (cf.~\eqref{eqn:admissible}), each feedback function \(\phi_i\) is required to be Borel measurable and satisfy the linear growth condition \(\sup_{(t,x)\in[0,T]\times \R^N}\frac{|\phi_i(t,x)|}{1+\NORM{x}}<\infty\).

\subsection{Deep Fictitious Play}\label{sec:DFP}

One widely used approach for computing NE in multi-agent games is fictitious play (FP), originally introduced by Brown \cite{brown1949some,brown1951iterative}. When applied to SDGs, the core idea is to recast the NE computation as a sequence of stochastic control problems. In each round, player $i$ optimizes its objective assuming that all other players are fixed and follow their historical strategies. Solving this problem yields an updated strategy for player $i$. A full round of FP involves one such update for each player. The procedure is repeated across multiple rounds, with the aim that all players’ strategies converge to the NE. At stage $k$, player $i$ solves the stochastic control problem over $\alpha^i$, which minimizes
\begin{equation}\label{def:JFP}
    J^{i,k}(\alpha^i) := \E \left[\int_0^T f^i(t,X^{i,k}_t,X^{\mathcal{N}^{\ell(i)}, k-1}_t,\alpha^i_t)\ud t + g^i(X^{i,k}_T,X^{\mathcal{N}^{\ell(i)}, k-1}_T)\right],
\end{equation}
subject to the state dynamics
\begin{multline}\label{def:XtFP}
    \ud X^{i,k}_t = b^i(t,X^{i,k}_t,X^{\mathcal{N}^{\ell(i)}, k-1}_t,\alpha^i_t)\ud t + \sigma^i(t,X^{i,k}_t,X^{\mathcal{N}^{\ell(i)},k-1}_t,\alpha^i_t) \ud W^i_t \\+ \sigma_0^i(t,X^{i,k}_t,X^{\mathcal{N}^{\ell(i)},k-1}_t,\alpha^i_t) \ud W^0_t,
\end{multline}
where $X^{\mathcal{N}^{\ell(i)},k-1}$ denotes the states in player $i$'s $\ell$-neighborhood, driven by the previously updated strategies $\{\hat \alpha^{j, k-1}\}_{j \in [N]}$. The optimized strategy then defines $\hat \alpha^{i, k}$ in the subsequent stage. 

In the literature, various versions of FP have been proposed. The one described above is known as simultaneous FP. In contrast, alternating FP sequentially updates $\hat \alpha^{i, k}$ from player 1 to $N$, using $\hat \alpha^{j,k}$ if already updated within the round, and $\hat \alpha^{j,k-1}$ otherwise. For further discussion, see \cite[Section~5]{hu2019deep}.

Given the complexity and high dimensionality of general SDGs, analytical solutions are rarely available. Motivated by recent advances in deep learning, \cite{hu2019deep} introduced deep fictitious play (DFP), a deep learning-based extension of the classical fictitious play (FP). The core idea of DFP is to parameterize the solution of the control problem \eqref{def:JFP}–\eqref{def:XtFP} using NNs, which can be trained via stochastic optimization.
The key benefit of DFP lies in its decoupled nature, i.e., each player maintains an independent NN, and different NNs are updated using separate loss functions.
As a result, when using DFP in the sequel, we only need to focus on the strategy parameterization of a single player.
Taking into account the game’s structure, particularly its dependence on an underlying graph, a natural question arises: 
\begin{center}
\textbf{Can this graph structure be leveraged in the design of a graph-based deep learning architecture that specifically adapts to DFP?}
\end{center}
In particular, when parameterizing player $i$'s control $\alpha^{i,k}$ for solving \eqref{def:JFP}--\eqref{def:XtFP}, can one adopt an NN architecture specifically tailored to reflect this graph-based dependence?  The answer to this question will be the main focus of Section~\ref{sec:arch}.

\subsection{Existing Work: Linear-Quadratic Results from \cite{hu2024finite}}\label{sec:LQ}

The stochastic differential game introduced in Section~\ref{sec:setup} provides a general framework to study strategic decision-making, in which players' direct interaction can be modeled through graph structures. While the general theory for this framework remains underdeveloped, specific linear-quadratic (LQ) game settings have been analyzed in recent literature, including works by \cite{hu2024finite,lacker2022case}.

Below, we briefly summarize the key results established in \cite{hu2024finite}, which serve as benchmark solutions for the numerical experiments presented in later sections.

In \cite{hu2024finite}, the game~\eqref{eqn:state_dynamics}--\eqref{eqn:J} features direct interactions within the $\ell$-neighborhoods of each player for general values of \(\ell\), with coefficients and cost functionals explicitly specified as follows
\begin{align}
b^i(t,x^i,x^{\mathcal{N}^{\ell(i)}},\alpha) & = am_G^i(x;\ell) + \alpha, \ \sigma^i \equiv \sigma, \ \sigma_0^i \equiv 0, \ m_G^i(x;\ell):=\sum_{j:v_j\in \mathcal{N}^{\ell(i)}}\tfrac{n(v_j,v_i;\ell)}{|\mathcal{N}^{\ell(i)}|}x^j - x^i,\label{eqn:LQ_dynamics}\\
f^i(t,x^i,x^{\mathcal{N}^{\ell(i)}},\alpha) &= \tfrac{1}{2}\alpha^2 - q\alpha\; m_G^i(x;\ell) + \tfrac{\EPS}{2}[m_G^i(x;\ell)]^2,\ 
\label{eqn:LQ_terminal_cost}
    g^i(x^i,x^{\mathcal{N}^{\ell(i)}}) = \tfrac{c}{2}[m_G^i(x;\ell)]^2,
\end{align}
where \(n(v_j,v_i;\ell)\) stands for the number of paths of length \(\ell\) connecting vertices \(v_i\) and \(v_j\). 
Here, the model parameters \(a\geq 0\) and \(q,c,\EPS>0\), with the condition \(q^2\leq\EPS\) imposed to ensure the well-posedness of the problem.

Under this model, the convergence of fictitious play methods \cite{brown1949some,brown1951iterative} is established on general graphs under a smallness condition on the model parameters that is independent of $N$.
Furthermore, when the graph $G$ exhibits additional symmetry, specifically when it is \emph{vertex-transitive} (i.e., for any two vertices, there exists a graph automorphism mapping one to the other), the Nash equilibrium admits a semi-explicit form, as summarized in the following theorem.

Define \(M(\ell,L) := I - (I-L)^\ell\in\mathbb{S}^{N\times N}\) and the subset \(\X\subset \mathbb{S}^{N\times N}\) as
\begin{equation}
    \X := \SET{X:X \geq 0\text{ is a linear combination of } I,L^1,\ldots,L^{(N-1)\ell}}.
    \label{eqn:solution_space}
\end{equation}

\begin{thm}[{\cite[Theorem~4.2 \& Section~4.1.1]{hu2024finite}}]
    \label{thm:semi-explicit}
Let \(G\) be a simple, connected, vertex-transitive graph with undirected edges.
Assume \(q^2 = \EPS\) in model~\eqref{eqn:LQ_dynamics}--\eqref{eqn:LQ_terminal_cost}. Then for any \( T>0\), there exists a unique solution \(R:[0,T]\to \X\) to the following ODE:
\begin{equation}
    R'(t) = \frac{1}{c}\Tr \left[Q'(R(t)) e^{-t(a+q)M(\ell,L)}\right]e^{-t(a+q)M(\ell,L)},\quad R(0) = 0,
    \label{eqn:second_charac}
\end{equation}
where the function \(Q:\X\to \R\) is defined as
$
    Q(X) := \left[\det\left(I+cXM(\ell,L)\right)\right]^{\frac{1}{N}}.$ 

The Markovian NE for player \(i\) is then given by
\begin{equation}
    \hat{\alpha}^i(t,x) = -qe_i\transpose M(\ell,L) x - e_i\transpose F^i_t x,
    \label{eqn:transitive_NE}
\end{equation}
where $F^i$ is given by
\begin{equation}
    F^i_t = \frac{1}{\frac{\Tr(P_t) - (a+q)\Tr(M(\ell,L))}{N}}[P_t - (a+q)M(\ell,L)]e_ie_i\transpose [P_t - (a+q)M(\ell,L)],
    \label{eqn:closed_form_F}
\end{equation}
with \(P\) defined in terms of \(R\) by
\begin{equation}
    P_t = (a+q)M(\ell,L) + R'(T-t)cM(\ell,L)[I + R(T-t)cM(\ell,L)]^{-1}.
    \label{eqn:R_to_P}
\end{equation}
\end{thm}

In particular, when \(\ell = 1\), \(M(\ell,L) = L\) and the model~\eqref{eqn:LQ_dynamics}--\eqref{eqn:LQ_terminal_cost} reduces to an LQ game with mean reverting dynamics within each \(1\)-neighborhood of the graph.

\section{A Graph-Based Network Architecture}\label{sec:arch}
In this section, we propose a graph-based neural network architecture called NTM (non-trainable modification\footnote{Non-trainable parameters are those that remain fixed during training and are not updated by optimizers such as SGD or Adam. This contrasts with standard trainable parameters, which are iteratively adjusted based on the gradient of a loss function.}), specifically designed for solving SDGs on graphs. NTM incorporates non-trainable components fixed throughout training and is well suited for integration with DFP to handle multi-agent interactions. The architecture is theoretically motivated, interpretable, strongly expressive, and demonstrates superior training stability compared to existing graph convolutional networks (GCNs) \cite{defferrard2016convolutional,kipf2016semi}.  By integrating NTM into state-of-the-art deep learning algorithms for game solving, we obtain non-trainable variants of these methods, highlighting the general applicability of the NTM framework. 

\subsection{NTM: An Interpretable Sparse Network Architecture}\label{sec:NTM}

Given dimensions \(d_{\mathrm{in}},d_{\mathrm{out}}\in\N^+\), consider the approximation of a mapping \(f:\R^{Nd_{\mathrm{in}}}\to\R^{d_{\mathrm{out}}}\).
Such \(f\) can be parameterized by the multi-dimensional NTM architecture proposed in Definition~\ref{defn:NTM_general}. We first introduce certain notations for vectors and matrices in Notation~\ref{nota:NTM_general}, which will be used throughout this work.

\begin{nota}
    \label{nota:NTM_general}
    Given \(d \in \N^+\), \(i\in[N]\) and layer index \(k\in \N^+\), let \(z^{(k)}\in\R^{Nd}\) denote the neuron values at layer $k$. Define \(z^{(k)}_i\in\R^{d}\) as the vector consisting of the \(((i-1)d + 1)\)-th to the \((id)\)-th components of \(z^{(k)}\). That is, \( z^{(k)} = \mathrm{Concat}(z^{(k)}_1,\ldots,z^{(k)}_N)\).
    Similarly, we use \( z^{(0)} = \mathrm{Concat}(z^{(0)}_1,\ldots,z^{(0)}_N)  \in \R^{N d_{\mathrm{in}}}\) to represent the input of the NTM architecture.
    Throughout, the nonlinear activation function \(\sigma:\R \to \R\) applies component-wise.
\end{nota}

\begin{defn}[Multi-Dimensional NTM]
    \label{defn:NTM_general}
    Given graph \(G\), vertex index \(i\in[N]\), network depth \( K\geq 2\) and message channel width \(M\in\N^+\), the \((K+1)\)-layer-\(M\)-channel NTM architecture associated with \(v_i\) and graph \(G\) is denoted by \(\phi^{\mathrm{NTM}}_{i,K,M,G}:\R^{Nd_{\mathrm{in}}}\to\R^{d_{\mathrm{out}}}\).
    Its forward propagation rule is defined recursively for \(k\in[K-1],\ p\in[N]\):
    \begin{align}
     \label{eqn:NTM_general_1}
    &z_p^{(k+1)} =
      \sum_{r=1}^{M} g_{pr}^{(k)}\odot
      \sigma\Big(
        W_{pr,p}^{(k)}\,z_p^{(k)}
        +
        \sum_{q:v_q\in \mathcal{N}_G(v_p)} W_{pr,q}^{(k)}\,z_q^{(k)}+ h^{(k)}_{pr}
      \Big) + b^{(k)}_p,\\
      \label{eqn:NTM_general_2}
      &z^{(K+1)} = W_{\mathrm{out},i}\,z_i^{(K)}
        +
        \sum_{q:v_q\in \mathcal{N}_G(v_i)} W_{\mathrm{out},q}\,z_q^{(K)} + b_{\mathrm{out}} \in \R^{d_{\mathrm{out}}},\ \forall k\in[K-1],\ p\in[N],
    \end{align}
    followed by a pre-processing step of the network input: \(z_p^{(1)} = W_{\mathrm{in},p} z_p^{(0)} + b_{\mathrm{in},p}\).
    The trainable parameters have the following shapes:
         $g^{(k)}_{pr},h^{(k)}_{pr},b^{(k)}_{p},b_{\mathrm{in},p}\in \R^{d}$, $W^{(k)}_{pr,q}\in \R^{d\times d}$, $W_{\mathrm{in},p}\in\R^{d\times d_{\mathrm{in}}}$, $W_{\mathrm{out},q}\in\R^{d_{\mathrm{out}}\times d}$,  $b_{\mathrm{out}}\in\R^{d_\mathrm{out}},$
    where \(d\in\N^+\) is interpreted as the dimension of hidden neurons.
\end{defn}

The forward propagation rule produces \(z_p^{(k+1)}\) by aggregating information from the previous layer \(z^{(k)}\) over the 1-neighborhood of \(v_p\) in \(G\), following the sparsity pattern of the graph Laplacian \(L\) in \eqref{eqn:L}. 
This connection between NTM and \(L\) is made explicit by the following identity:
\begin{equation}
    W_{pr,p}^{(k)}\,z_p^{(k)}
        +
        \sum_{q:v_q\in \mathcal{N}_G(v_p)} W_{pr,q}^{(k)}\,z_q^{(k)} =
       E_p (W^{(k)}_{r} \odot L_{\mathrm{mask}})
        z^{(k)},
\end{equation}
where \(W^{(k)}_r \in \mathbb{R}^{Nd \times Nd}\) is a block matrix with \((W^{(k)}_r)_{pq} := W_{pr,q}^{(k)}\), and \(L_{\mathrm{mask}} \in \mathbb{S}^{Nd \times Nd}\) is a binary sparsity mask defined by \((L_{\mathrm{mask}})_{ip,jq} := \mathbb{I}_{\{L_{pq} \neq 0\}}\) for any \(i,j\in[d]\) and \(p,q\in[N]\). Here, \(\mathbb{I}\) denotes the indicator function, and \(E_p:= e_p\transpose\otimes I_d \in \mathbb{R}^{d\times Nd}\) extracts the \(p\)-th \(d\)-dimensional block, where \(e_p\in\R^N\) is the \(p\)-th standard basis vector.

This design mirrors the structure of spatial GCNs in the machine learning literature (e.g., \cite{wu2020comprehensive, zhang2019graph}), such as diffusion-convolutional neural networks (DCNN) \cite{atwood2016diffusion}. However, a key distinction is that the number of parameters in the NTM architecture scales with the graph size $N$, unlike GCNs. This difference will be further clarified in Section~\ref{sec:SVL}.

To better illustrate the advantages of the NTM architecture, particularly its interpretability and sparsity, we focus on the one-dimensional case while leaving the multi-dimensional case for future studies.
Specifically, we consider games where each player’s state and strategy lie in \(\R\), and the feedback function at any fixed time $t_0 \in [0,T]$ takes the form \(x\mapsto \phi_i(t_0,x)\), mapping \(\R^N\to\R\).
With the application of NTM for strategy parameterization, the dimensions must be set as \(d_{\mathrm{in}} = d_{\mathrm{out}}  = 1\). We can further simplify the architecture by setting the dimension of hidden neuron $d = 1$, resulting in the one-dimensional (1D) NTM architecture. In this scenario, we take the trivial pre-processing step with $W_{\mathrm{in},p} = 1$ and $b_{\mathrm{in},p} = 0$, \(\forall p\in[N]\), leading to $z^{(1)} = z^{(0)}$. In the following context, except for expressivity discussions (Section~\ref{sec:expressivity}) that focus on the multi-dimensional NTM, whenever we refer to NTM \(\phi^{\mathrm{NTM}}_{i,K,M,G}\) without specification, we mean the 1D version with \(d_{\mathrm{in}} = d_{\mathrm{out}} = d = 1\) and network input \(z^{(1)}= z^{(0)}\). 

\smallskip
\noindent\textbf{The choice of depth $K$.} If one requires that \(\phi^{\mathrm{NTM}}_{i,K,M,G}(z^{(1)})\in\R\) depends on all components of \(z^{(1)}\in\R^N\), \(\forall i\in[N]\), one must choose \(K \geq \mathrm{diam}(G)\), where \(\mathrm{diam}(G)\) denotes the diameter of the graph \cite{chung1997spectral}, i.e., the maximum length of the shortest path between any two vertices.
This condition for \(K\) aligns with the error bound~\eqref{eqn:UAT} in the universal approximation result (Theorem~\ref{thm:express}) later established in Section~\ref{sec:expressivity}.
However, it can be overly restrictive for large sparse graphs, e.g., in a cycle graph \(G = C_N\), \(\mathrm{diam}(C_N) = \lfloor\frac{N}{2}\rfloor\) which scales linearly with \(N\). 
Fortunately, the following observations alleviate the need for a large depth \(K\): (i) many real-world graphs---especially in domains like finance and social science---exhibit the small-world property: they are highly clustered with small diameters \cite{watts1999networks} (ii) the universal approximation error bound~\eqref{eqn:UAT} shrinks exponentially in \(K\). 
Shown in Section~\ref{sec:numerics},  small values of $K$ are often sufficient to capture long-range dependencies in graph-based stochastic games.

\smallskip
\noindent\textbf{Interpretability of NTM architecture.} 
The NTM architecture is interpretable and aligns with the decision-making structure in games on graphs.
Consider \(\phi^{\text{NTM}}_{i,K,M,G}\) such that \(K\geq \mathrm{diam}(G)\), with the network input \(z^{(1)}\) being the global state vector across all players, and its output \(z^{(K+1)}\) approximating player \(i\)'s strategy \(\phi_i(t_0,z^{(1)})\) at a fixed time \(t_0\in[0,T]\).

The forward propagation of NTM can be interpreted as follows: player \(i\)'s decision \(z^{(K+1)}\) is based on states \(z^{(K)}\) of players in \(\mathcal{N}_G(v_i)\), who directly influence its dynamics and incentives. 
Each \(z^{(K)}_p\), in turn,  aggregates information from \(z^{(K-1)}\) within \(\mathcal{N}_G(v_p)\).
As this process continues backward through the layers, the dependence of \(z^{(K+1)}\) on \(z^{(K-1)}\) extends to \(\bigcup_{j=1}^{2}\mathcal{N}^j_G(v_i)\). 
Iterating this logic over $K$ layers, player $i$'s strategy reflects a hierarchy of influence---prioritizing directly connected players within $\mathcal{N}_G(v_i)$, while incorporating indirect influences passed through neighbors.

\begin{rem}
Using the sparsity structure of the graph Laplacian $L_G$, instead of that of the adjacency matrix $A_G$, enables the architecture to retain information from smaller neighborhoods \(\bigcup_{j=1}^k\mathcal{N}^{j}_G(v_i)\), even as data propagates to larger neighborhoods \(\mathcal{N}^{k+1}_G(v_i)\).
For illustration, consider a modified graph \(G'\) constructed by adding self-loops to each vertex of a simple graph $G$. Then \(A_{G'}\) shares the same sparsity pattern  as \(L_G\), and \(\mathcal{N}^k_{G'}(v_i) =  \bigcup_{j=1}^k\mathcal{N}^{j}_G(v_i)\cup \{v_i\}\). Adding self-loops effectively encodes memory of intermediate neighborhood layers, which is central to the NTM interpretability.
\end{rem}

\smallskip
\noindent\textbf{Sparsity of NTM architecture.}
Another key advantage of NTM lies in its significantly reduced number of trainable parameters compared to a fully connected feedforward neural network (FNN) of the same width, offering comparable interpretability while improving training efficiency.

We first define the baseline of comparison: a \((K + 1)\)-layer fully connected FNN with $H$ hidden neurons per layer,  denoted by \(\phi^{\mathrm{FNN}}_{K,H}:\R^N\to\R\). Its forward propagation follows:
\begin{equation}
     z^{(k+1)} = \sigma(W^{(k)}z^{(k)}+b^{(k)})\in\R^H, \quad z^{(K + 1)} = w_{\mathrm{out}}\transpose z^{(K)}+b_{\mathrm{out}},\ \forall k\in[K-1],
\end{equation}
with trainable parameters \(b^{(k)},w_{\mathrm{out}}\in \R^H,\ W^{(1)}\in \R^{H\times N},\ W^{(2)},\ldots,W^{(K-1)}\in \R^{H\times H},\ b_{\mathrm{out}}\in\R\).

In standard FNN training, all parameters are updated iteratively using gradient-based optimizers (e.g., SGD or Adam), with gradients computed via auto-differentiation.
Unlike fully connected FNNs, NTM incorporates fixed, non-trainable weights determined by the graph structure.
In particular, based on equations~\eqref{eqn:NTM_general_1}–\eqref{eqn:NTM_general_2}, the trainable parameter \(W^{(k)}_{pr,q}\) (resp. \(W_{\mathrm{out},q}\)) exists if and only if \(L_{pq} \neq 0\) (resp. \(L_{iq} \neq 0\)). These constraints reflect the graph topology and remain unchanged throughout training. This graph-aware reduction in trainable parameters\footnote{This approach is closely related to sparse training techniques in machine learning \cite{ma2018survey}, which aim to reduce model complexity while maintaining a comparable performance.} motivates the name non-trainable modification (NTM).

Quantitatively, to enable a fair comparison between NTM and FNN, the FNN must use width \(H = N\) in order to be interpretable in the sense discussed above, resulting in \((K - 1)(N^2 + N) + N + 1\) trainable parameters in \(\phi^{\mathrm{FNN}}_{K,N}\). In contrast, the 1D NTM \(\phi^{\mathrm{NTM}}_{i,K,M,G}\) has at most \(M(K - 1)(4N + 2|E|) + N + 1\) trainable parameters, where \(|E|\) denotes the number of edges in \(G\). The approximate ratio of trainable parameters is \(\boldsymbol{M s_G}\), where the edge density \(s_G := |E| / \binom{N}{2}\). This reduction becomes especially significant for large, sparse graphs.

Nevertheless, we emphasize that enforcing sparsity often comes at a cost, typically observed as diminished performance.
In theory, it may reduce network expressivity and destroy the structure of the parameter space.
Numerically, it may change the optimization landscape, leading to instability and poor convergence during training.
In contrast, when introduced carefully and in alignment with the problem structure, the drawbacks of sparsification can be mitigated, allowing its benefits to be fully leveraged. The NTM architecture, being graph-guided and problem-aligned, maintains adequate expressivity and exhibits greater robustness, as supported by Theorem~\ref{thm:express} and shown by supervised learning experiments in Section~\ref{sec:SVL}. 
For an illustration of how arbitrarily fixing parameters can harm performance, see Appendix~\ref{app:SVL_approximation}.

\subsection{Expressivity Analysis Motivated by Static Games}\label{sec:expressivity}

This section establishes an expressivity result for the general multi-dimensional NTM architecture (cf. \eqref{eqn:NTM_general_1}--\eqref{eqn:NTM_general_2}).
By explicitly constructing the NTM parameters, we establish Theorem~\ref{thm:express}, which shows that NTM possesses a universal approximation property for equilibrium feedback functions arising in static $N$-player Markovian games on graphs.
This result provides theoretical support for using NTM architectures to parameterize strategies when numerically solving games on graphs.

Consider a static $N$-player game, where each player \(i\) has a state \(x^i\in \mathcal{X} \subset \R^{d_{\mathrm{in}}}\) and chooses a strategy \(a^i \in \mathcal{A} \subset \R^{d_{\mathrm{out}}}\), with \(\mathcal{X}\) and \(\mathcal{A}\) being compact under the Euclidean topology (\(d_{\mathrm{in}},d_{\mathrm{out}}\) match those in Definition~\ref{defn:NTM_general}). Recall that \(\mathcal{N}_G(v_i) = \{v_{j_1}, \ldots, v_{j_{k(i)}}\}\) denotes the 1-hop neighborhood of player \(i\). Define \(O^{\mathcal{N}(i)} := \mathrm{Concat}(O^{j_1}, \ldots, O^{j_{k(i)}})\) for any object \(O\) (e.g., state or action) associated with these neighbors.
For example, \(x^{\mathcal{N}(i)} \in \mathcal{X}^{k(i)}\) and \(a^{\mathcal{N}(i)} \in \mathcal{A}^{k(i)}\) represent the local state and action vectors of player \(i\)'s neighbors.

Each player \(i\in[N]\) seeks to minimize a cost functional
\begin{equation}
    G_i:\mathcal{X}\times \mathcal{A}\times \prod_{j:v_j\in \mathcal{N}_G(v_i)}(\mathcal{X}\times \mathcal{A})\ni (x^i,a^i,x^{\mathcal{N}(i)},a^{\mathcal{N}(i)})\mapsto G_i(x^i,a^i,x^{\mathcal{N}(i)},a^{\mathcal{N}(i)}) \in\R,
\end{equation}
which depends on its own state-action pair and those of its neighbors.

Given the full state \(x := \mathrm{Concat}(x^1, \ldots, x^N) \in \mathcal{X}^{N}\) and joint action \(a := \mathrm{Concat}(a^1, \ldots, a^N) \in \mathcal{A}^{N}\), player \(i\)'s best response is given by \(A_i(x^i, x^{\mathcal{N}(i)}, a^{\mathcal{N}(i)}) := \arginf_{b \in \mathcal{A}} G_i(x^i, b, x^{\mathcal{N}(i)}, a^{\mathcal{N}(i)})\).
Aggregating all best responses yields the joint best response mapping:
\begin{equation}
    A(x, a) := \big(A_1(x^1, x^{\mathcal{N}(1)}, a^{\mathcal{N}(1)}), \ldots, A_N(x^N, x^{\mathcal{N}(N)}, a^{\mathcal{N}(N)})\big),
\end{equation}
which defines a function \(A : \mathcal{X}^N \times \mathcal{A}^N \to \mathcal{A}^N\).

Assuming Markovian strategies of the form \(a^i = \phi_i(x)\) for player \(i\)'s feedback function
\(\phi_i: \mathcal{X}^N \to \mathcal{A}\), a Nash equilibrium corresponds to a fixed point \(\widehat{\phi} = (\widehat{\phi}_1, \ldots, \widehat{\phi}_N)\) satisfying
\begin{equation}
    \label{eqn:NE_FP}
    \widehat{\phi}(x) = A(x, \widehat{\phi}(x)), \quad \forall x \in \mathcal{X}^N.
\end{equation}
Note that this fixed-point characterization is consistent with the definition of Nash equilibrium in the dynamic setting (see Definition~\ref{defn:NE}).

\begin{assu}
    \label{assu:express}
    Assume the following:
    \begin{enumerate}[(i)]
        \item There exists \(\rho \in (0,1)\) such that
        \begin{equation}
            \|A(x, a) - A(x, b)\|_\infty \leq \rho \|a - b\|_\infty, \quad \forall a, b \in \mathcal{A}^N,\ x \in \mathcal{X}^N.
        \end{equation}
        \item For any \(\delta > 0\) and \(i \in [N]\), there exists a neural network of the form
        \begin{equation}
            \widetilde A_i(x^i,x^{\mathcal{N}(i)},a^{\mathcal{N}(i)})=
            \sum_{r=1}^{M}\beta_{ir}\odot
            \sigma\Big(\gamma_{ir,i}^{(x)}\,x^i + \sum_{j:v_j \in \mathcal{N}_G(v_i)}\big(\gamma_{ir,j}^{(x)}x^j + \gamma_{ir,j}^{(a)}a^j\big) + \eta_{ir}\Big),
        \end{equation}
        with parameters of shapes \(\beta_{ir}, \eta_{ir} \in \R^{d_{\mathrm{out}}}, \ \gamma_{ir,j}^{(x)} \in \R^{d_{\mathrm{out}} \times d_{\mathrm{in}}}, \ \gamma_{ir,j}^{(a)} \in \R^{d_{\mathrm{out}} \times d_{\mathrm{out}}}\),
        such that
        \begin{equation}
            \big|\widetilde A_i(x^i, x^{\mathcal{N}(i)}, a^{\mathcal{N}(i)}) - A_i(x^i, x^{\mathcal{N}(i)}, a^{\mathcal{N}(i)})\big| \leq \delta,\ \forall(x^i, x^{\mathcal{N}(i)}, a^{\mathcal{N}(i)}) \in \mathcal{X} \times \mathcal{X}^{k(i)} \times \mathcal{A}^{k(i)}.
        \end{equation}
        Here, \(d_{\mathrm{in}}, d_{\mathrm{out}}, \sigma, M\) match those in Definition~\ref{defn:NTM_general}.
    \end{enumerate}
\end{assu}

Part (i) of Assumption~\ref{assu:express} ensures the existence and uniqueness of the Nash equilibrium. 
Indeed, by Banach's fixed-point theorem, the contraction property of the mapping \(a \mapsto A(x,a)\) guarantees a unique fixed point \(\widehat{\phi}\) satisfying equation~\eqref{eqn:NE_FP}. Part (ii) is motivated by the universal approximation theorem for feedforward neural networks, when the activation function \(\sigma\) is sufficiently expressive (e.g., ReLU, Tanh, etc.), and \(A_i\) is continuous over its compact domain \cite{cybenko1989approximation,hornik1990universal,leshno1993multilayer}.

We next state a universal approximation theorem for the NTM architecture with respect to Nash equilibrium feedback functions in static \(N\)-player games on graphs, the proof of which is presented in Appendix~\ref{app:proof_exp}. 
From a practical perspective, this result implies that, with sufficient depth and appropriate parameters, the NTM architecture can approximate Markovian NE strategies in static graphical games arbitrarily well.

\begin{thm}
\label{thm:express}
Suppose Assumption~\ref{assu:express} holds with \(d_{\mathrm{in}},d_{\mathrm{out}}\in \N^+\), \(M \geq 2\), and \(\sigma = \mathrm{ReLU}\). 
If $d = d_{\mathrm{in}} + d_{\mathrm{out}}$, then for any \(\delta > 0\), \(i \in [N]\), \(K \in \mathbb{N}\), and any simple connected undirected graph \(G\), there exists a set of trainable parameters such that the multi-dimensional NTM architecture \(\phi^{\mathrm{NTM}}_{i,K,M,G}\) satisfies:
\begin{equation}
    \label{eqn:UAT}
    \|\phi^{\mathrm{NTM}}_{i,K,M,G}(x) - \widehat{\phi}_i(x)\|_\infty \leq \frac{\delta}{1 - \rho} + \rho^{K - 1} \|\widehat{\phi}(x)\|_\infty,\quad \forall x \in \mathcal{X}^N.
\end{equation}
\end{thm}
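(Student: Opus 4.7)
The plan is to interpret each hidden transition of \(\phi^{\mathrm{NTM}}_{i,K,M,G}\) as one step of a perturbed fixed-point iteration for the best-response map \(A\), and then combine the Banach contraction estimate with a standard geometric error accumulation.

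First I would introduce two iteration sequences for the fixed input \(x\): a clean sequence \(a^{(0)}:=0\), \(a^{(k+1)}:=A(x,a^{(k)})\), which by Assumption~\ref{assu:express}(i) and Banach's theorem satisfies \(\|a^{(k)}-\widehat\phi(x)\|_\infty\le\rho^{k}\|\widehat\phi(x)\|_\infty\); and a perturbed sequence \(\tilde a^{(0)}:=0\), \(\tilde a^{(k+1)}:=\widetilde A(x,\tilde a^{(k)})\), where \(\widetilde A:=(\widetilde A_1,\ldots,\widetilde A_N)\) is assembled componentwise from Assumption~\ref{assu:express}(ii). The contraction of \(A\) combined with the uniform \(\delta\)-bound on \(\widetilde A\) gives the recursion
\[
\|\tilde a^{(k+1)}-a^{(k+1)}\|_\infty \;\leq\; \delta + \rho\,\|\tilde a^{(k)}-a^{(k)}\|_\infty,
\]
so that \(\|\tilde a^{(K-1)}-a^{(K-1)}\|_\infty\leq\delta/(1-\rho)\). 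Bound~\eqref{eqn:UAT} will then follow from the triangle inequality as soon as I can realize \(\tilde a_i^{(K-1)}\) as the NTM output at vertex \(i\).

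The core of the argument is an explicit NTM parameter construction realizing the perturbed iteration in \(K-1\) hidden transitions. The hidden dimension \(d=d_{\mathrm{in}}+d_{\mathrm{out}}\) is chosen precisely so that each block \(z_p^{(k)}\) can carry the concatenation \(\mathrm{Concat}(x_p,\tilde a_p^{(k-1)})\). I would decompose every weight \(W_{pr,q}^{(k)}\), bias \(h_{pr}^{(k)}\) and gate \(g_{pr}^{(k)}\) into a top \(d_{\mathrm{in}}\)-block and a bottom \(d_{\mathrm{out}}\)-block that act on disjoint subvectors. The top blocks of channels \(r=1,2\) execute the ReLU identity \(x_p=\mathrm{ReLU}(x_p)-\mathrm{ReLU}(-x_p)\), while the top-block gates of channels \(r\geq 3\) and the cross-vertex top-block weights \(W_{pr,q}^{(k)}\) for \(q\in\mathcal N_G(v_p)\) are set to zero so that \(x_p\) propagates unchanged. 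The bottom block of channel~\(r\) is configured with coefficients \(\gamma_{pr,q}^{(x)},\gamma_{pr,q}^{(a)},\eta_{pr},\beta_{pr}\) so that the \(M\)-channel sum reproduces exactly \(\widetilde A_p(x_p,x_{\mathcal N(p)},\tilde a_{\mathcal N(p)}^{(k-1)})\); the graph-Laplacian sparsity pattern is automatically respected since \(\widetilde A_p\) only reads from \(\mathcal N_G(v_p)\). The initialization \(\tilde a^{(0)}=0\) is encoded by the pre-processing \(W_{\mathrm{in},p}=[I_{d_{\mathrm{in}}};\,0]\), \(b_{\mathrm{in},p}=0\), and the final linear layer extracts the bottom block at vertex~\(i\) via \(W_{\mathrm{out},i}=[0,\,I_{d_{\mathrm{out}}}]\), \(W_{\mathrm{out},q}=0\) for \(q\sim v_i\), \(b_{\mathrm{out}}=0\).

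The hard part will be verifying that the top-block identity trick and the bottom-block \(\widetilde A_p\) computation coexist within a single layer: this relies on the direct-sum structure of the block decomposition acting on all NTM parameters, together with the hypothesis \(M\geq 2\), which is needed simultaneously to execute the \((\cdot)_+-(\cdot)_-\) identity in the top block and to enumerate the \(M\) ReLU summands of \(\widetilde A_p\) in the bottom block. A minor technical subtlety is that Assumption~\ref{assu:express}(ii) provides the \(\delta\)-bound on \(\widetilde A_i\) only over \(\mathcal A^{k(i)}\), whereas the perturbed iterates \(\tilde a^{(k)}\) could a priori leave \(\mathcal A\); this is absorbed by interpreting \(\widetilde A_i\) as a neural network whose domain of validity is a neighborhood of \(\mathcal A^{k(i)}\), which does not affect the stated bound. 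Combining the geometric perturbation estimate with the Banach contraction estimate via the triangle inequality then yields~\eqref{eqn:UAT}.
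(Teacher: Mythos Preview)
Your proposal is correct and mirrors the paper's proof essentially step for step: the same clean/perturbed fixed-point iterations, the same block decomposition of \(z_p^{(k)}\) into an \(x\)-part and an \(a\)-part, the same ReLU identity on channels \(r=1,2\) to propagate \(x_p\), the same choice of bottom-block weights to realize \(\widetilde A_p\), and the same pre-processing and output layers. Your flagging of the domain subtlety (perturbed iterates possibly leaving \(\mathcal{A}\)) is a point the paper itself glosses over.
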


As Theorem~\ref{thm:express} shows, the approximation error in \eqref{eqn:UAT} can be made arbitrarily small by choosing a sufficiently large depth $K$.
This differs from many prior works on the expressivity of GCNs and GNNs, which primarily focus on tasks such as feature extraction or graph classification~\cite{wu2020comprehensive,zhang2019graph}, where expressivity refers to distinguishing graph structures through embeddings. 
To our knowledge, Theorem~\ref{thm:express} provides one of the first theoretical guarantees of expressivity for approximating structured functions over graphs, specifically equilibrium feedback maps. 
In particular, this result establishes one of the first provable connections between the structure of non-trainable components and the expressive power of the architecture, thereby providing a theoretical foundation for incorporating graph structures directly into neural architectures for equilibrium approximation.

Though Theorem~\ref{thm:express} requires $d = d_{\mathrm{in}} + d_{\mathrm{out}}$, potentially violated by the 1D NTM (where $d=1 < d_{\mathrm{in}} + d_{\mathrm{out}}$), Sections~\ref{sec:SVL} and~\ref{sec:numerics} provide numerical evidence that the result remains valid in practice.

In dynamic games, where feedback strategies depend on time, we assign a separate NTM to each discretized time point $0 = t_0 < t_1 < \dots < t_{N_T} = T$ to approximate \(x\mapsto \phi_i(t,x)\). 
While the game is formulated in continuous time, practical numerical implementations inevitably rely on time discretization.
Under this discretized perspective, the dynamic problem reduces to a sequence of static approximation tasks at different time steps, for which the universal approximation result in Theorem \ref{thm:express} applies.

\subsection{Supervised Learning Benchmark for Expressivity and Stability}\label{sec:SVL}
This section presents numerical experiments comparing different network architectures in terms of expressivity and training stability. 
We design four supervised learning tasks, motivated by Proposition~\ref{prop:NE_approx}, which approximates NE strategies in LQ games, to evaluate how well each architecture captures the structure of equilibrium maps.
Results show that 1D NTM matches the expressivity proved in Theorem~\ref{thm:express} and consistently outperforms GCNs in training stability, supporting its use for strategy parameterization in graph-based games.

\smallskip
\noindent\textbf{An approximation result for the NE.}
Building on the theoretical analysis in Theorem~\ref{thm:semi-explicit}, we establish Proposition~\ref{prop:NE_approx}, an explicit approximation result for NE strategies. The proof is deferred to Appendix~\ref{app:proof_approx}. We define $\mathscr{P}_L := \mathrm{span}\{ I,L^1,\ldots,L^{N-1}\}\subset \mathbb{S}^{N\times N}$.

\begin{prop}
    \label{prop:NE_approx}
    Consider the LQ SDG on graphs~\eqref{eqn:LQ_dynamics}--\eqref{eqn:LQ_terminal_cost} with \(\ell = 1\).
    Let \(G\) be a simple, connected, vertex-transitive graph with undirected edges and assume \(q^2 = \EPS\). 
    The Markovian NE \(\hat{\alpha}^i(t,x)\) exists for any player \(i\in[N]\). For any \(\delta>0\), \(t\in[0,T]\), \( i\in[N]\), and any compact \(\mathcal{K}\subset\R^N\), there exists \(A_t(L)\in \mathscr{P}_L\) such that
\begin{equation}
    \sup_{x\in \mathcal{K}}|\hat{\alpha}^i(t,x) - e_i\transpose A_t(L) x|<\delta.
\end{equation}
\end{prop}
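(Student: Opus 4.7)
The plan is to leverage the semi-explicit characterization of the Nash equilibrium established in \cite[Theorem~4.2]{hu2024finite}, which expresses the Markovian NE feedback in LQ SDGs on graphs as a linear function of the state, $\hat{\alpha}^i(t,x) = e_i\transpose K(t)\, x$, where $K(t)\in\R^{N\times N}$ solves a coupled matrix Riccati ODE whose coefficients are built from the identity, the graph Laplacian $L$, and the cost parameters. Since
$|\hat{\alpha}^i(t,x) - e_i\transpose A_t(L) x| \le \|K(t) - A_t(L)\|\cdot\|x\|$
and $\sup_{x\in\mathcal{K}}\|x\|<\infty$, the task reduces to approximating $K(t)$ by an element of $\mathscr{P}_L$ in operator norm, uniformly in $t\in[0,T]$.

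The key structural step is to exploit vertex-transitivity together with the condition $q^2=\EPS$. Because $G$ is vertex-transitive it is regular, so the degree matrix is a scalar multiple of the identity, and the normalized Laplacian, the adjacency matrix, and any neighbor-averaging operator constructed from the graph are affine functions of one another; in particular, every dynamics/cost coefficient produced by the graph structure already lies in the commutative subalgebra $\R[L]$. The identification $q^2=\EPS$ is precisely what matches the terminal and running penalty scales, so that both the linear-in-$K$ and the quadratic-in-$K$ terms of the Riccati system are generated by elements of $\R[L]$, rather than involving an exogenous matrix that would break commutativity.

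Third, I would invoke Cayley--Hamilton to rewrite $\R[L]=\mathscr{P}_L$, and note that $\R[L]$ is commutative and closed under sum, product, and inverse on its invertible elements. Uniqueness of the Riccati ODE solution, together with local existence inside this finite-dimensional commutative subalgebra, then forces $K(t)\in\mathscr{P}_L$ for every $t\in[0,T]$. Setting $A_t(L):=K(t)$ yields $\hat{\alpha}^i(t,x)=e_i\transpose A_t(L)x$ exactly, so the stated bound holds for all $\delta>0$ and in fact the supremum vanishes.

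I expect the main obstacle to be the bookkeeping in the second step: verifying from the cost/drift conventions of \cite{hu2024finite} that, under $q^2=\EPS$ and vertex-transitivity, every coefficient entering the Riccati system genuinely lies in $\R[L]$ rather than in a strictly larger, non-commutative subalgebra. If a small off-$\R[L]$ remainder survives the reduction (e.g.\ from an asymmetric coupling term that is not removable by vertex-transitivity alone), the fallback is to perturb to a nearby Riccati system whose coefficients are exactly in $\R[L]$, bound the difference of solutions by a Gronwall argument on $[0,T]$, and then absorb the residual $\mathcal{O}(\text{perturbation})$ into $\delta$; in either case one obtains the claimed polynomial-in-$L$ approximation uniformly on the compact set $\mathcal{K}$.
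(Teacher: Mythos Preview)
Your proposal is correct and in fact yields a sharper conclusion than the paper's own proof. Both routes start from the semi-explicit NE characterization in \cite{hu2024finite}: under vertex-transitivity and $q^2=\EPS$, one has $\hat\alpha^i(t,x)=-qe_i\transpose Lx - e_i\transpose R'(T-t)cL[I+R(T-t)cL]^{-1}x$ with $R(t),R'(t)\in\mathscr{P}_L$ (this is the ``bookkeeping'' you flagged, and it is handled by \cite[Lemma~4.7(ii), eq.~(4.4)]{hu2024finite}, not by a generic Riccati-in-$\R[L]$ argument; the original system is a coupled family $\{F^i_t\}_{i\in[N]}$ and one needs those cited results to collapse it to a single player-independent object). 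The divergence is in how the matrix inverse is handled. The paper bounds the spectrum of $R(T-t)L$, invokes Stone--Weierstrass to approximate $y\mapsto(1+cy)^{-1}$ by a polynomial $P_k$, and sets $A_t(L)=-qL-R'(T-t)cL\,P_k(R(T-t)L)$, obtaining a genuine $\delta$-approximation. Your observation that $\R[L]=\mathscr{P}_L$ is closed under inversion of its invertible elements (true because $L$ is diagonalizable, so the inverse is realized by Lagrange interpolation on the eigenvalues) lets you take $A_t(L)$ equal to the exact feedback matrix, giving supremum zero. Your argument is shorter and shows the proposition is actually an equality rather than an approximation; the paper's route is slightly more constructive in that it exhibits an explicit polynomial of controlled degree, but this is not needed for the statement as written.
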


Proposition~\ref{prop:NE_approx} shows that under certain conditions on the graph and model parameters, the NE strategy of player \(i\) in the LQ game approximately lies in a restricted function space \(\mathcal{A}^i\):
\begin{equation}
    \label{eqn:NTM_func_class}
    \mathcal{A}^i := \{f:\R^N\to\R|f(x)= e_i\transpose A(L) x,\ \forall A(L)\in\mathscr{P}_L\}.
\end{equation}
Therefore, when strategies of the LQ game are parameterized by NTM, expressivity should be evaluated relative to this class, i.e., whether \(\phi^{\mathrm{NTM}}_{i,K,M,G}\) can approximate any function in \(\mathcal{A}^i\) up to an arbitrary accuracy.
We remark that, by Theorem~\ref{thm:semi-explicit}, a similar characterization of \(\mathcal{A}^i\) in \eqref{eqn:NTM_func_class} remains valid for general values of \(\ell\).

While $\mathcal{A}^i$ offers a meaningful benchmark, it does not generalize to broader classes of non-LQ games. Although Theorem~\ref{thm:express} provides a theoretical guarantee for expressivity, it does not directly apply for 1D NTM and the assumptions may not be easy to verify in practice. 
To address this gap, we turn to numerical experiments to directly assess both expressivity and training stability, with the test functions motivated by the structure of the restricted function class \(\mathcal{A}^i\). 

\smallskip
\noindent\textbf{Architectures for comparison.}
We compare three architectures: standard FNN, our proposed NTM, and the state-of-the-art Chebyshev GCN \cite{defferrard2016convolutional} defined in Definition~\ref{defn:Chebyshev_GCN}, using notations from Notation~\ref{nota:NTM_general}.
Unlike spatial-based architectures like NTM, the Chebyshev GCN is spectral-based and operates in the frequency domain using the graph Fourier transform.
Graph signals are transformed into the spectral domain, where filtering operations, parameterized by trainable weights, are applied, effectively implementing convolution over the graph Laplacian spectrum.

\begin{defn}[Chebyshev GCN \cite{defferrard2016convolutional}]
    \label{defn:Chebyshev_GCN}
    Let \(G\) be a graph with Laplacian \(L\), and \(K\geq 2\). The \((K+1)\)-layer Chebyshev GCN associated with graph \(G\) is denoted by \(\phi^{\mathrm{CHEB}}_{K,G}:\R^N\to\R\).
    It consists of \(K\) Chebyshev graph convolution layers followed by one fully connected output layer:
    \begin{align}
        \label{eqn:cheb_GCN_conv}
        z^{(k+1)} &= \sigma(z^{(k)}W^{(k,1)} + \tilde{L}z^{(k)}W^{(k,2)} + \mathbf{1}_{N\times F_{k+1}}\;\mathrm{diag}(b^{(k)}))\in\R^{N\times F_{k+1}},\\
        z^{(K+1)} &= w_{\mathrm{out}}\transpose z^{(K)}+ b_{\mathrm{out}}\in\R,\ \forall k\in[K-1],
    \end{align}
    where \(\{F_k\}_{k=1}^K\) is a sequence of customizable feature dimensions, with \(F_1 = 1\) (since the input \(z^{(1)}\in\R^N\)) and \(F_K = 1\).
    The trainable parameters have shapes: \(W^{(k,1)},W^{(k,2)}\in \R^{F_k\times F_{k+1}},\ b^{(k)}\in\R^{F_{k+1}},\ w_{\mathrm{out}}\in \R^N,\ b_{\mathrm{out}}\in\R\).
    Here, \(\tilde{L} := \frac{2}{\lambda_{\mathrm{max}}}L - I\) is the scaled graph Laplacian, where \(\lambda_{\mathrm{max}}\) is the largest eigenvalue of \(L\). 
    The operator \(\mathrm{diag}\) converts a vector into a diagonal matrix.
\end{defn}

Although both GCNs and NTM are graph-based, their architectures differ significantly in structure and interpretation, particularly in how they propagate information and parameterize interactions. These differences are further elaborated in Remark~\ref{rem:GCN_shape}.

\begin{rem}
    \label{rem:GCN_shape}
    For the Chebyshev GCN \(\phi^{\mathrm{CHEB}}_{K,G}\), the input \(z^{(1)}\in\R^N\) is convoluted to a graph signal \(z^{(k)}\in\R^{N\times F_k}\) at each layer using trainable spectral filters.  The bias term \(b^{(k)}\in\R^{F_{k+1}}\) is broadcast to all nodes using the matrix \(\mathbf{1}_{N\times F_{k+1}}\,\mathrm{diag}(b^{(k)})\in\R^{N\times F_{k+1}}\).
    Practically used for feature extraction and typically applied for large datasets, the number of trainable parameters in graph convolution layers of GCNs is independent of the graph size $N$, in contrast to FNN and NTM.

    We point out that, DFP aligns better with spatial-based architectures than spectral-based ones.
    With both essentially spreading information on graphs, spectral information divergence mixes all the players' information, making it hard to retrieve those belonging to each of the player.
    Differently, the spatial approach maintains the graph neighborhood structure, enabling the identification of each player's information state.
    
\end{rem}

\smallskip

\noindent\textbf{The supervised learning task.}
In the supervised learning task, numerical experiments are designed to assess two key aspects of the architectures: (i) expressivity and (ii) training stability.

Expressivity tests evaluate how well each architecture approximates a given target function, reflecting their representational power. Meanwhile, training stability tests assess how reliably the optimization process identifies near-optimal parameters. Architectures with stable training consistently converge to good solutions, while unstable ones may yield suboptimal results even after extensive training. Stability is essential for robustness to noise and replicability in practice, and is assessed by whether the architecture consistently produces reasonable approximations of the learning target across multiple training runs.

The experiments consist of four supervised learning tasks, each involving the approximation of a specific target function \(f:\R^N\to\R\). Motivated by the theoretical insights, these target functions are designed to resemble the structure of those in the restricted function class \(\mathcal{A}^i\) (defined in~\eqref{eqn:NTM_func_class}).
Without loss of generality, we set \(i=1\) so that all test cases are analogues to player \(1\)'s equilibrium feedback strategy under certain settings.
Denote by \(f_j:\R^N\to\R\) the target function in the \(j\)-th test case, where \(j\in\{1,2,3,4\}\), as specified below:

\smallskip
\textbf{LQ} (LQ Game): \(f_1(x) := -qe_1\transpose Lx - e_1\transpose F^1_0x\) maps the state vector \(x\in\R^N\) to player \(1\)'s NE strategy at time \(0\) in the LQ game on graph~\eqref{eqn:LQ_dynamics}--\eqref{eqn:LQ_terminal_cost} with \(\ell = 1\).
    Here, \(F^i_t\) solves the Riccati system \cite[equation~(2.11)]{hu2024finite}.
    When \(G\) is vertex-transitive and \(q^2 = \EPS\), this construction can be approximated up to an arbitrary accuracy by functions in \(\mathcal{A}^1\). This serves as the canonical test of whether an architecture can approximate NE strategies under LQ settings.

\smallskip
    \textbf{LQMH} (LQ with Multi-Hop Interaction): \(
    f_2(x) := -qe_1\transpose M(\ell,L)x - e_1\transpose F^1_0x\) models player 1’s NE strategy in the LQ game on graph~\eqref{eqn:LQ_dynamics}--\eqref{eqn:LQ_terminal_cost} under general values of \(\ell\),
where \(M(\ell,L) := I - (I-L)^\ell\) and \(F^i_t\) satisfies \cite[equation~(2.11)]{hu2024finite} with \(L\) replaced by \(M(\ell,L)\). This tests whether the architecture can capture multi-hop dependencies over graphs.

\smallskip
   \textbf{NL} (Nonlinear Dependence): 
\(
    f_3(x) := 3 e_1\transpose \frac{\det(0.5I + L)\, L + \|(I +L)^{-1}\|_F\, L^2}{\|\det(0.5I + L)\, L + \|(I +L)^{-1}\|_F\, L^2\|_F} \log(\mathbf{1}_N + x)\), where the logarithm is applied component-wise.
    This tests the ability to represent nonlinear state dependence in the form \(e_1\transpose A(L)g(x)\), where \(A(L)\in \mathscr{P}_L\) and \(g\) induces component-wise nonlinear actions.

\smallskip
    \textbf{NLM} (Nonlinear Mixture): \(f_4(x) := 10\sum_{i=1}^N e_i\transpose \frac{(\det(0.5I + L)\, L^i + \Tr((I +iL)^{-1})\, L^{N+1-i})}{\|\det(0.5I + L)\, L^i + \Tr((I +iL)^{-1})\, L^{N+1-i}\|_F} (\sin(2\pi x)\odot (\mathbf 1_{N}+x)^{\frac{i}{N}})\) is a complex mixture involving multi-hop interactions and nonlinear state dependence, where sine and power functions act element-wise for vectors. This goes far beyond \(\mathcal{A}^1\), testing whether the architecture can approximate mixtures of multiple players’ NE strategies with nonlinear, graph-structured dependencies. It is designed to be the most challenging benchmark.

Training is performed on the domain of the unit hypercube \([0,1]^N\) with the standard mean squared error as the loss function. 
The key hyperparameter choices are summarized in Table~\ref{tab:summ_SVL}, with full implementation details provided in Appendix~\ref{app:hyper_SVL}.

\begin{table}[t]
\centering
\begin{threeparttable}
\caption{Summary of hyperparameters used in Section~\ref{sec:SVL}.}
\label{tab:summ_SVL}
\begin{tabular}{lll}
\toprule
\textbf{Category} & \textbf{Hyperparameter} & \textbf{Value} \\
\midrule
\multirow{3}{*}{Architecture}
& FNN & $4$ layers, width $32$, \texttt{Tanh} \\
& NTM & $4$ layers, channel width $M = 3$, $i = 1$, \texttt{ReLU} \\
& GCN & $4$ layers, feature dimensions $(1,64,1)$, \texttt{ReLU} \\
\midrule
\multirow{6}{*}{Training}
& Batch size & $N_{\mathrm{batch}} = 256$ \\
& Epochs\(^\dagger\) & $N_{\mathrm{epoch}} = 2000 \, j$ \\
& \multirow{2}{*}{Learning rate\(^\dagger\) (LR)} 
    & FNN/NTM: $\eta_1 = \eta_2 = 0.001,\ \eta_3 = 0.005,\ \eta_4 = 0.01$ \\
& 
    & GCN: $\eta_1 = \eta_2 = 0.002,\ \eta_3 = 0.01,\ \eta_4 = 0.02$ \\
& LR decay factor & $\gamma = 0.5$\\
& LR decay step (epochs) & $\tau = 500$ \\
\bottomrule
\end{tabular}
\begin{tablenotes}[flushleft]
\footnotesize
\item[$\dagger$] Depends on the test case index $j\in \{1,2,3,4\}$.
\end{tablenotes}
\end{threeparttable}
\end{table}

\smallskip
\noindent\textbf{Evaluation metrics.}
For testing, we generate \(N_{\mathrm{test}} = 25000\) samples \(y^1,\ldots,y^{N_{\mathrm{test}}}\) using Latin hypercube sampling (LHS) \cite{mckay2000comparison} for better space-filling over the unit hypercube \([0,1]^N\). Performance is evaluated using the relative mean-squared error (RMSE):
\begin{equation}
    \mathrm{RMSE} := \frac{ \sum_{m=1}^{N_{\mathrm{test}}}[\phi(y^m) - f(y^m)]^2}{ \sum_{m=1}^{N_{\mathrm{test}}} f^2(y^m) + \delta},
\end{equation}
where \(\delta = 10^{-8}\) ensures numerical stability. A lower RMSE indicates better approximation of the target function.

\smallskip
\noindent\textbf{Graph choices.} We test on the following four representative graphs:
\begin{align}
    &C_N:\ \text{the \(N\)-vertex cycle graph},\quad S_N:\ \text{the \(N\)-vertex star graph},\\
    &K_{m,m}:\ \text{the \(2m\)-vertex complete bipartite graph with partitions of size}\ m\ \text{and}\ m,\\
    &\mathrm{RST}_N:\ \text{the \(N\)-vertex uniformly random spanning tree}.
\end{align}
Among these graphs, only \(\mathrm{RST}_N\) is random;  it is sampled and fixed across all architectures and test cases.

\smallskip
\noindent \textbf{Numerical results\footnote{All experiments were conducted on an Nvidia GeForce RTX 2080 Ti GPU, on a server with Linux operating system.}.}
For each test case, graph, and architecture, we repeat the training and evaluation 1000 times, producing 1000 RMSE values visualized via box plots in Figure~\ref{fig:SVL}.
To facilitate a quantitative comparison, we also report the corresponding summary statistics in Appendix~\ref{app:summary_stat}.
For test cases \textbf{LQ} and \textbf{LQMH}, numerical results are presented under the following set of model parameters:
\begin{equation}
    N=10,\quad T = 1.0,\quad a=0.1,\quad \sigma=0.5,\quad q = 0,\quad \EPS = 1.0,\quad c = 1.0,\quad \ell = 3.
\end{equation}

\begin{figure}[!htbp]
    \centering
    \includegraphics[width=1.0\textwidth]{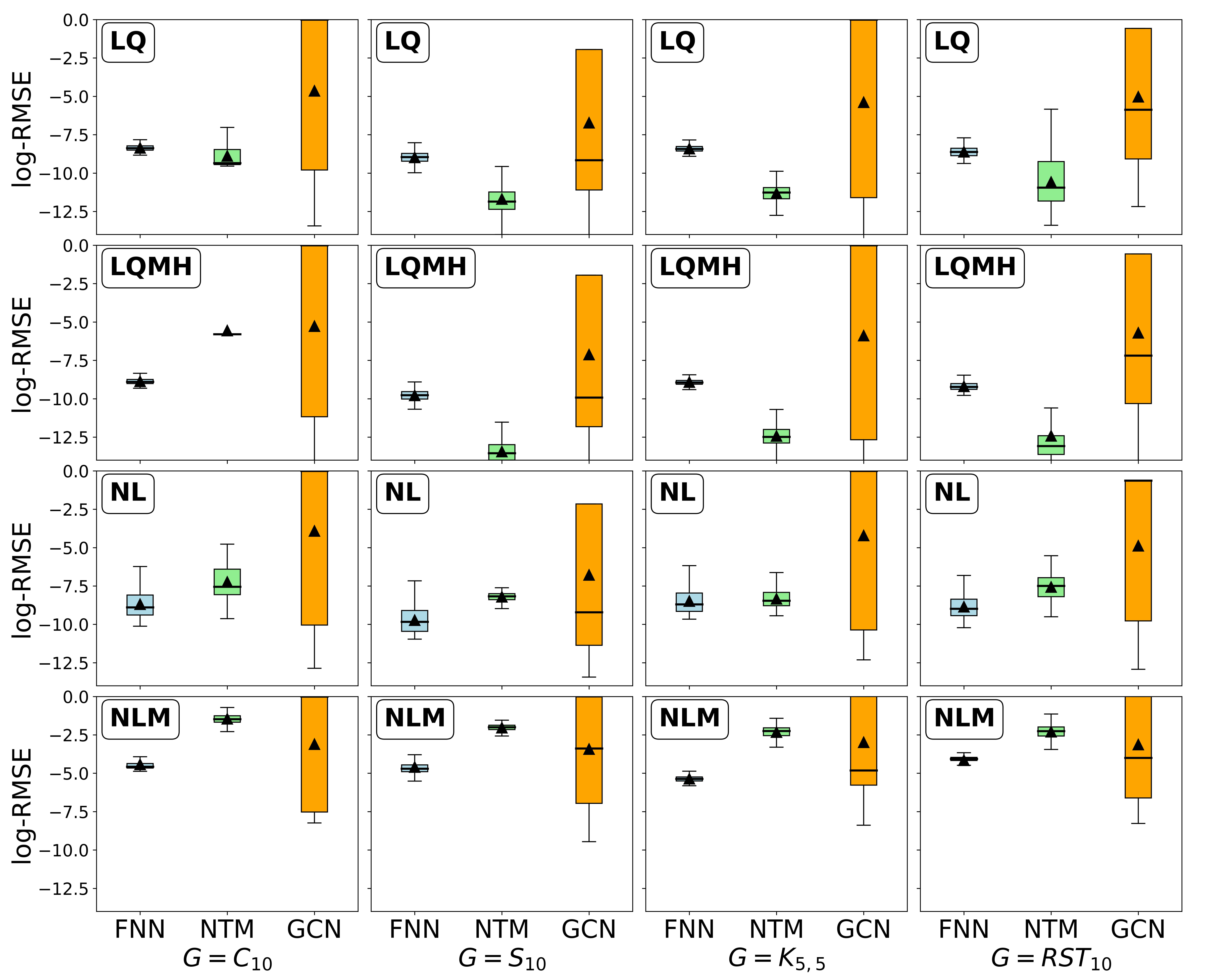}
    \caption{Box plots of log-RMSE over 1000 runs for all test cases, comparing different NN architectures across various graphs. Bold black lines indicate the medians; black triangles indicate the means.}
    \label{fig:SVL}
\end{figure}

In each box plot, the minimum RMSE reflects the expressivity of the architecture. 
Empirically, a low minimum RMSE value is indicative of strong expressivity of the corresponding architecture.
From the results, the Chebyshev GCN and FNN demonstrate strong expressivity across all test cases.
NTM demonstrates strong expressivity for test cases \textbf{LQ}, \textbf{LQMH}, \textbf{NL}, but fails to capture the complexity of \textbf{NLM}, which involves a mixed effect of different players' feedback functions.
Nevertheless, this limitation is not critical, as the adoption of DFP only requires the ability to approximate a single player's strategy.
Therefore, we conclude that, NTM retains enough expressivity to approximate a single player's NE strategies in general (non-LQ) games, despite its non-trainable entries. 
Note that NTM does not perform as well on the cycle graph as on the other graphs in the test case \textbf{LQMH}, since the NTM tested has \(K = 3\), but the cycle graph has a larger diameter \(5\).
Moreover, long-range dependencies play a more significant role in \textbf{LQMH}, compared to other test cases.
This observation aligns with the interpretation of the depth provided in Section~\ref{sec:NTM}.

A large fluctuation and a large maximum of log-RMSE in Figure~\ref{fig:SVL} indicate training instability. 
In our experiments, both FNN and NTM exhibit relatively stable training behavior, while the Chebyshev GCN shows noticeably less stable performance across the considered test cases.
This empirical observation motivates the exploration of alternative architectures, such as the proposed NTM.
To the best of our knowledge, such instability in GCNs has not been reported in prior literature. 
We emphasize that our findings are based on the specific experimental setting considered here, rather than indicating an intrinsic limitation of GCN architectures.
We hypothesize that this behavior stems from random initialization when the input feature dimension is not sufficiently large. Verifying this hypothesis requires additional experiments on multi-dimensional games, which we leave for future work.

Notably, the conclusions drawn from Figure~\ref{fig:SVL} are robust across the GCN variants and hyperparameter settings tested in our experiments.
Numerical experiments conducted based on another widely used GCN \cite{kipf2016semi} yield qualitatively similar results.
Extensive trials with varying hyperparameters yield consistent performance trends, suggesting that these results are not sensitive to specific tuning choices. We also observe that increasing the feature dimension, changing the activation function, and adding fully connected layers do not mitigate the instability issue in GCNs.
Additional numerical results are provided in Appendix~\ref{app:SVL}.

\begin{rem}[Two Notions of GCN Expressivity]
\label{rem:GCN_express}
In machine learning, GCN expressivity typically refers to the model’s ability to generate node embeddings that distinguish different graph structures, often benchmarked via the Weisfeiler-Lehman (WL) graph isomorphism test \cite{huang2021short}. This notion is central to works like \cite{kipf2016semi,xu2018powerful}.
In contrast, our study focuses on function approximation capacity---how well a GCN approximates graph-structured functions, such as NE strategies of games on graphs. This notion is crucial in control and game-theoretic contexts, where parameterizing policies or strategies is the primary goal. 
\end{rem}

\medskip

In summary, our experiments demonstrate that NTM is a generally applicable and interpretable architecture for strategy parameterization. It offers sufficient expressivity and stable training. Compared with FNN, it benefits from a structured design and reduced parameter count; compared with GCNs, it avoids instability issues.

\subsection{NTM Algorithms for Games on Graphs}\label{sec:alg}

We now integrate NTM into the DFP framework (Section~\ref{sec:DFP}), which reduces a multi-agent game into a sequence of single-agent stochastic control problems. 
Within this setup, we incorporate NTM into two widely used game solvers, Direct Parameterization (DP) \cite{han2016deep} and Deep BSDE (DBSDE) \cite{han2017deep}, resulting in non-trainable, graph-structured variants. These versions retain compatibility with existing methods while enhancing interpretability and sparsity.

\medskip
\noindent\textbf{Direct parameterization (DP).}
The DP algorithm, introduced in \cite{han2016deep}, solves stochastic control problems \eqref{def:JFP}--\eqref{def:XtFP} by directly parameterizing the feedback strategy \(\alpha^i\) with a neural network (NN) \(\phi_i^{\mathrm{NN}}\), mapping states \(x\in\R^N\) to strategies \(\phi_i^{\mathrm{NN}}(t,x)\) at time \(t\in[0,T]\). Given fixed strategies \(\{\phi_j^{\mathrm{NN}}(t,x)\}_{j \neq i}\) of other players, the dynamics~\eqref{eqn:state_dynamics} are simulated to approximate the expected cost~\eqref{eqn:J} via discretization and Monte Carlo. This cost serves as the loss function for training \(\phi_i^{\mathrm{NN}}\). Repeating this for all \(i\in[N]\) completes one DFP iteration.

In the non-trainable variant, we replace the NN \(\phi_i^{\mathrm{NN}}(t,\cdot)\), for each player $i$, using a single NTM architecture \(\phi^{\mathrm{NTM}}_{i,K,M,G}(t,\cdot)\). This yields the NTM-DP algorithm, summarized in Algorithm~\ref{alg:NTM-DP}. 

\begin{algorithm}
\renewcommand{\algorithmicrequire}{\textbf{Input:}}
\renewcommand{\algorithmicensure}{\textbf{Output:}}
\caption{NTM-DP: Non-Trainable Direct Parameterization for Games on Graphs}
\label{alg:NTM-DP}
\begin{algorithmic}[1]
    \REQUIRE A family of NTM architectures \(\{\phi^{\mathrm{NTM}}_{j,K,M,G}(t,\cdot)\}_{j \in [N],\, t \in [0,T]}\)
    \STATE Initialize trainable parameters of \(\phi^{\mathrm{NTM}}_{j,K,M,G}(t,\cdot),\ \forall j \in [N],\, t \in [0,T]\)
    \REPEAT
        \FOR{each player \(i \in [N]\)}
            \STATE Simulate sample paths of \(\{X_t\}\) using current strategies \(\alpha^j_t = \phi^{\mathrm{NTM}}_{j,K,M,G}(t, X_t),\forall j \in [N]\)
            \STATE Compute player \(i\)’s expected cost \(J^i(\alpha)\) from simulated paths
            \STATE Update trainable parameters of \(\phi^{\mathrm{NTM}}_{i,K,M,G}(t,\cdot)\), for all \(t \in [0,T]\), using loss \(J^i(\alpha)\)
        \ENDFOR
    \UNTIL{convergence or maximum number of DFP rounds reached}
    \ENSURE A family of trained NTMs approximating the Nash equilibrium strategies
\end{algorithmic}
\end{algorithm}

\medskip

\noindent\textbf{Deep BSDE.}
Originally developed for solving semi-linear parabolic PDEs \cite{han2017deep}, the Deep BSDE method has since been extended to compute Markovian Nash equilibria in stochastic differential games \cite{han2020deep}. Unlike direct parameterization (DP), Deep BSDE applies to Markovian games with uncontrolled non-degenerate diffusion coefficients.

We provide a high-level description of Deep BSDE here, leaving mathematical derivations and implementation details to Appendix~\ref{app:DBSDE}. In brief, the value functions $v^i$ that characterize the NE satisfy a system of Hamilton-Jacobi-Bellman (HJB) equations. When the minimizer in each HJB equation is unique and explicit, the system reduces to a semi-linear parabolic PDE system. Deep BSDE leverages the nonlinear Feynman-Kac representation \cite{pardoux2005backward,pardoux1999forward} to reformulate this PDE system as forward-backward SDEs (FBSDEs). The core idea is to approximate the adjoint process using an NN \(\mc{G}_i^{\mathrm{NN}}:\R^N\to\R^N\), trained by minimizing the mismatch of terminal conditions.

In the non-trainable version of Deep BSDE, the 1D NTM architecture is modified to approximate a mapping \(\R^N\to\R^N\): the output layer~\eqref{eqn:NTM_general_2} is adjusted to ensure the correct dimension, while the forward propagation rule~\eqref{eqn:NTM_general_1} remains the same:
\begin{equation}
    z^{(K+1)}=(W_{\mathrm{out}}\odot L_{\mathrm{mask}})\transpose z^{(K)} + b_{\mathrm{out}}\in\R^N,
\end{equation}
where the trainable parameters have the following shapes: \(W_{\mathrm{out}}\in \R^{N\times N},\ b_{\mathrm{out}}\in\R^N\) and the mask \((L_{\mathrm{mask}})_{ij} = 1\) if and only if \(L_{ij}\neq 0\) encodes the sparsity pattern of the graph Laplacian.
This modification, denoted by $\phi^{\mathrm{NTM},i}_{K,M,G}$, preserves the validity of the NTM interpretation (cf. Section~\ref{sec:NTM}). The architecture itself is identical across players; the superscript only indicates which player’s strategy is being approximated. Combining the NTM architecture with Deep BSDE yields the NTM–DBSDE algorithm, presented as Algorithm~\ref{alg:NTM-DBSDE} in Appendix~\ref{app:NTM-DBSDE}.

\section{Numerical Experiments}\label{sec:numerics}

In this section, we evaluate the performance of NTM-based algorithms (NTM-DP and NTM-DBSDE proposed in Section~\ref{sec:alg}) for numerically solving stochastic differential games on graphs.
We begin by introducing the trajectory-based evaluation metrics, along with the collection of graph structures considered in our experiments. 
We then compare their parameter counts and errors with those of FNN-based counterparts across three different game models: a linear–quadratic (LQ) game (Section~\ref{sec:numerics-lq}),  a multi-agent portfolio game (Section~\ref{sec:numerics-portfolio}), and a nonlinear variant of the LQ game (Section~\ref{sec:numerics_nonlq}). The first two admit semi-explicit solutions; the third illustrates performance in a more complex setting without analytical baselines.  
Results show that despite having far fewer parameters, NTM-based solvers match the accuracy of their trainable counterparts across game models and graph structures.

All experiments are implemented in \texttt{PyTorch} and run on an Nvidia GeForce RTX 2080 Ti GPU. 
The code is publicly available at the GitHub repository.\footnote{\url{https://github.com/Haosheng-Zhou/Finite-Agent-Graph-SDG-Architecture}}
The key hyperparameter choices are summarized in Table~\ref{tab:summ_game}, with full implementation details provided in Appendix~\ref{app:hyper_game}.

\begin{table}[t]
\centering
\begin{threeparttable}
\caption{Summary of hyperparameters used in Section~\ref{sec:numerics}.}
\label{tab:summ_game}
\begin{tabular}{lll}
\toprule
\textbf{Category} & \textbf{Hyperparameter} & \textbf{Value} \\
\midrule
\multirow{3}{*}{Architecture}
& FNN & $3$ layers, width $32$, \texttt{ReLU} \\
& NTM & $3$ layers, channel width $M =3$, \texttt{ReLU} \\
& Value network$^{\dagger}$ & 4-layer FNN, width $32$, \texttt{Tanh} \\
\midrule
\multirow{7}{*}{Training}
& Time steps & $N_T = 50$ \\
& DFP rounds & $N_{\mathrm{round}} = 40$ \\
& Epochs per DFP round & $N_{\mathrm{epoch}} = 150$ \\
& Batch size & $N_{\mathrm{batch}} = 256$ \\
& Learning rate (LR) & $\eta = 0.001$ \\
& LR decay factor & $\gamma = 0.5$ \\
& LR decay step (epochs) & $\tau = 30$ \\
\bottomrule
\end{tabular}
\begin{tablenotes}
\footnotesize
\item[$\dagger$] Used only for (NTM-)DBSDE.
\end{tablenotes}
\end{threeparttable}
\end{table}

\smallskip

\noindent \textbf{Evaluation metrics.} We compute relative mean square error (RMSE) over \(N_{\mathrm{test}} = 5000\) paths of the equilibrium states and strategies. Let \(\{\hat{X}^{i,m}_t\}, \{\hat{\alpha}^{i,m}_t\}\) denote the \(m\)-th sample path of the baseline equilibrium (computed via semi-explicit solutions or vanilla DFP) state and strategy processes of player \(i\), and \(\{\tilde{X}^{i,m}_t\},\{\tilde{\alpha}^{i,m}_t\}\) their neural-network approximations. Baseline paths are simulated on a finer grid \(\Delta'\) with \(1000\) time intervals, while approximated paths are simulated on the coarser grid \(\Delta\) and then extended by holding values constant within each subinterval. The RMSEs are defined as
\begin{equation}
    \mathrm{RMSE}_X := \frac{\sum_{i\in[N],t\in\Delta',m\in[N_{\mathrm{test}}]}(\hat{X}^{i,m}_t - \tilde{X}^{i,m}_t)^2}{\sum_{i\in[N],t\in\Delta',m\in[N_{\mathrm{test}}]}(\hat{X}^{i,m}_t)^2},\quad \mathrm{RMSE}_\alpha := \frac{\sum_{i\in[N],t\in\Delta',m\in[N_{\mathrm{test}}]}(\hat{\alpha}^{i,m}_t - \tilde{\alpha}^{i,m}_t)^2}{\sum_{i\in[N],t\in\Delta',m\in[N_{\mathrm{test}}]}(\hat{\alpha}^{i,m}_t)^2}.
\end{equation}
Smaller values indicate better pathwise approximation of equilibrium states and strategies.

To quantify the Monte Carlo estimation error and assess robustness, under a fixed trained model, we report the sample mean and standard deviation of the RMSEs computed over \(20\) independent evaluation runs, each based on \(N_{\mathrm{test}} = 5000\) paths generated with different random seeds.

\smallskip

\noindent\textbf{Graph choices.} We test on the following four representative graphs:
\begin{align} &K_N:\ \text{the \(N\)-vertex complete graph},\\ &C_N:\ \text{the \(N\)-vertex cycle graph},\quad S_N:\ \text{the \(N\)-vertex star graph},\\ &\mathrm{RST}_N:\ \text{the \(N\)-vertex uniformly random spanning tree},
\end{align}
which serve distinct purposes: the complete graph provides a sanity check with known solutions \cite{carmona2013mean}; the cycle graph while sparse and vertex-transitive,  tests long-range dependency when $K$ is chosen small; the star graph combines sparsity with strong asymmetry; and random spanning trees represent the sparsest connected graphs with randomized asymmetric structure. Only \(\mathrm{RST}_N\) is random; once sampled, it is held fixed throughout all experiments.

We remark that numerical results on complete graphs \(K_N\) are only reported in Section~\ref{sec:numerics-lq} to verify the correctness of our numerical implementation. On complete graphs, the NTM architecture does not contain non-trainable entries and effectively reduces to a fully parameterized model, yielding performance comparable to that of FNN with minimal parameter saving.
Therefore, these cases do not provide additional insight into the advantages of NTM and are omitted for brevity.

For completeness, we also examine the performance of GCN-based architectures for solving games on graphs; this analysis is presented in Appendix~\ref{app:GCN_fail}.

\subsection{Linear-Quadratic Stochastic Differential Games}\label{sec:numerics-lq}

We consider an $N$-agent linear-quadratic game on graphs~\eqref{eqn:LQ_dynamics}--\eqref{eqn:LQ_terminal_cost}, where direct agent interactions are restricted to \(1\)-neighborhoods, i.e., \(\ell = 1\). Agents are heterogeneous and exhibit mean-reversion behavior, seeking to align their private states with the averages of their direct neighbors. 
See \cite[Section~2.2]{hu2024finite} for the derivations of the benchmark solution. 

\smallskip

\noindent\textbf{Numerical results.} We adopt the following model parameters:
\begin{equation}
    N=10,\quad T = 1.0,\quad a=0.1,\quad \sigma=0.5,\quad q = 0,\quad \EPS = 1.0,\quad c = 1.0,\quad \delta_0 = 0.5,
\end{equation}
with initial state \(X_0^i \overset{\mathrm{i.i.d.}}{\sim} U(-\delta_0,\delta_0),\ \forall i\in[N]\) independent of the Brownian motions. Empirically, random initialization of $X_0$ encourages exploration and aids learning near time 0.

\begin{table}[ht!]
\centering
\begin{threeparttable}
\caption{\(\mathrm{RMSE}_X\) and \(\mathrm{RMSE}_\alpha\) for the LQ game on graphs in Section~\ref{sec:numerics-lq}}
\label{tab:LQ_RMSE}
\centering
  \begin{tabular}{c|cccccc}
    \toprule
    \multirow{2}{*}{Problem} &
      \multicolumn{2}{c}{\(G = C_{10}\)} &
      \multicolumn{2}{c}{\(G = S_{10}\)} &
      \multicolumn{2}{c}{\(G = \mathrm{RST}_{10}\)}\\
    & \(\mathrm{RMSE}_X\) & \(\mathrm{RMSE}_\alpha\) & \(\mathrm{RMSE}_X\) & \(\mathrm{RMSE}_\alpha\) & \(\mathrm{RMSE}_X\) & \(\mathrm{RMSE}_\alpha\)\\
      \midrule
   DP 
& \makecell{\(1.89e{-2}\)\\ \((1.2e{-4})\)} 
& \makecell{\(2.69e{-2}\)\\ \((2.0e{-4})\)} 
& \makecell{\(2.10e{-2}\)\\ \((1.4e{-4})\)} 
& \makecell{\(2.47e{-2}\)\\ \((1.5e{-4})\)} 
& \makecell{\(1.89e{-2}\)\\ \((1.3e{-4})\)} 
& \makecell{\(2.68e{-2}\)\\ \((1.7e{-4})\)} \\

NTM-DP 
& \makecell{\(1.88e{-2}\)\\ \((1.2e{-4})\)} 
& \makecell{\(2.57e{-2}\)\\ \((1.9e{-4})\)} 
& \makecell{\(2.09e{-2}\)\\ \((1.4e{-4})\)} 
& \makecell{\(2.36e{-2}\)\\ \((1.5e{-4})\)} 
& \makecell{\(1.88e{-2}\)\\ \((1.3e{-4})\)} 
& \makecell{\(2.56e{-2}\)\\ \((1.7e{-4})\)} \\

DBSDE 
& \makecell{\(1.88e{-2}\)\\ \((1.2e{-4})\)} 
& \makecell{\(3.44e{-2}\)\\ \((2.0e{-4})\)} 
& \makecell{\(2.08e{-2}\)\\ \((1.4e{-4})\)} 
& \makecell{\(3.12e{-2}\)\\ \((1.5e{-4})\)} 
& \makecell{\(1.88e{-2}\)\\ \((1.3e{-4})\)} 
& \makecell{\(3.46e{-2}\)\\ \((1.7e{-4})\)} \\

NTM-DBSDE 
& \makecell{\(1.88e{-2}\)\\ \((1.2e{-4})\)} 
& \makecell{\(2.91e{-2}\)\\ \((2.0e{-4})\)} 
& \makecell{\(2.08e{-2}\)\\ \((1.4e{-4})\)} 
& \makecell{\(2.57e{-2}\)\\ \((1.7e{-4})\)} 
& \makecell{\(1.88e{-2}\)\\ \((1.3e{-4})\)} 
& \makecell{\(2.83e{-2}\)\\ \((1.8e{-4})\)} \\
    \bottomrule
  \end{tabular}
\begin{tablenotes}
\footnotesize
\item Values outside (resp. inside) parentheses denote the sample mean (resp. sample standard deviation) computed over \(20\) independent evaluation runs under a fixed trained model.
\end{tablenotes}
\end{threeparttable}
\end{table}

Table~\ref{tab:LQ_RMSE} compares the RMSEs of equilibrium state and strategy trajectories across algorithms and graph structures. 
Consistently small standard deviations indicate low variability in the Monte Carlo estimates, thereby supporting the reliability of the RMSE evaluation.
The results show that, regardless of the graph, DP (DBSDE) achieves essentially the same RMSE as NTM-DP (NTM-DBSDE), indicating that the NTM architecture performs on par with the standard FNN.

\begin{figure}[htpb]
    \centering
    \includegraphics[width=\linewidth]{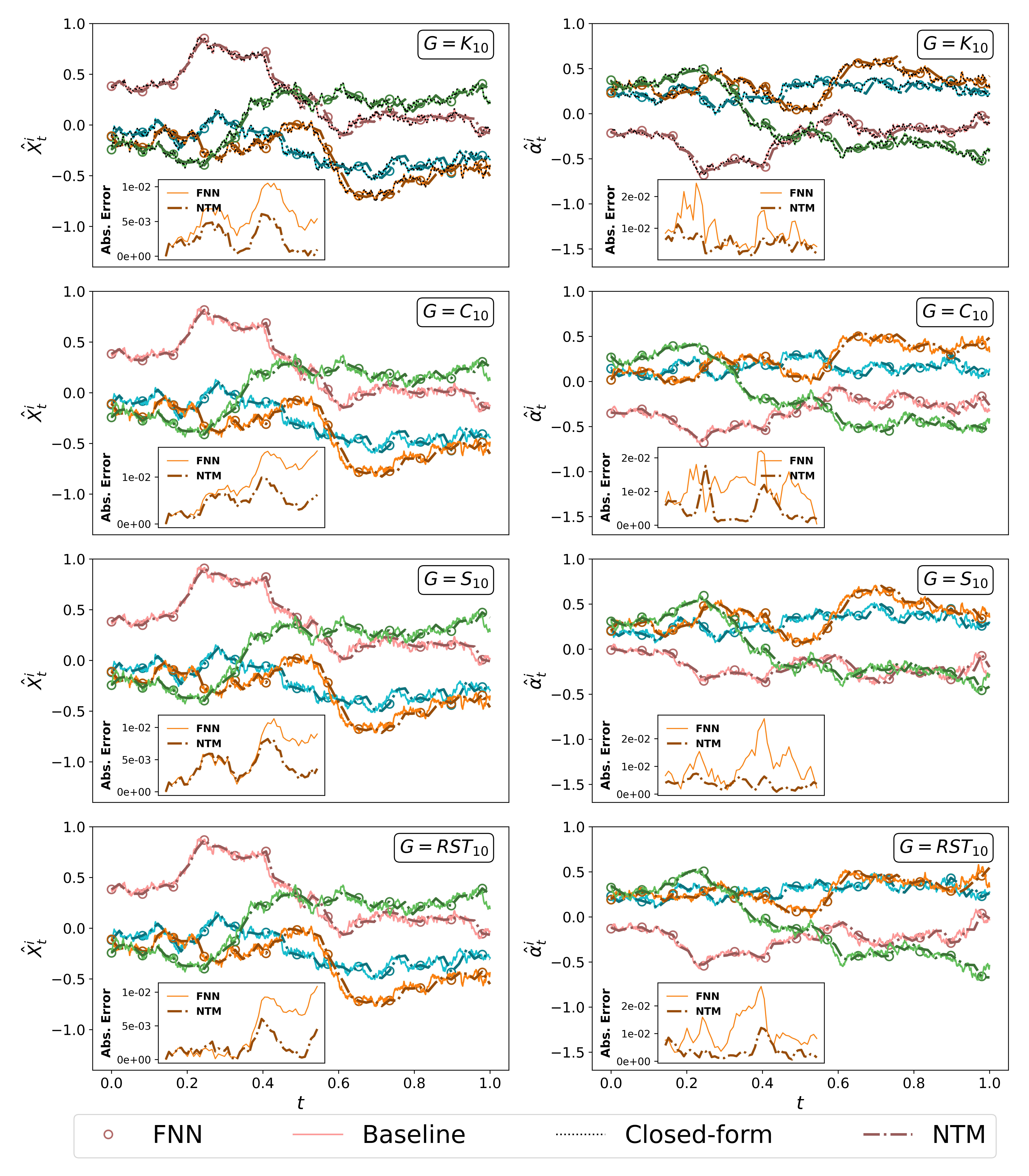}
    \caption{Comparisons of equilibrium state (left panels) and strategy (right panels) trajectories for \(N=10\) players in the LQ game on various graphs (cf. Section~\ref{sec:numerics-lq}).
    In the main panels, the colored solid lines represent the baseline solution, the dark-colored circles are obtained by DP, and the dark-colored dotted lines are obtained by NTM-DP. 
    For complete graph, the black dots represent the closed-form solution given in \cite{carmona2013mean}.
    Different players are distinguished by different colors.
    For clarity, only the trajectories of four players (indexed by 1, 2, 3, 4) are shown.
    In the inset panels, the solid (resp. dotted) lines show the absolute error of the trajectories generated by DP (resp. NTM-DP).
    For clarity, only the error curves for one representative player are displayed, and the strategy error is smoothed using a moving average with a window size of \(3\).
    }
    \label{fig:LQ_DP}
\end{figure}

Figure~\ref{fig:LQ_DP} further illustrates this comparison by plotting equilibrium state and strategy trajectories for four representative players.
Subplots in the first row serve as sanity checks, confirming the alignment among the FNN, NTM, and baseline trajectories, as well as the closed-form solution \cite{carmona2013mean}, on the complete graph.
Notably, in the remaining subplots of Figure~\ref{fig:LQ_DP}, both FNN and NTM trajectories lie close to their baseline counterparts, indicating similar approximation accuracy.
The absolute errors are further illustrated by the inset panels, which plot the error curves for one representative player; for visualization purposes, the strategy error curves are smoothed using a moving average with a window size of \(3\).
These plots show that FNN and NTM achieve comparable error levels in solving the game, while NTM uses substantially fewer parameters on sparse graphs.
Specifically, for $K = 3$ and $M = 3$, FNN requires 385 parameters per time step $t \in \Delta$, whereas NTM uses fewer than 190, amounting to only \textbf{49.35\%} of the parameters while retaining interpretability.

Additional results, including additional evaluation metrics and (NTM-)DBSDE trajectory plots, are provided in Appendix~\ref{app:M3_LQ}. These results support the same qualitative conclusions. Interestingly, NTM also performs well numerically with channel width $M=1$, even though Theorem~\ref{thm:express} requires $M\geq 2$. Appendix~\ref{app:M1_LQ} presents these results, where NTM uses fewer than 70 parameters per time step, only \textbf{18.18\%} of the parameters of an FNN.

\subsection{Multi-Agent Portfolio Games}\label{sec:numerics-portfolio}

We consider an $N$-agent game in which agents trade in a Black–Scholes market subject to common noise and share a common investment horizon. Agents are heterogeneous, endowed with constant absolute risk aversion (CARA) preferences with different parameters, and aim to maximize their expected utility based on the performance relative to that of their \(1\)-neighbors. 
In what follows, we first present the model setup and derive benchmark solutions, followed by numerical results.

\smallskip

\noindent\textbf{Model Setup.} Each agent $i\in[N]$ trades between a riskless bond earning interest rate $r$ and a private risky asset  $\{S_t^i\}$ evolving as:
\begin{equation}
    \ud S^i_t= S_t^i(\mu_i\ud t + \nu_i\ud W^i_t + \sigma_i\ud W_t^0),
\end{equation}
where $\{W^i_t\}$ denotes an idiosyncratic noise specific to asset $\{S_t^i\}$, and $\{W_t^0\}$ represents the common noise shared across all assets.
Here, \(\mu_i\in\R\) is the mean return of asset \(S^i\), \(\nu_i>0\) the idiosyncratic volatility, and \(\sigma_i>0\) the volatility from the common noise. Without loss of generality, we set $r = 0$.

Let the Markovian strategy \(\alpha^i_t\) be the dollar amount agent \(i\) invests in \(S^i\) at time \(t\).
Under the self-financing condition, the wealth process $\{X^i_t\}$ satisfies
\begin{equation}
    \label{eqn:port_state}
    \ud X^i_t = \mu_i\alpha^i_t\ud t + \nu_i\alpha^i_t\ud W^i_t + \sigma_i\alpha^i_t\ud W_t^0.
\end{equation}
Each agent maximizes the expected utility of its terminal wealth relative to direct neighbors:
\begin{equation}  
    \sup_\alpha \EE[U_i(X_T^i, \BAR{X}_T^{(-i)})]
    ,\quad \text{where}  \quad   \BAR{X}_T^{(-i)} := \frac{1}{\sqrt{d_{v_i}}}\sum_{j:v_j\sim v_i}\frac{1}{\sqrt{d_{v_j}}}X^j_T = X_T^i - e_i\transpose L X_T,
\end{equation}
under a CARA utility function of the form \(U_i(x,y) := - e^{-\frac{1}{\delta_i}(x-\theta_i y)}\), where \(\delta_i>0\) is the reciprocal of the risk aversion level of agent \(i\), and \(\theta_i>0\) characterizes the individual level of competition or complementary effects. This setup aligns with the general formulation (cf. \eqref{eqn:state_dynamics} and \eqref{eqn:J}):
\begin{equation}
    b^i = \mu_i \alpha, \quad \sigma^i = \nu_i\alpha , \quad \sigma^i_0 = \sigma_i\alpha, \quad f^i \equiv 0, \quad  g^i = \exp\left\{-\frac{1}{\delta_i}[(1-\theta_i)x^i + \theta_i e_i\transpose Lx]\right\}.
\end{equation}

\noindent\textbf{HJB system and benchmark solution.} 
Let \(v^i:[0,T]\times\R^N\to\R\) be the value function of agent \(i\).
By the dynamic programming principle (DPP), it satisfies the HJB system
\begin{equation}
    \label{eqn:port_HJB}
    \partial_t v^i + \inf_{\alpha^i}\SET{\sum_{k=1}^N\mu_k\alpha^k\partial_{x^k} v^i + \frac{1}{2}\sum_{k=1}^N(\nu_k\alpha^k)^2\partial_{x^kx^k} v^i + \frac{1}{2}\sum_{k=1}^N\sum_{j=1}^N\sigma_j\sigma_k\alpha^j\alpha^k\partial_{x^jx^k}v^i} = 0,
\end{equation}
with terminal condition $v^i(T,x) = g^i(x)$. Solving the infimum yields equilibrium conditions for \(\hat{\alpha}^i\):
\begin{equation}
    \label{eqn:port_equilibrium}
    \hat{\alpha}^i(\nu_i^2 + \sigma_i^2)\partial_{x^ix^i}v^i = -\mu_i\partial_{x^i} v^i - \sum_{j\neq i}\hat{\alpha}^j\sigma_j\sigma_i\partial_{x^jx^i}v^i,\ \forall i\in[N].
\end{equation}
Using an exponential ansatz \(v^i(t,x) = \rho^i_te^{-\frac{1}{\delta_i}(x^i - \theta_i\BAR{x}^{(-i)})}\), where \(\BAR{x}^{(-i)}:=x^i - e_i\transpose Lx\) and \(\rho^i:[0,T]\to\R\) is deterministic and measurable in $t$, we obtain a system for $\rho_t^i$:
\begin{equation}
    \dot{\rho}^i_t + \rho^i_t\left[-\frac{1}{\delta_i}(\mu_i\hat{\alpha}^i - \theta_i\widehat{\mu\alpha}) + \frac{1}{2\delta_i^2}((\nu_i\hat{\alpha}^i)^2 - \theta_i^2\widehat{(\nu\alpha)^2}) + \frac{1}{2\delta_i^2}(\sigma_i\hat{\alpha}^i - \theta_i\widehat{\sigma\alpha})^2\right] = 0,
\end{equation}
with terminal condition \(\rho^i_T = 1\), where \(\widehat{\mu\alpha}:= -\sum_{j\neq i}\mu_j (e_j\transpose Le_i)\hat{\alpha}^j(t,x)\), \( \widehat{\sigma\alpha}:= -\sum_{j\neq i}\sigma_j (e_j\transpose Le_i)\hat{\alpha}^j(t,x)\)
, and \(\widehat{(\nu\alpha)^2}:= -\sum_{j\neq i}(\nu_j)^2 (e_j\transpose Le_i)[\hat{\alpha}^j(t,x)]^2\).
The solution to the HJB system~\eqref{eqn:port_HJB} and the NE exist provided the linear system is solvable:
\begin{equation}
    (\nu_i^2 + \sigma_i^2)\hat{\alpha}^i(t,x) = \delta_i\mu_i + \sigma_i\theta_i\widehat{\sigma\alpha},\ \forall i\in[N].
\end{equation}
The resulting constant (in $(t,x)$) NE is referred to as the baseline equilibrium strategy.

\medskip
\noindent\textbf{Numerical results.}
Numerical results are demonstrated based on the following model parameters:
\begin{align}
    &N=10,\quad T = 1.0,\quad \mu_i\overset{\mathrm{i.i.d.}}{\sim} U(0.05,0.1),\quad \nu_i\overset{\mathrm{i.i.d.}}{\sim} U(0.2,0.25),\\
    &\delta_0 = 0.3,\quad  \sigma_i\overset{\mathrm{i.i.d.}}{\sim} U(0.15,0.2), \quad \delta_i\overset{\mathrm{i.i.d.}}{\sim} U(0.8,1.2),\quad \theta_i\overset{\mathrm{i.i.d.}}{\sim} U(0.4,0.6),
\end{align}
with i.i.d initial state \(X_0^i \overset{\mathrm{i.i.d.}}{\sim} U(-\delta_0,\delta_0),\ \forall i\in[N]\) independent of the Brownian motions.
The parameters are sampled at initialization and fixed throughout training and evaluation. Since the game features controlled diffusion coefficients, the Deep BSDE algorithm is not directly applicable; we present results only for (NTM-)DP.

\begin{table}[ht!]
\centering
\begin{threeparttable}
\caption{\(\mathrm{RMSE}_X\) and \(\mathrm{RMSE}_\alpha\) for portfolio games on graphs in Section~\ref{sec:numerics-portfolio}}
\label{tab:port_RMSE}
\centering
  \begin{tabular}{c|cccccc}
    \toprule
    \multirow{2}{*}{Problem} &
      \multicolumn{2}{c}{\(G = C_{10}\)} &
      \multicolumn{2}{c}{\(G = S_{10}\)} &
      \multicolumn{2}{c}{\(G = \mathrm{RST}_{10}\)}\\
    & \(\mathrm{RMSE}_X\) & \(\mathrm{RMSE}_\alpha\) & \(\mathrm{RMSE}_X\) & \(\mathrm{RMSE}_\alpha\) & \(\mathrm{RMSE}_X\) & \(\mathrm{RMSE}_\alpha\)\\
      \midrule
    DP 
& \makecell{\(1.79e{-2}\)\\ \((1.2e{-4})\)} 
& \makecell{\(9.63e{-3}\)\\ \((4.5e{-5})\)} 
& \makecell{\(1.75e{-2}\)\\ \((1.1e{-4})\)} 
& \makecell{\(9.57e{-3}\)\\ \((4.2e{-5})\)} 
& \makecell{\(1.77e{-2}\)\\ \((1.2e{-4})\)} 
& \makecell{\(9.47e{-3}\)\\ \((4.3e{-5})\)} \\

NTM-DP 
& \makecell{\(1.67e{-2}\)\\ \((1.2e{-4})\)} 
& \makecell{\(7.57e{-3}\)\\ \((4.2e{-5})\)} 
& \makecell{\(1.70e{-2}\)\\ \((1.2e{-4})\)} 
& \makecell{\(8.56e{-3}\)\\ \((4.5e{-5})\)} 
& \makecell{\(1.67e{-2}\)\\ \((1.2e{-4})\)} 
& \makecell{\(7.68e{-3}\)\\ \((4.1e{-5})\)} \\
    \bottomrule
  \end{tabular}
\begin{tablenotes}
\footnotesize
\item Values outside (resp. inside) parentheses denote the sample mean (resp. sample standard deviation) computed over \(20\) independent evaluation runs under a fixed trained model.
\end{tablenotes}
\end{threeparttable}
\end{table}

Table~\ref{tab:port_RMSE} compares RMSEs of equilibrium state and strategy trajectories across algorithms and graph structures. Figure~\ref{fig:Port_DP} illustrates the equilibrium trajectories for four representative players. 
Based on the \(\mathrm{RMSE}_\alpha\) values and the right panels of Figure~\ref{fig:Port_DP}, it is clear that NTM-parameterized strategies exhibit less fluctuation compared to FNN parameterized ones.
This provides empirical evidence of a statistical benefit of NTM's lower model complexity, complementary to its potential computational advantages, namely, the reduced risk of overfitting, which is particularly obvious when NE strategies admit a simple (e.g., constant) structure.

Additional evaluation metrics are reported in Appendix~\ref{app:M3_Port}, leading to the same qualitative conclusions. Results for the NTM architecture with $M=1$ are provided in Appendix~\ref{app:M1_Port}.
Notably, the NTM with \(M=1\) achieves a much more significant reduction in \(\mathrm{RMSE}_\alpha\), demonstrating a larger benefit by having an even lower model complexity.

\begin{figure}[htbp]
    \centering
    \includegraphics[width=\linewidth]{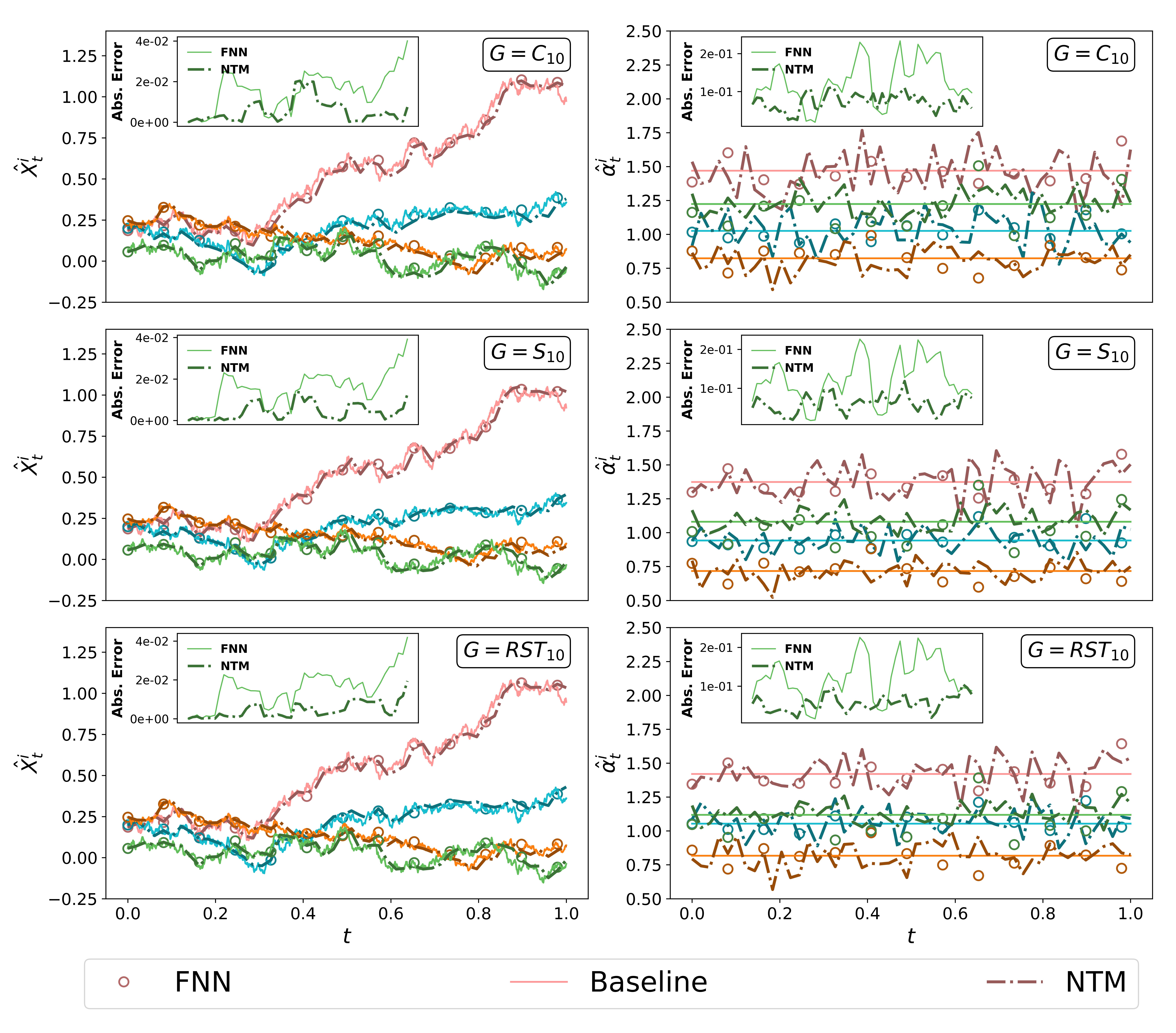}
    \caption{Comparisons of equilibrium state (left panels) and strategy (right panels) trajectories for \(N=10\) players in the Portfolio game on various graphs (cf. Section~\ref{sec:numerics-portfolio}).
    In the main panels, the colored solid lines represent the baseline solution, the dark-colored circles are obtained by DP, and the dark-colored dotted lines are obtained by NTM-DP. 
    Different players are distinguished by different colors.
    For clarity, only the trajectories of four players (indexed by 1, 4, 6, 9) are shown.
    In the inset panels, the solid (resp. dotted) lines show the absolute error of the trajectories generated by DP (resp. NTM-DP).
    For clarity, only the error curves for one representative player are displayed, and the strategy error is smoothed using a moving average with a window size of \(3\).
    }
    \label{fig:Port_DP}
\end{figure}

\subsection{A Variant of the LQ Game in Section~\ref{sec:numerics-lq}}\label{sec:numerics_nonlq}

We consider a non-LQ variant of the $N$-agent game on graphs presented in Section~\ref{sec:numerics-lq}. As before, direct interactions are restricted to 1-neighborhoods. The agents are heterogeneous and exhibit mean reversion behavior, aiming to keep their private states close to the average of their neighbors.

The cost functionals of this non-LQ variant follow \cite[equations~(2.4)--(2.5)]{hu2024finite}, while the state dynamics of each player \(i\in[N]\) take the form \eqref{eqn:state_dynamics} with drift and diffusion coefficients:
\begin{equation}
    b^i =  a\Big[\frac{1}{\sqrt{d_{v_i}}}\sum_{j:v_j\sim v_i}\frac{1}{\sqrt{d_{v_j}}}x^j - x^i\Big]^3 + \alpha, \quad \sigma^i \equiv \sigma, \quad \sigma_0^i \equiv 0,
\end{equation}
where \(a\geq 0\) and \(\sigma>0\).
The key difference from the LQ model is that, the mean reversion term is cubic rather than linear.
Thus, when \(X^i_t\) is far from the neighborhood average, the mean reversion effect is stronger; when close, it is weaker.

Following the standard dynamic programming approach, one can still derive the HJB system satisfied by the value function. Unlike the LQ case, this system cannot be reduced to coupled ODEs. Nevertheless, this variant has been investigated in \cite{han2020deep}, where a numerical baseline was constructed using the FNN architecture. We therefore take the baseline equilibrium strategy to be the neural network–parameterized strategy obtained by DP/DBSDE under FNN.

\medskip
\noindent\textbf{Numerical results.} We adopt the following parameters:
\begin{equation}
    N=10,\quad T = 1.0,\quad a=0.1,\quad \sigma=0.5,\quad q = 0,\quad \EPS = 1.0,\quad c = 1.0,\quad \delta_0 = 0.5,
\end{equation}
with initial states \(X_0^i\overset{\mathrm{i.i.d.}}{\sim} U(-\delta_0,\delta_0),\ \forall i\in[N]\) independent of the Brownian motions.

\begin{table}[ht!]
\centering
\begin{threeparttable}
\caption{\(\mathrm{RMSE}_X\) and \(\mathrm{RMSE}_\alpha\) for non-LQ games on graphs in Section~\ref{sec:numerics_nonlq}}
\label{tab:nonLQ_RMSE}
\centering
  \begin{tabular}{c|cccccc}
    \toprule
    \multirow{2}{*}{Problem} &
      \multicolumn{2}{c}{\(G = C_{10}\)} &
      \multicolumn{2}{c}{\(G = S_{10}\)} &
      \multicolumn{2}{c}{\(G = \mathrm{RST}_{10}\)}\\
    & \(\mathrm{RMSE}_X\) & \(\mathrm{RMSE}_\alpha\) & \(\mathrm{RMSE}_X\) & \(\mathrm{RMSE}_\alpha\) & \(\mathrm{RMSE}_X\) & \(\mathrm{RMSE}_\alpha\)\\
      \midrule
    NTM-DP 
& \makecell{\(2.00e{-4}\)\\ \((9.6e{-7})\)} 
& \makecell{\(2.64e{-3}\)\\ \((9.0e{-6})\)} 
& \makecell{\(1.58e{-4}\)\\ \((8.6e{-7})\)} 
& \makecell{\(2.15e{-3}\)\\ \((1.3e{-5})\)} 
& \makecell{\(1.72e{-4}\)\\ \((1.1e{-6})\)} 
& \makecell{\(2.49e{-3}\)\\ \((1.2e{-5})\)} \\

NTM-DBSDE 
& \makecell{\(1.40e{-4}\)\\ \((7.9e{-7})\)} 
& \makecell{\(1.23e{-2}\)\\ \((3.4e{-5})\)} 
& \makecell{\(8.50e{-5}\)\\ \((1.0e{-6})\)} 
& \makecell{\(1.00e{-2}\)\\ \((3.0e{-5})\)} 
& \makecell{\(1.43e{-4}\)\\ \((1.1e{-6})\)} 
& \makecell{\(1.18e{-2}\)\\ \((3.9e{-5})\)} \\
    \bottomrule
  \end{tabular}
  \begin{tablenotes}
\footnotesize
\item Values outside (resp. inside) parentheses denote the sample mean (resp. sample standard deviation) computed over \(20\) independent evaluation runs under a fixed trained model.
\end{tablenotes}
\end{threeparttable}
\end{table}

Table~\ref{tab:nonLQ_RMSE} compares RMSE values for equilibrium state and strategy trajectories across algorithms and graphs. The results show that, regardless of the graph structure, equilibrium states and strategies computed via NTM-DP (NTM-DBSDE) are very close to those computed by DP (DBSDE), confirming that NTM matches the performance of FNN.

\begin{figure}[!htbp]
    \centering
    \includegraphics[width=\linewidth]{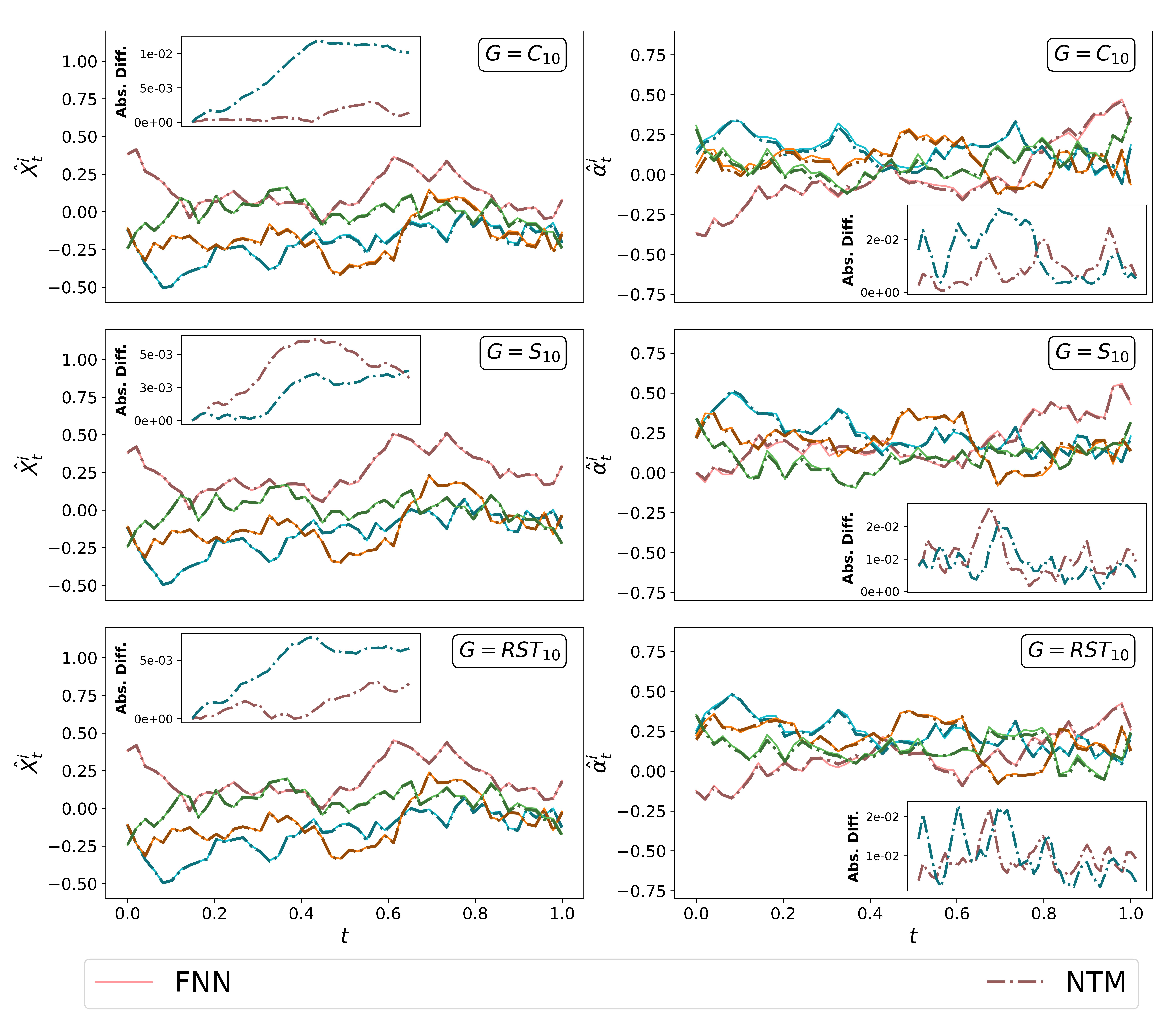}
    \caption{Comparisons of equilibrium state (left panels) and strategy (right panels) trajectories for \(N=10\) players in the variant of the LQ game (cf. Section~\ref{sec:numerics_nonlq}) on various graphs.
    In the main panels, the colored solid lines represent the baseline solution obtained by DP, and the dark-colored dotted lines are obtained by NTM-DP. 
    Different players are distinguished by different colors.
    For clarity, only the trajectories of four players (indexed by 1, 2, 3, 4) are shown.
    In the inset panels, the dotted lines show the absolute difference between trajectories generated by DP and NTM-DP.
    For clarity, only the error curves for two representative players are displayed, and the strategy error is smoothed using a moving average with a window size of \(3\).
    }
    \label{fig:nonLQ_DP}
\end{figure}

Figure~\ref{fig:nonLQ_DP} further illustrates this comparison for four representative players. Additional metrics and (NTM-)DBSDE trajectory plots are provided in Appendix~\ref{app:M3_non_LQ}, showing qualitatively similar conclusions. Results for the case $M=1$ are given in Appendix~\ref{app:M1_non_LQ}.

\section{Conclusion and Future Studies}\label{sec:conclusion_future}

In this paper, we proposed a novel graph-based deep learning architecture, NTM, constructed by fixing selected entries of feedforward neural network parameters to be non-trainable.
The design is naturally motivated by information propagation on graphs, and can be interpreted in terms of decision-making in games on graphs.
Our theoretical analysis establishes a universal approximation result for equilibrium feedback functions. Numerical experiments further show that, for strategy parameterization, NTM provides improved training stability over state-of-the-art GCNs. As an interpretable and parameter-efficient architecture, NTM can be readily combined with modern deep learning–based game-solving algorithms, and is broadly applicable to general models of games on graphs. Compared with a standard FNN, NTM requires fewer parameters on large sparse graphs, thereby reducing model complexity of the architecture and mitigating overfitting.

Several directions remain open for future study. A natural extension is to further explore NTM numerically, particularly in multi-hop ($\ell \geq 2$) and multi-dimensional test cases, and to investigate non-trainable variants of other game-solving algorithms beyond DP and Deep BSDE.
We emphasize that the numerical results in this paper are primarily validated on low-dimensional test cases (e.g., \(N=10\)). Extending the proposed approach to larger-scale settings and systematically investigating its scalability remain important directions for future work.
On the theoretical side, our explanation for the observed instability of GCNs is hypothetical, and fundamental questions remain: despite recent progress in understanding GCNs' transductive learning ability \cite{detering2025learning}, their function approximation capabilities and training instabilities have yet to be systematically analyzed. For NTM, the precise characterization of its expressivity class is still an open problem. Addressing these questions would yield valuable insights into the design of robust and efficient graph-based neural architectures.

Beyond the scope of this work, there is also room for broader exploration. 
To fully realize the practical advantages of the reduced number of parameters in NTM, such as improved computational efficiency and lower memory usage, it is essential to develop efficient training algorithms tailored to sparse neural network architectures (see, e.g., \cite{nikdan2023sparseprop}).
While NTM is motivated by deep fictitious play (DFP) for multi-agent interactions, other techniques, such as policy update \cite{han2020convergence}, may call for alternative graph-based architectures. Developing such models and comparing them with existing approaches offers a promising avenue for future research.

\bibliographystyle{plain}
%\bibliography{Reference.bib}

\section*{Acknowledgement}
R.H. was partially supported by the Simons Foundation (MP-TSM-00002783), the ONR grant under \#N00014-24-1-2432, and the NSF grant DMS-2420988.

\appendix
\mathtoolsset{showonlyrefs}

\renewcommand{\thesection}{\Alph{section}}

\section{Proofs of Theorems and Propositions}\label{app:proof}

\subsection{Proof of Theorem~\ref{thm:express}}\label{app:proof_exp}

\begin{proof}[Proof of Theorem~\ref{thm:express}]
We construct the approximation in two steps: first by analyzing the convergence of the fixed-point iteration under Assumption~\ref{assu:express}(i), and then by showing that the NTM architecture can approximate this iteration up to a desired accuracy under Assumption~\ref{assu:express}(ii).

\smallskip
\noindent
\textbf{Step 1: Fixed-point iteration and contraction.}
Define the iteration:
\[
    \phi^{(k+1)}(x) := A(x, \phi^{(k)}(x)), \quad \text{with } \phi^{(0)} \equiv 0.
\]
By Assumption~\ref{assu:express}(i), we have $
    \|\phi^{(k)}(x) - \widehat{\phi}(x)\|_\infty 
    \leq \rho \|\phi^{(k-1)}(x) - \widehat{\phi}(x)\|_\infty.
$
Therefore, by induction,
\begin{equation}
    \label{eqn:step1}
    \|\phi^{(K-1)}(x) - \widehat{\phi}(x)\|_\infty \leq \rho^{K-1} \|\widehat{\phi}(x)\|_\infty.
\end{equation}

\smallskip
\noindent\textbf{Step 2: NTM Approximation Error.}
Let \(\phi^{\mathrm{NTM}}_{K,M,G} := (\phi^{\mathrm{NTM}}_{1,K,M,G}, \ldots, \phi^{\mathrm{NTM}}_{N,K,M,G})\) denote the NTM architecture mapping \(\R^{Nd_\mathrm{in}} \to \R^{Nd_{\mathrm{out}}}\).  
We define the projections \(\pi^{(x)}\) and \(\pi^{(a)}\) to extract the state and action components of \(y \in \R^{d}\):
\[
\pi^{(x)}(y) := (y_1,\dots,y_{d_{\mathrm{in}}}) \in \R^{d_{\mathrm{in}}}, \quad \pi^{(a)}(y) := (y_{d_{\mathrm{in}}+1},\dots,y_d) \in \R^{d_{\mathrm{out}}}.
\]

We construct weights for the NTM (cf. \eqref{eqn:NTM_general_1}--\eqref{eqn:NTM_general_2}) layers \(k \in[K-1]\) and  \(r\in[M]\) such that for each agent \(p\in[N]\) and its neighbor \(v_q \in \mathcal{N}_G(v_p)\), the message weights are:
\begin{align}
W_{\mathrm{in},p} &= [I_{d_{\mathrm{in}}}\;  0_{d_\mathrm{in}\times d_\mathrm{out}}]\transpose, \quad b_{\mathrm{in},p} = 0_{d}, \quad
W_{pr,p}^{(k)} = \begin{bmatrix} c_r I_{d_{\mathrm{in}}} & 0 \\\gamma_{pr,p}^{(x)} & 0 \end{bmatrix},\\
W_{pr,q}^{(k)} &= \begin{bmatrix} 0 & 0 \\ \gamma_{pr,q}^{(x)} & \gamma_{pr,q}^{(a)} \end{bmatrix},\quad W_{\mathrm{out},i} = \big[0_{d_{\mathrm{out}}\times d_{\mathrm{in}}}\; I_{d_{\mathrm{out}}}\big],\quad W_{\mathrm{out},q} = 0_{d_{\mathrm{out}}\times d},\\
h_{pr}^{(k)} &= \mathrm{Concat}(0_{d_{\mathrm{in}}}, \eta_{pr}), \quad
g_{pr}^{(k)} = \mathrm{Concat}(c_r \mathbf{1}_{d_{\mathrm{in}}}, \beta_{pr}),\quad b^{(k)}_p = 0_{d},\quad b_{\mathrm{out}} = 0_{d_{\mathrm{out}}},
\end{align}
where \(p\neq q\) and \(i\neq q\). The constant \(c_r = 1\) for \(r=1\), \(c_r = -1\) for \(r=2\), and is zero otherwise.

Under ReLU activation \(\sigma\), and with input \(z^{(0)} = x\), we have:
\begin{align}\label{eqn:exp_state}
  \pi^{(x)}(z_p^{(k+1)}) &= \sum_{r=1}^M c_r\,\sigma\Big(c_r\,\pi^{(x)}(z_p^{(k)})\Big) = \pi^{(x)}(z_p^{(k)}) = x \quad \text{(constant across layers)},\\
\pi^{(a)}(z_p^{(k+1)}) &= \sum_{r=1}^M \beta_{pr}\odot \sigma\Big(\gamma^{(x)}_{pr,p}\;\pi^{(x)}(z_p^{(k)}) + \sum_{q:v_q\in\mathcal{N}_G(v_p)} \big[\gamma^{(x)}_{pr,q}\;\pi^{(x)}(z_q^{(k)}) + \gamma^{(a)}_{pr,q}\;\pi^{(a)}(z_q^{(k)})\big] + \eta_{pr}\Big). 
\end{align}
By Assumption~\ref{assu:express}(ii), this exactly matches the approximate best response:
    \begin{equation}
        \label{eqn:exp_action}
        \pi^{(a)}(z_p^{(k+1)}) = \widetilde A_p\Big(\pi^{(x)}(z_p^{(k)}),\pi^{(x)}(z_{\mathcal{N}(p)}^{(k)}),\pi^{(a)}(z_{\mathcal{N}(p)}^{(k)})\Big),\ \forall k\in[K-1],\ p\in[N].
    \end{equation}
Combining it with equation~\eqref{eqn:exp_state} yields
    \begin{equation}
        \label{eqn:exp_action_1}
        \pi^{(a)}(z_p^{(k+1)}) = \widetilde A_p\Big(x,x^{\mathcal{N}(p)},\pi^{(a)}(z_{\mathcal{N}(p)}^{(k)})\Big), \ \forall k\in[K-1],\ p\in[N].
    \end{equation}

Let \(\widetilde{\phi}^{(k)}\) denote the approximation iteration:
\[
\widetilde{\phi}^{(k+1)}(x) := \widetilde{A}(x, \widetilde{\phi}^{(k)}(x)), \quad \widetilde{\phi}^{(0)} \equiv 0.
\]
Then, by construction,
\[
\pi^{(a)}(z_p^{(k+1)})= \widetilde\phi_p^{(k)}(x),\ \forall k\in[K-1],\ p\in[N], \quad \text{ and }\quad \phi^{\mathrm{NTM}}_{K,M,G}(x) = \widetilde{\phi}^{(K-1)}(x),
\]
 where \(\widetilde\phi^{(k)} = \mathrm{Concat}(\widetilde\phi_1^{(k)},\ldots,\widetilde\phi_N^{(k)})\). 
    Using this relationship, we estimate the error:
  \begin{align}
        &\|\phi^{\mathrm{NTM}}_{K,M,G}(x) - {\phi}^{(K-1)}(x)\|_\infty
        = \|\widetilde\phi^{(K-1)}(x) - {\phi}^{(K-1)}(x)\|_\infty\\
        &= \|\widetilde A(x,\widetilde\phi^{(K-2)}(x)) -  A(x, \phi^{(K-2)}(x))\|_\infty\\
        &\leq \|\widetilde A(x,\widetilde\phi^{(K-2)}(x)) -  A(x, \widetilde\phi^{(K-2)}(x))\|_\infty + \| A(x,\widetilde\phi^{(K-2)}(x)) -  A(x, \phi^{(K-2)}(x))\|_\infty\\
        &\leq \delta + \rho\, \| \widetilde\phi^{(K-2)}(x)- \phi^{(K-2)}(x)\|_\infty.
    \end{align}
Iterating this inequality gives:
\begin{equation}
\label{eqn:step2}
\|\phi^{\mathrm{NTM}}_{K,M,G}(x) - \phi^{(K-1)}(x)\|_\infty 
= \|\widetilde{\phi}^{(K-1)}(x) - \phi^{(K-1)}(x)\|_\infty 
\leq \frac{\delta}{1 - \rho}.
\end{equation}
\end{proof}

\subsection{Proof of Proposition~\ref{prop:NE_approx}}\label{app:proof_approx}

\begin{proof}[Proof of Proposition~\ref{prop:NE_approx}]
In this proof, all matrix inequalities are understood in the positive semi-definite sense. Without loss of generality, assume \(\mathcal{K}\) is the closed unit ball in \(\R^N\).

    \smallskip

\noindent\textbf{Step 1: Algebraic structure of \(\mathscr{P}_L\).} Let $h(L)$ denote an analytic function of $L$, where \( h(x):= \sum_{n=0}^{\infty}a_n^hx^n\). We show that \(h(L)\in \mathscr{P}_L\).

    Since \(L\in\mathbb{S}^{N\times N}\), it has a characteristic polynomial \(\Lambda\) of degree \(N\). By the Cayley-Hamilton theorem, \(\Lambda(L) = 0\). Then, for any analytic function $h$, there exist polynomials \(q\) and \(r\) with \(\deg r\leq N-1\), such that \(h(x) = q(x)\Lambda(x) + r(x)\), implying \(h(L) = r(L)\in\mathscr{P}_L\).

    Using the same technique, one can show that \(\mathscr{P}_L\) is closed under matrix multiplication; that is, \(X_1 X_2\in\mathscr{P}_L,\ \forall X_1,X_2\in \mathscr{P}_L\). Thus, \(\mathscr{P}_L\) forms an algebra.
    In particular, \(R'(t), R(t) \in \mathscr{P}_L\)  for all \( t\in[0,T]\), by equation~\eqref{eqn:second_charac}.

    \smallskip
    \noindent\textbf{Step 2: Polynomial approximation of \([I+R(T-t)cL]^{-1}\).} By \cite[Theorem~4.13]{hu2024finite}, we have \(0\leq R'(t)\leq (1+cT)I\) and \( 0\leq R(t)\leq (T + \frac{c}{2}T^2)I,\ \forall t\in[0,T]\).
    Since \(0\leq L\leq 2I\) \cite{chung1997spectral}, it follows that \(0\leq R(T-t)L\leq (2T + cT^2)I\) has a compact spectrum.
    By the Stone-Weierstrass theorem, there exists  a polynomial $P_k(x)$ of degree \(k \in \mathbb{N}\) such that
    \begin{equation}
        \|P_k(R(T-t)L) - [I+R(T-t)cL]^{-1}\|< \frac{\delta}{2c(1+cT)}.
    \end{equation}

    \smallskip

    \noindent\textbf{Step 3: Approximation of the NE strategy.} By Theorem~\ref{thm:semi-explicit}, the Markovian NE for player \(i\) exists and is given by
    \begin{equation}
        \hat{\alpha}^i(t,x) = -qe_i\transpose L x - e_i\transpose F^i_t x,
    \end{equation}
    where, by \cite[Lemma~4.7(ii)]{hu2024finite} and equation~\eqref{eqn:closed_form_F},
    \begin{equation}
        e_i\transpose F^i_t x = e_i\transpose R'(T-t)cL[I+R(T-t)cL]^{-1}x.
    \end{equation}    
    With the construction of \(A_t(L)\) as follows:
    \begin{equation}
        A_t(L) :=  -qL-R'(T-t)cL\cdot P_k(R(T-t)L),
    \end{equation}
    it is clear that \(A_t(L)\in\mathscr{P}_L\).
    This is because \(R(T-t)L\in \mathscr{P}_L\), so that \(P_k(R(T-t)L)\in\mathscr{P}_L\) by Step~1.
    The approximation error is then bounded by
    \begin{equation}
        \sup_{\|x\|\leq 1}|\hat{\alpha}^i(t,x) - e_i\transpose A_t(L) x|\leq \|R'(T-t)cL\|\cdot \|P_k(R(T-t)L) - [I+R(T-t)cL]^{-1}\|< \delta,
    \end{equation}
    which concludes the proof.
\end{proof}

\section{Additional Results for Supervised Learning}\label{app:SVL}

This appendix provides additional numerical results for the supervised learning task in Section~\ref{sec:SVL}, reinforcing the conclusions drawn there.
Appendix~\ref{app:SVL_loss} presents training loss curves that demonstrate the instability of GCNs.
Appendix~\ref{app:SVL_approximation} shows function approximations produced by trained neural networks, offering a visual comparison of architectural performance.

\subsection{Training Loss}\label{app:SVL_loss}

We report training loss trajectories for various supervised learning test cases under multiple independent trials.
Since results are qualitatively similar across different graphs, we present the case of a cycle graph $G= C_N$, using the same hyperparameters as in Section~\ref{sec:SVL}.

\begin{figure}[htpb]
    \centering
    \includegraphics[width=\linewidth]{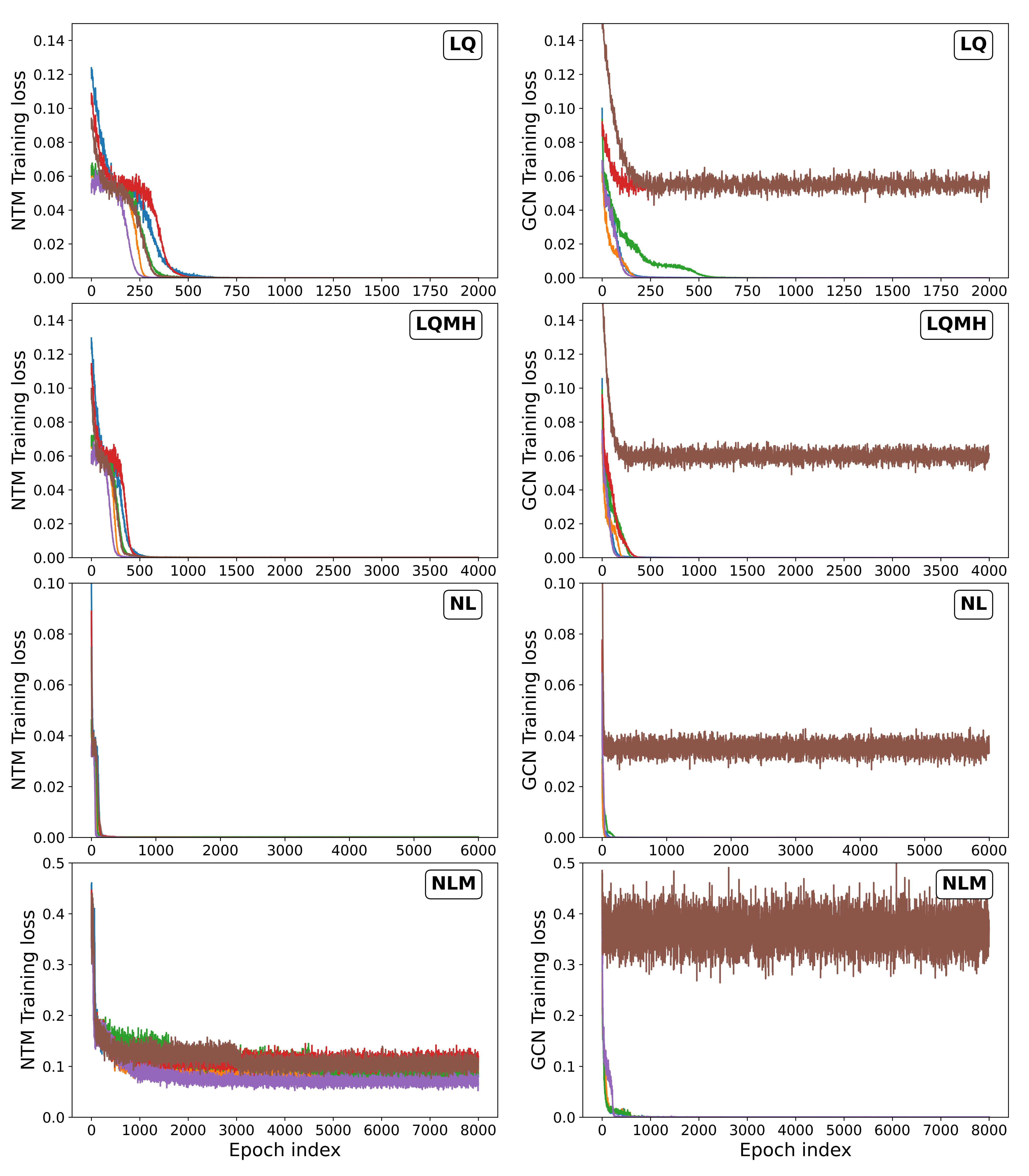}
    \caption{
 Training loss trajectories for NTM (left) and Chebyshev GCN (right) over 6 independent runs on test cases from Section~\ref{sec:SVL}, using the cycle graph $G = C_N$. Each color represents a different run. }      
    \label{fig:SVL_loss}
\end{figure}

Figure~\ref{fig:SVL_loss} compares the training loss curves of NTM \textit{vs.} Chebyshev GCN over \(6\) independent runs on \(G = C_N\).
NTM consistently converges (except in test case \textbf{NLM}, where limited expressivity is expected), while Chebyshev GCN often exhibits at least one divergent or stagnant trajectory per test case, indicating training instability.
These observations support our earlier claims regarding the expressivity of NTM and the robustness issues with GCNs.

\subsection{Function Approximation}\label{app:SVL_approximation}
We visualize the approximated function \(\tilde{f}(x)\) produced by trained neural networks in the supervised learning task with target function $f(x)$. The comparison includes four architectures: standard FNN, NTM with $M=3$, NTM with $M=1$, and a misspecified NTM:
\begin{itemize}
    \item The NTM architecture with \(M=3\) matches the architecture used in Section~\ref{sec:SVL}.
    \item The NTM with \(M=1\) uses Tanh activation function, potentially violating the condition of Theorem~\ref{thm:express}, but retains interpretability and uses fewer parameters.
    \item The misspecified NTM refers to \(\phi^{\mathrm{NTM}}_{i,K,M,G}\) with an incorrectly specified player index \(i=3\), while the learning targets in all test cases are analogous to player \(1\)'s feedback function in games on graphs (cf. Section~\ref{sec:SVL}). This setting evaluates the performance degradation caused by misaligned sparsification.
\end{itemize}

Since results are qualitatively consistent across different graphs, we use the cycle graph $G = C_N$, and apply the same hyperparameters as in Section~\ref{sec:SVL}.
Because these plots are based on a single training run, architectures with instability (e.g., GCNs) are excluded from this comparison.

\begin{figure}[htbp]
    \centering
    \includegraphics[width=\linewidth, height = 0.7\textheight]{FigSM2.jpg}
    \caption{Function approximation results for test cases \textbf{LQ, LQMH, NL, NLM} (from top to bottom; cf. Section~\ref{sec:SVL}) on the cycle graph \(G = C_{10}\).
    Each panel shows the mapping \(x_j\mapsto \tilde{f}(x)\) for \(j\in[N]\), with all other components \(x_1,\ldots,x_{j-1},x_{j+1},\ldots x_N\) sampled from \(U(0,1)\).}
    \label{fig:SVL_NLM_approx}
\end{figure}

Figure~\ref{fig:SVL_NLM_approx} displays the learned approximation \(\tilde{f}(x)\) as a function of each coordinate $x_j$, where \(x\in\R^N\).
Because the NTM used here has $K = 3$ and is constructed for player \(1\), it cannot capture the dependence on \(x_j\) for \(j\in\{5,6,7\}\), consistent with our discussion in Section~\ref{sec:NTM}.
For other values of \(j\), both NTM architectures perform well, except in test case \textbf{NLM}, where limited expressivity is expected (cf. Section~\ref{sec:SVL}). 
The misspecified NTM performs significantly worse across all test cases, illustrating the critical role of aligning sparsification with the graph structure.
This further supports the claim that NTM's effectiveness stems from its interpretable, graph-aware design.

\subsection{Summary Statistics associated with Figure~\ref{fig:SVL}}\label{app:summary_stat}

To facilitate a quantitative comparison of errors in the supervised learning experiments (cf. Section~\ref{sec:SVL}), we provide numerical summary statistics associated with the box plots in Figure~\ref{fig:SVL}.

Each of Tables~\ref{tab:SVL_cycle}--\ref{tab:SVL_RST} reports summary statistics of RMSE values over 1000 independent runs of the supervised learning experiments (Section~\ref{sec:SVL}) for all test cases on a given graph structure.
Within the tables, we report the bootstrap-based 95\% confidence interval (CI) for the median RMSE, obtained via $10000$ bootstrap samples, to account for the non-Gaussianity of the RMSE distribution.
The interquartile range (IQR) and the variance are reported to measure the dispersion of the RMSE distribution.
The \(2.5\%\) and \(97.5\%\) RMSE quantiles, denoted by \(Q_{2.5\%}\) and \(Q_{97.5\%}\), characterize the tail behavior and provide additional insight into the expressivity and training stability of the architectures.
While Figure~\ref{fig:SVL} displays box plots of log-RMSE values for visual clarity, the tables report summary statistics on the original RMSE scale to facilitate direct quantitative interpretation.
The conclusions drawn from these numerical results are consistent with those presented in Section~\ref{sec:SVL} and are therefore omitted for brevity.

\begin{table}[htbp]
\centering
\begin{threeparttable}
\caption{Summary statistics of RMSE values over 1000 runs for all test cases on \(G = C_{10}\).}
\label{tab:SVL_cycle}
\begin{tabular}{ll|llll}
\toprule
\makecell{\textbf{Test}\\ \textbf{case}} & \makecell{\textbf{Architec-}\\ \textbf{ture}} & \textbf{Median (95\% CI)} & \textbf{IQR} & \textbf{Variance} & $[Q_{2.5\%}, Q_{97.5\%}]$ \\
\midrule
\multirow{3}{*}{\makecell{\textbf{LQ}}}
& \makecell{FNN} &  $2.31e{-4}\ (2.27e{-4}, 2.36e{-4})$ & $6.45e{-5}$ & $2.98e{-9}$ & $[1.65e{-4},3.83e{-4}]$ \\
& \makecell{NTM} & $8.68e{-5}\ (8.59e{-5}, 8.86e{-5})$ & $1.32e{-4}$ & $6.97e{-3}$ & $[7.38e{-5},7.41e{-4}]$ \\
& \makecell{GCN} & $1.00e{-0}\ (1.00e{-0},1.00e{-0})$ & $1.00e{-0}$ & $2.49e{-1}$ & $[3.23e{-6},1.00e{-0}]$ \\
\midrule
\multirow{3}{*}{\makecell{\textbf{LQMH}}}
& \makecell{FNN} &  $1.37e{-4}\ (1.35e{-4}, 1.40e{-4})$ & $3.75e{-5}$ & $1.07e{-9}$ & $[1.03e{-4},2.33e{-4}]$ \\
& \makecell{NTM} & $3.05e{-3}\ (3.05e{-3}, 3.05e{-3})$ & $7.94e{-6}$ & $5.93e{-3}$ & $[3.04e{-3},1.29e{-2}]$ \\
& \makecell{GCN} & $1.00e{-0}\ (1.00e{-0},1.00e{-0})$ & $1.00e{-0}$ & $2.49e{-1}$ & $[7.07e{-7},1.00e{-0}]$ \\
\midrule
\multirow{3}{*}{\makecell{\textbf{NL}}}
& \makecell{FNN} &  $1.38e{-4}\ (1.29e{-4}, 1.47e{-4})$ & $2.23e{-4}$ & $1.27e{-7}$ & $[5.28e{-5},1.14e{-3}]$ \\
& \makecell{NTM} & $5.26e{-4}\ (5.06e{-4},5.38 e{-4})$ & $1.35e{-3}$ & $1.29e{-2}$ & $[1.69e{-4},7.95e{-3}]$ \\
& \makecell{GCN} & $1.00e{-0}\ (1.00e{-0},1.00e{-0})$ & $1.00e{-0}$ & $2.38e{-1}$ & $[8.15e{-6},1.00e{-0}]$ \\
\midrule
\multirow{3}{*}{\makecell{\textbf{NLM}}}
& \makecell{FNN} &  $1.04e{-2}\ (1.02e{-2},1.05e{-2})$ & $3.27e{-3}$ & $3.04e{-5}$ & $[8.38e{-3},2.53e{-2}]$ \\
& \makecell{NTM} & $2.29e{-1}\ (2.23e{-1}, 2.33e{-1})$ & $9.62e{-2}$ & $1.50e{-2}$ & $[1.26e{-1},4.71e{-1}]$ \\
& \makecell{GCN} & $1.00e{-0}\ (1.00e{-0},1.00e{-0})$ & $9.99e{-1}$ & $2.42e{-1}$ & $[3.13e{-4},1.00e{-0}]$ \\
\bottomrule
\end{tabular}
\end{threeparttable}
\end{table}

\begin{table}[htbp]
\centering
\begin{threeparttable}
\caption{Summary statistics of RMSE values over 1000 runs for all test cases on \(G = S_{10}\).}
\label{tab:SVL_star}
\begin{tabular}{ll|llll}
\toprule
\makecell{\textbf{Test}\\ \textbf{case}} & \makecell{\textbf{Architec-}\\ \textbf{ture}} & \textbf{Median (95\% CI)} & \textbf{IQR} & \textbf{Variance} & $[Q_{2.5\%}, Q_{97.5\%}]$ \\
\midrule
\multirow{3}{*}{\makecell{\textbf{LQ}}} & \makecell{FNN} & $1.29e{-4}\ (1.25e{-4}, 1.33e{-4})$ & $6.55e{-5}$ & $2.23e{-9}$ & $[6.14e{-5},2.35e{-4}]$ \\ & \makecell{NTM} & $7.12e{-6}\ (6.76e{-6}, 7.48e{-6})$ & $9.00e{-6}$ & $1.14e{-9}$ & $[1.31e{-6},1.46e{-4}]$ \\ & \makecell{GCN} & $1.05e{-4}\ (7.86e{-5},7.14e{-2})$ & $1.43e{-1}$ & $5.07e{-3}$ & $[2.95e{-6},1.43e{-1}]$\\
\midrule
\multirow{3}{*}{\makecell{\textbf{LQMH}}} & \makecell{FNN} & $5.72e{-5}\ (5.56e{-5}, 5.86e{-5})$ & $2.75e{-5}$ & $4.17e{-10}$ & $[2.95e{-5},1.05e{-4}]$ \\ & \makecell{NTM} & $1.31e{-6}\ (1.23e{-6}, 1.38e{-6})$ & $1.44e{-6}$ & $2.52e{-11}$ & $[2.42e{-7},2.19e{-5}]$ \\ & \makecell{GCN} & $4.89e{-5}\ (3.70e{-5},7.14e{-2})$ & $1.43e{-1}$ & $5.07e{-3}$ & $[1.33e{-6},1.43e{-1}]$ \\
\midrule
\multirow{3}{*}{\makecell{\textbf{NL}}} & \makecell{FNN} & $5.37e{-5}\ (4.89e{-5}, 5.92e{-5})$ & $8.29e{-5}$ & $8.07e{-9}$ & $[2.04e{-5},2.97e{-4}]$ \\ & \makecell{NTM} & $2.83e{-4}\ (2.76e{-4}, 2.91e{-4})$ & $1.09e{-4}$ & $1.71e{-7}$ & $[1.35e{-4},4.25e{-4}]$ \\ & \makecell{GCN} & $9.99e{-5}\ (4.78e{-5},1.15e{-1})$ & $1.15e{-1}$ & $3.33e{-3}$ & $[4.23e{-6},1.15e{-1}]$ \\
\midrule
\multirow{3}{*}{\makecell{\textbf{NLM}}} & \makecell{FNN} & $9.07e{-3}\ (8.86e{-3}, 9.29e{-3})$ & $4.16e{-3}$ & $7.28e{-5}$ & $[5.00e{-3},3.48e{-2}]$ \\ & \makecell{NTM} & $1.35e{-1}\ (1.33e{-1}, 1.38e{-1})$ & $3.80e{-2}$ & $8.38e{-4}$ & $[7.05e{-2},1.86e{-1}]$ \\ & \makecell{GCN} & $3.40e{-2}\ (1.68e{-2},9.70e{-1})$ & $9.69e{-1}$ & $2.32e{-1}$ & $[2.75e{-4},9.70e{-1}]$ \\
\bottomrule
\end{tabular}
\end{threeparttable}
\end{table}

\begin{table}[htbp]
\centering
\begin{threeparttable}
\caption{Summary statistics of RMSE values over 1000 runs for all test cases on \(G = K_{5,5}\).}
\label{tab:SVL_CB}
\begin{tabular}{ll|llll}
\toprule
\makecell{\textbf{Test}\\ \textbf{case}} & \makecell{\textbf{Architec-}\\ \textbf{ture}} & \textbf{Median (95\% CI)} & \textbf{IQR} & \textbf{Variance} & $[Q_{2.5\%}, Q_{97.5\%}]$ \\
\midrule
\multirow{3}{*}{\makecell{\textbf{LQ}}} & \makecell{FNN} & $2.19e{-4}\ (2.16e{-4}, 2.24e{-4})$ & $6.52e{-5}$ & $2.68e{-9}$ & $[1.53e{-4},3.51e{-4}]$ \\ & \makecell{NTM} & $1.28e{-5}\ (1.22e{-5}, 1.35e{-5})$ & $9.09e{-6}$ & $1.15e{-10}$ & $[3.74e{-6},3.36e{-5}]$ \\ & \makecell{GCN} & $1.00e{-0}\ (1.00e{-0},1.00e{-0})$ & $1.00e{-0}$ & $2.48e{-1}$ & $[1.85e{-6},1.00e{-0}]$\\
\midrule
\multirow{3}{*}{\makecell{\textbf{LQMH}}} & \makecell{FNN} & $1.32e{-4}\ (1.30e{-4}, 1.33e{-4})$ & $3.28e{-5}$ & $9.06e{-10}$ & $[9.89e{-5},2.17e{-4}]$ \\ & \makecell{NTM} & $3.78e{-6}\ (3.60e{-6}, 3.98e{-6})$ & $3.57e{-6}$ & $2.11e{-11}$ & $[1.21e{-6},1.89e{-5}]$ \\ & \makecell{GCN} & $1.00e{-0}\ (1.00e{-0},1.00e{-0})$ & $1.00e{-0}$ & $2.48e{-1}$ & $[6.25e{-7},1.00e{-0}]$ \\
\midrule
\multirow{3}{*}{\makecell{\textbf{NL}}} & \makecell{FNN} & $1.68e{-4}\ (1.57e{-4}, 1.77e{-4})$ & $2.43e{-4}$ & $1.43e{-7}$ & $[7.62e{-5},1.40e{-3}]$ \\ & \makecell{NTM} & $2.12e{-4}\ (2.01e{-4}, 2.23e{-4})$ & $2.12e{-4}$ & $5.79e{-8}$ & $[1.01e{-4},9.16e{-4}]$ \\ & \makecell{GCN} & $1.00e{-0}\ (1.00e{-0},1.00e{-0})$ & $1.00e{-0}$ & $2.40e{-1}$ & $[9.12e{-6},1.00e{-0}]$ \\
\midrule
\multirow{3}{*}{\makecell{\textbf{NLM}}} & \makecell{FNN} & $4.68e{-3}\ (4.61e{-3}, 4.76e{-3})$ & $1.22e{-3}$ & $1.04e{-6}$ & $[3.34e{-3},7.51e{-3}]$ \\ & \makecell{NTM} & $1.05e{-1}\ (1.02e{-1}, 1.08e{-1})$ & $5.14e{-2}$ & $1.51e{-3}$ & $[3.74e{-2},1.89e{-1}]$ \\ & \makecell{GCN} & $8.11e{-3}\ (5.31e{-3},1.00e{-0})$ & $9.97e{-1}$ & $2.49e{-1}$ & $[7.72e{-4},1.00e{-0}]$ \\
\bottomrule
\end{tabular}
\end{threeparttable}
\end{table}

\begin{table}[htbp]
\centering
\begin{threeparttable}
\caption{Summary statistics of RMSE values over 1000 runs for all test cases on \(G = \mathrm{RST}_{10}\).}
\label{tab:SVL_RST}
\begin{tabular}{ll|llll}
\toprule
\makecell{\textbf{Test}\\ \textbf{case}} & \makecell{\textbf{Architec-}\\ \textbf{ture}} & \textbf{Median (95\% CI)} & \textbf{IQR} & \textbf{Variance} & $[Q_{2.5\%}, Q_{97.5\%}]$ \\
\midrule
\multirow{3}{*}{\makecell{\textbf{LQ}}} & \makecell{FNN} & $1.81e{-4}\ (1.75e{-4}, 1.86e{-4})$ & $8.65e{-5}$ & $4.16e{-9}$ & $[1.06e{-4},3.39e{-4}]$ \\ & \makecell{NTM} & $1.76e{-5}\ (1.57e{-5}, 2.06e{-5})$ & $8.89e{-5}$ & $4.04e{-6}$ & $[2.62e{-6},6.51e{-4}]$ \\ & \makecell{GCN} & $2.84e{-3}\ (1.29e{-3},1.40e{-2})$ & $5.63e{-1}$ & $7.82e{-2}$ & $[1.06e{-5},5.63e{-1}]$\\
\midrule
\multirow{3}{*}{\makecell{\textbf{LQMH}}} & \makecell{FNN} & $9.88e{-5}\ (9.70e{-5}, 1.01e{-4})$ & $3.85e{-5}$ & $9.43e{-10}$ & $[6.43e{-5},1.78e{-4}]$ \\ & \makecell{NTM} & $2.08e{-6}\ (1.95e{-6}, 2.23e{-6})$ & $2.89e{-6}$ & $3.06e{-6}$ & $[4.66e{-7},6.13e{-3}]$ \\ & \makecell{GCN} & $7.54e{-4}\ (3.45e{-4},5.07e{-3})$ & $5.67e{-1}$ & $7.93e{-2}$ & $[3.73e{-6},5.67e{-1}]$ \\
\midrule
\multirow{3}{*}{\makecell{\textbf{NL}}} & \makecell{FNN} & $1.26e{-4}\ (1.18e{-4}, 1.38e{-4})$ & $1.54e{-4}$ & $3.66e{-8}$ & $[5.63e{-5},6.47e{-4}]$ \\ & \makecell{NTM} & $5.56e{-4}\ (5.23e{-4}, 6.01e{-4})$ & $6.76e{-4}$ & $2.84e{-4}$ & $[1.27e{-4},1.64e{-3}]$ \\ & \makecell{GCN} & $5.29e{-1}\ (5.29e{-1},5.29e{-1})$ & $5.29e{-1}$ & $6.98e{-2}$ & $[9.01e{-6},5.29e{-1}]$ \\
\midrule
\multirow{3}{*}{\makecell{\textbf{NLM}}} & \makecell{FNN} & $1.72e{-2}\ (1.70e{-2}, 1.73e{-2})$ & $3.60e{-3}$ & $4.35e{-5}$ & $[6.04e{-3},3.10e{-2}]$ \\ & \makecell{NTM} & $1.05e{-1}\ (1.00e{-1}, 1.09e{-1})$ & $6.13e{-2}$ & $3.14e{-3}$ & $[4.15e{-2},2.20e{-1}]$ \\ & \makecell{GCN} & $1.83e{-2}\ (1.77e{-2},1.93e{-2})$ & $9.95e{-1}$ & $2.43e{-1}$ & $[3.69e{-4},9.96e{-1}]$ \\
\bottomrule
\end{tabular}
\end{threeparttable}
\end{table}

\subsection{Additional Test Cases with Nonlinear State Interactions}

To complement the discussion in Section~\ref{sec:SVL}, we present additional numerical results aimed at probing the representational behavior of NTM under different classes of nonlinear state interactions that may arise in a single player's equilibrium strategies for general games on graphs.

The following test cases all resemble the \textbf{NL} test case in Section~\ref{sec:SVL}, differing only in the choice of the nonlinear state interaction function applied component-wise.
Denote by \(f_{\mathcal{C}}:\R^N\to\R\) the supervised learning target in the test case \(\mathcal{C}\), as specified below:

\smallskip
\textbf{QUAD} (Quadratic Interaction): \(f_{\mathrm{QUAD}}(x) := 3 e_1\transpose \frac{\det(0.5I + L)\, L + \|(I +L)^{-1}\|_F\, L^2}{\|\det(0.5I + L)\, L + \|(I +L)^{-1}\|_F\, L^2\|_F} (x\odot x)\).
This test case evaluates the ability to approximate functions exhibiting superlinear growth and smooth global structure.

\smallskip
\textbf{RQ} (Rational Quadratic Interaction): \(f_{\mathrm{RQ}}(x) := 3 e_1\transpose \frac{\det(0.5I + L)\, L + \|(I +L)^{-1}\|_F\, L^2}{\|\det(0.5I + L)\, L + \|(I +L)^{-1}\|_F\, L^2\|_F} \frac{x\odot x}{\mathbf{1}_N+x}\), where the division is performed component-wise.
This test case evaluates the ability to approximate nonlinear interactions with saturating (subquadratic) behavior.

\smallskip
\textbf{KINK} (Kinked Interaction): \(f_{\mathrm{KINK}}(x) := 3 e_1\transpose \frac{\det(0.5I + L)\, L + \|(I +L)^{-1}\|_F\, L^2}{\|\det(0.5I + L)\, L + \|(I +L)^{-1}\|_F\, L^2\|_F} \max\{x-\frac{1}{2}\mathbf{1}_N,\mathbf{0}_N\}\), where the maximum applies component-wise.
This test case evaluates the ability to approximate functions with localized non-smooth behavior.

We compare the RMSEs produced by the following architectures: FNN with \(\tanh\) activation, FNN with \(\mathrm{ReLU}\) activation, NTM with depth \(K=3\), NTM with depth \(K = 6\), and GCN.
Notably, the inclusion of both \(\tanh\) and \(\mathrm{ReLU}\) activations for the FNN baseline allows us to disentangle the effect of activation functions from architectural differences.
In terms of numerical implementation, the training procedures and hyperparameters are identical to those used in the test case \textbf{NL} reported in Appendix~\ref{app:hyper_SVL}.

Together, these test cases provide a systematic study of how different architectures respond to distinct nonlinearities in the target function.

\begin{figure}[ht]
    \centering
    \includegraphics[width=1.0\linewidth]{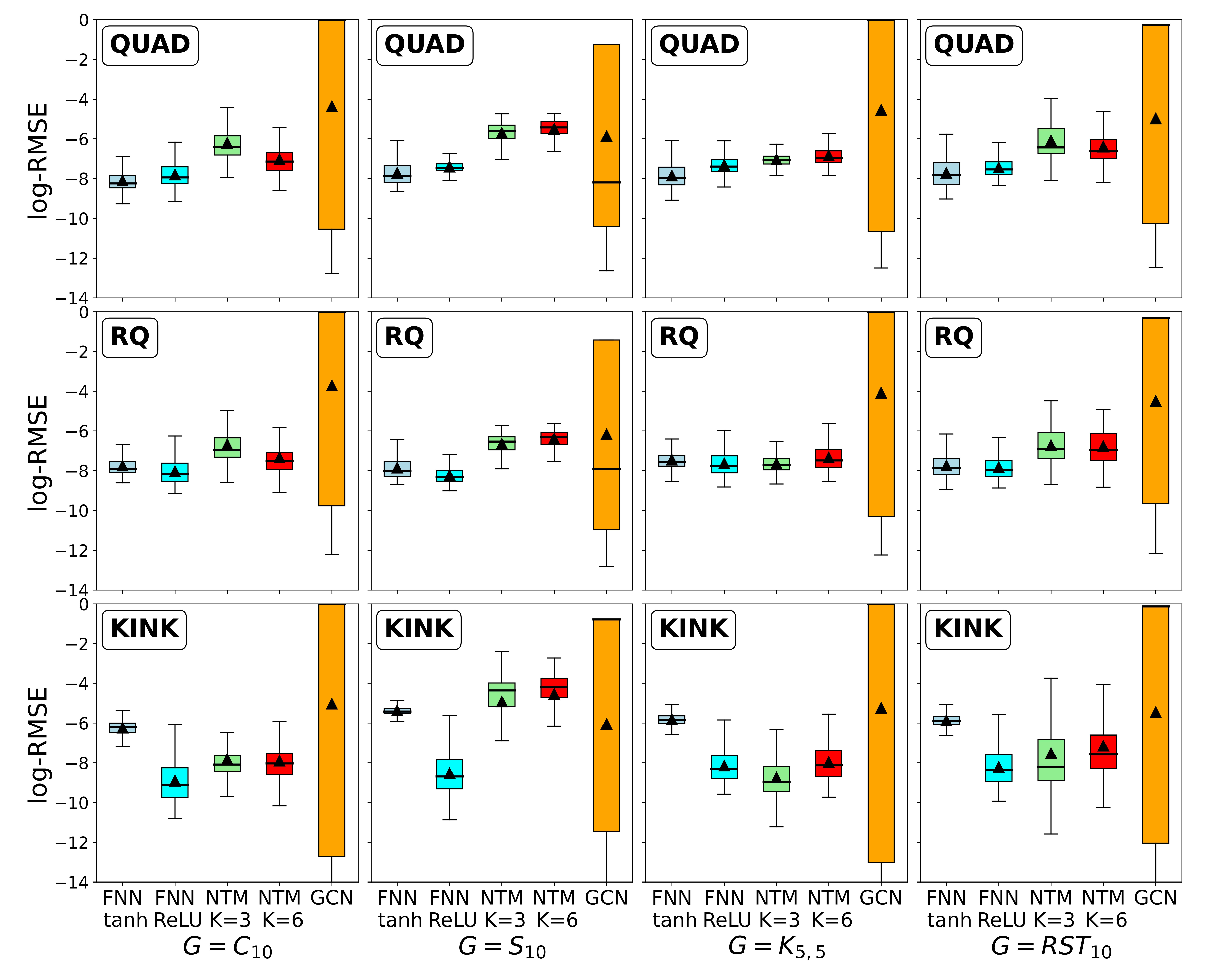}
    \caption{Box plots of log-RMSE over \(1000\) runs for additional test cases with nonlinear state interactions. Different NN architectures are compared across various graphs. Bold black lines indicate the medians; black triangles indicate the means.}
    \label{fig:extra_nonlinear}
\end{figure}

Figure~\ref{fig:extra_nonlinear} shows the box plots of log-RMSE values across different architectures and graph topologies, computed from \(1000\) independent runs.
The qualitative conclusions in Section~\ref{sec:SVL} remain valid, in particular demonstrating the large variability and instability of GCNs.
Numerical results show that NTMs with depth \(K=3\) yield performance similar to NTMs with depth \(K=6\), supporting our previous claim that a small value of \(K\) usually suffices for numerical applications.

For the \textbf{QUAD} and \textbf{RQ} test cases, NTMs achieve reasonable and stable performance across different graph topologies, although FNNs often remain more accurate.
This demonstrates that NTM can approximate quadratic and saturating nonlinear interactions on general graphs while maintaining an interpretable and parameter-efficient architecture. 
For the \textbf{KINK} test case, NTM outperforms the FNN with tanh activation and achieves a level of accuracy comparable to the FNN with ReLU activation on most graph topologies, except on the star graph, where the vertices are highly heterogeneous and the performance of NTM deteriorates.
It is worth emphasizing that NTM with \(K=6\) uses \(804\) parameters, whereas the FNN used in our numerical experiments has \(1441\) parameters. 
As a result, although NTM does not uniformly outperform FNNs in terms of raw approximation accuracy, it often achieves comparable performance using less than 55.8\% of the parameters.

These numerical results suggest that the strength of NTM lies not in uniformly dominating FNNs in terms of raw RMSE, but rather in providing a favorable balance among approximation accuracy, parameter efficiency, and training stability.

\section{Technical Details of NTM-DBSDE}\label{app:DBSDE}

This appendix complements our discussion of (NTM-)DBSDE in Section~\ref{sec:alg}, focusing on algorithmic design (Appendix~\ref{app:design}) and the numerical implementation (Appendix~\ref{app:implement}). For a more in-depth discussion, we refer readers to \cite{han2017deep,han2020deep,han2025brief}.

Let $X_t = [X_t^1, \ldots, X_t^N]$ denote the state dynamics of all players,
\begin{equation}
    \label{eqn:gen_state_dynamics}
    \ud X_t = B(t,X_t,\alpha_t)\ud t + \Sigma(t,X_t) \ud W_t,
\end{equation}
where \(B:[0,T]\times\R^N\times\R^N\to\R^N\), \(\Sigma:[0,T]\times\R^N\to\R^{N\times N}\), $\alpha_t$ is the collection of all controls, and \(\{W_t\}\) is an \(\R^N\)-valued Brownian motion.
Under Markovian strategies \(\alpha^i_t = \phi_i(t,X_t)\), player \(i\) aims to  minimize the following objective:
\begin{equation}
    \label{eqn:gen_J}
    J^i(\alpha) := \E \left[\int_0^T f^i(t,X_t,\alpha_t)\ud t + g^i(X_T)\right],
\end{equation}
over admissible feedback functions \(\phi_i\), where  \(f^i\) and \(g^i\) are running and terminal costs.

\subsection{Deep Fictitious Play based on Deep BSDE}\label{app:design}
The algorithm presented below was introduced in \cite{han2020deep} and extends the Deep BSDE method \cite{han2017deep}, originally developed for solving semi-explicit PDEs, to solving general finite-player stochastic differential games.

Let \(v^i:[0,T]\times\R^N\to\R\) denote the value function of player $i$, defined by
\begin{equation}
    v^i(t,x) := \E \left[\int_t^T f^i(s,X_s,\alpha_s)\ud s + g^i(X_T)\Big|X_t = x\right].
\end{equation}
By the dynamic programming principle (DPP), \(v^1,\ldots,v^N\) satisfy the HJB system:
\begin{equation}
    \label{eqn:app_HJB}
    \partial_t v^i + \inf_{\alpha^i}\{[B(t,x,\alpha)]\transpose \partial_x v^i + f^i(t,x,\alpha)\} + \frac{1}{2}\Tr(\Sigma\Sigma\transpose (t,x)\;\partial_{xx} v^i)=0,
\end{equation}
with terminal condition \(v^i(T,x) = g^i(x)\).
Assuming the infimum admits a unique explicit minimizer, we define:
\begin{equation}
    \label{eqn:DBSDE_opt_strategy}
    \hat{\alpha}^i(t,x,\partial_x v^i;\alpha^{-i}) := {\arg\inf}_{\alpha^i}\{[B(t,x,\alpha)]\transpose \partial_x v^i + f^i(t,x,\alpha)\},
\end{equation}
where \(\alpha^{-i}\) is the strategy profile of the other players except player \(i\), and is associated with feedback functions \(\phi_{-i}\).
Substituting this into the HJB system~\eqref{eqn:app_HJB} yields semilinear parabolic PDEs:
\begin{equation}
    \label{eqn:semilinear_PDE}
    \partial_t v^i + [\mu^i(t,x;\alpha^{-i})]\transpose \partial_x v^i + \frac{1}{2}\Tr(\Sigma\Sigma\transpose(t,x)\; \partial_{xx} v^i) + h^i(t,x,\Sigma\transpose \partial_x v^i;\alpha^{-i}) = 0,
\end{equation}
where \(\mu^i\) and \(h^i\) are functions that can be further specified to match~\eqref{eqn:app_HJB}\footnote{For consistent notations in subsequent context, we describe the functional dependence of \(h^i\) on \(\partial_x v^i\) through \(\Sigma\transpose \partial_x v^i\), which is equivalent due to the non-degeneracy of \(\Sigma\).}.

The nonlinear Feynman-Kac formula \cite{pardoux2005backward,pardoux1999forward} yields the correspondence:
\begin{equation}
    \label{eqn:FK_correspondence}
    Y^i_t := v^i(t,\chi^i_t),\quad Z^i_t := \Sigma\transpose(t,\chi^i_t)\;\partial_x v^i(t,\chi^i_t).
\end{equation}
Here \(\{\chi^i_t\}\) is an auxiliary process satisfying the forward SDE and the value process \(\{Y^i_t\}\) satisfies the backward SDE:
\begin{align}
    \label{eqn:DBSDE_FSDE}
    \ud \chi^i_t &= \mu^i(t,\chi^i_t;\phi_{-i}(t,\chi^i_t))\ud t + \Sigma(t,\chi^i_t)\ud W_t,\quad \chi^i_0 = \chi_0,\\
    \label{eqn:DBSDE_BSDE}
    \ud Y^i_t &= -h^i(t,\chi^i_t,Z^i_t;\phi_{-i}(t,\chi^i_t))\ud t + Z^i_t\ud W_t,\quad Y^i_T = g^i(\chi^i_T),
\end{align}
where \(\chi_0\) is a square-integrable random variable independent of the Brownian motion \(\{W_t\}\).

The Deep BSDE method is then applied to solve this system \eqref{eqn:DBSDE_FSDE}-\eqref{eqn:DBSDE_BSDE} iteratively, approximating solutions to the HJB system~\eqref{eqn:app_HJB} and the associated equilibrium strategy of the game.

\subsection{Numerical implementation}\label{app:implement}

As mentioned in Section~\ref{sec:alg}, we parameterize $Z_t^i$, i.e. \(x\mapsto\Sigma\transpose(t,x)\; \partial_x v^i(t,x)\) using a neural network \(\phi^{\mathrm{NN}}(t,\cdot):\R^N\to\R^N\) for discrete times \(t\in\Delta\) in a partition of $[0,T]$. Thus \((\Sigma\transpose)^{-1}\;\phi^{\mathrm{NN}}\) approximates \(\partial_x v^i\), which determines the optimal strategy $\hat \alpha^i$ through~\eqref{eqn:DBSDE_opt_strategy}. This motivates the use of NTM for parameterizing the adjoint process $Z_t$.

Sample paths of \(\{\chi^i_t\}\) and \(\{Y^i_t\}\) are generated via the Euler-Maruyama scheme. The loss function for training $\phi^{\mathrm{NN}}$ is the expected mismatch of the BSDE terminal condition:
\begin{equation}
    \label{eqn:DBSDE_loss}
    \E [Y^i_T - g^i(\chi^i_T)]^2,
\end{equation}
approximated by Monte Carlo with \(N_{\mathrm{batch}}\) sample paths. Performing this procedure above once for each player \(i\in[N]\) concludes a single round of DFP.

For the two game models studied in this paper, the specifications of \(\mu^i\) and \(h^i\)  align with those reported in \cite{han2020deep}, and the selection may not be unique; in practice, including more  terms in \(\mu^i\) is often preferred.
Implied by the correspondence~\eqref{eqn:FK_correspondence}, when the current state is \(X_t\) at time \(t\), the neural network output \(\phi^{\mathrm{NN}}(t,X_t)\) provides an approximation for the equilibrium strategy:
\begin{equation}
    \label{eqn:alpha_DBSDE}
    \hat{\alpha}^i(t,X_t) \approx -qe_i\transpose LX_t - \frac{1}{\sigma}[\phi^{\mathrm{NN}}(t,X_t)]_i.
\end{equation}

\subsection{The Non-trainable Variant of Deep BSDE}\label{app:NTM-DBSDE}

With the derivations and implementations of Deep BSDE introduced above, we state the NTM-DBSDE as Algorithm~\ref{alg:NTM-DBSDE} to complement discussions in Section~\ref{sec:alg}.

\begin{algorithm}
\renewcommand{\algorithmicrequire}{\textbf{Input:}}
\renewcommand{\algorithmicensure}{\textbf{Output:}}
\caption{NTM-DBSDE: Non-Trainable Deep BSDE for Games on Graphs}
\label{alg:NTM-DBSDE}
\begin{algorithmic}[1]
    \REQUIRE A family of NTM architectures \(\{\phi^{\mathrm{NTM},j}_{K,M,G}(t,\cdot)\}_{j\in[N],t\in[0,T]}\) 
    \STATE Initialize trainable parameters of \(\phi^{\mathrm{NTM},j}_{K,M,G}(t,\cdot),\ \forall j \in [N],\, t \in [0,T]\)
    \REPEAT 
        \FOR{each player \(i\in[N]\)}
            \STATE Simulate sample paths of the FBSDE~\eqref{eqn:DBSDE_FSDE}--\eqref{eqn:DBSDE_BSDE} using network outputs \(\phi^{\mathrm{NTM},j}_{K,M,G}(t,x),\ \forall j\in[N],\ t\in[0,T]\)
            \STATE Compute the loss~\eqref{eqn:DBSDE_loss} that quantifies the mismatch at the terminal condition
            \STATE Update trainable parameters of \(\phi^{\mathrm{NTM}, i}_{K,M,G}(t,\cdot)\), for all \(t \in [0,T]\), using the loss
        \ENDFOR
    \UNTIL{convergence or maximum number of DFP rounds reached}
    \ENSURE A family of trained NTMs approximating the Nash equilibrium strategies
\end{algorithmic}  
\end{algorithm}

\section{Additional Numerical Results for Games in Section~\ref{sec:numerics}}\label{app:M3}

This section presents additional numerical results that further demonstrate the advantage of the NTM architecture. For each of the three game models, we provide: (i) training loss curves of NTM-DBSDE and Deep BSDE, (ii) a table reporting the maximum relative error (MRE) of the value functions across all players, and (iii) equilibrium state and strategy trajectories produced by (NTM-)DBSDE. 
Unless otherwise specified, all experiments use the same architectures and hyperparameters as in Section~\ref{sec:numerics}.
In this section, the reported RMSE values correspond to single evaluation runs (rather than averages over multiple runs), each computed using \(N_{\mathrm{test}}=5000\) paths.
Nevertheless, as demonstrated in Section~\ref{sec:numerics}, the variability across evaluation runs is small.

\smallskip

\noindent\textbf{Training loss.}
The training loss of the neural network associated with player \(i\) is defined in equation~\eqref{eqn:DBSDE_loss}. This loss represents the mismatch of the terminal condition in the backward SDE~\eqref{eqn:DBSDE_BSDE}. As training progresses, the loss should approach zero if the neural network converges.

The training loss curve reflects the learning dynamics of the network and provides a sanity check for our numerical implementation of (NTM-)DBSDE. Since the loss curves exhibit qualitatively similar behavior across different graph structures, we present results only for the star graph $G= S_N$.

\smallskip
\noindent\textbf{Maximum relative error (MRE).}
Following the notations of Section~\ref{sec:numerics}, for any player \(i\in[N]\), let \(\{\hat{X}^{i,m}_t\}\) and \(\{\hat{\alpha}^{i,m}_t\}\) be the \(m\)-th sample path of the equilibrium state and strategy processes of player \(i\) under the baseline equilibrium strategy.
Similarly, let \(\{\tilde{X}^{i,m}_t\}\) and \(\{\tilde{\alpha}^{i,m}_t\}\) be those generated under the neural network parameterized equilibrium strategy.
With \(N_{\mathrm{test}} = 25000\) sample paths simulated, the expected cost of player \(i\) (cf. equation~\eqref{eqn:J}) under the baseline equilibrium strategy can be calculated and is denoted by \(\hat{V}^{i}\), while \(\tilde{V}^{i}\) denotes the expected cost under the neural network parameterized strategy.

The MRE is then defined as:
\begin{equation}
    \mathrm{MRE} := \max_{i\in[N]}\left|\frac{\tilde{V}^i - \hat{V}^i}{\hat{V}^i}\right|,
\end{equation}
which measures the maximum relative error in the expected cost across all players. A smaller MRE indicates better performance of the neural network in approximating the equilibrium.

\subsection{Linear-Quadratic Stochastic Differential Games (Section~\ref{sec:numerics-lq})}\label{app:M3_LQ}

\begin{figure}[htbp]
    \centering
    \includegraphics[width=\linewidth]{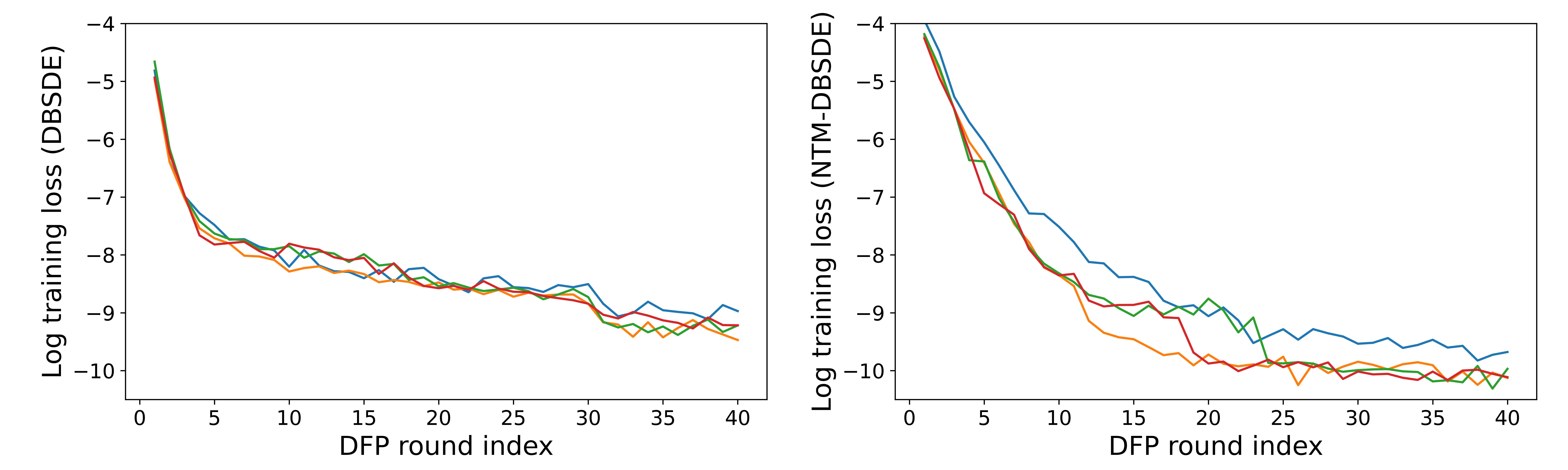}
    \caption{Training loss of Deep BSDE (left) and NTM-DBSDE (right) for \(N=10\) players in the LQ game on the star graph \(G = S_{10}\) (cf. Section~\ref{sec:numerics-lq}).
    Colored solid lines represent the logarithmic training loss of individual players (only four shown for clarity). }
    \label{fig:LQ_DBSDE_loss}
\end{figure}

\begin{table}[htbp]
\caption{MRE for the LQ game on graphs in Section~\ref{sec:numerics-lq}}
\label{tab:LQ_value_M3}
\centering
  \begin{tabular}{c|ccc}
    \toprule
   {Problem} & {\(G = C_{10}\)} & {\(G = S_{10}\)} & {\(G = \mathrm{RST}_{10}\)}\\
      \midrule
    DP & \(2.36e{-2}\) & \(2.09e{-2}\) & \(2.36e{-2}\) \\
    NTM-DP & \(2.14e{-2}\) & \(1.89e{-2}\) & \(2.30e{-2}\) \\
    DBSDE & \(2.74e{-2}\) & \(2.35e{-2}\) & \(2.45e{-2}\) \\
    NTM-DBSDE & \(1.66e{-2}\) & \(3.58e{-2}\) & \(2.90e{-2}\) \\
    \bottomrule
  \end{tabular}
\end{table}

\begin{figure}[htbp]
    \centering
    \includegraphics[width=\linewidth]{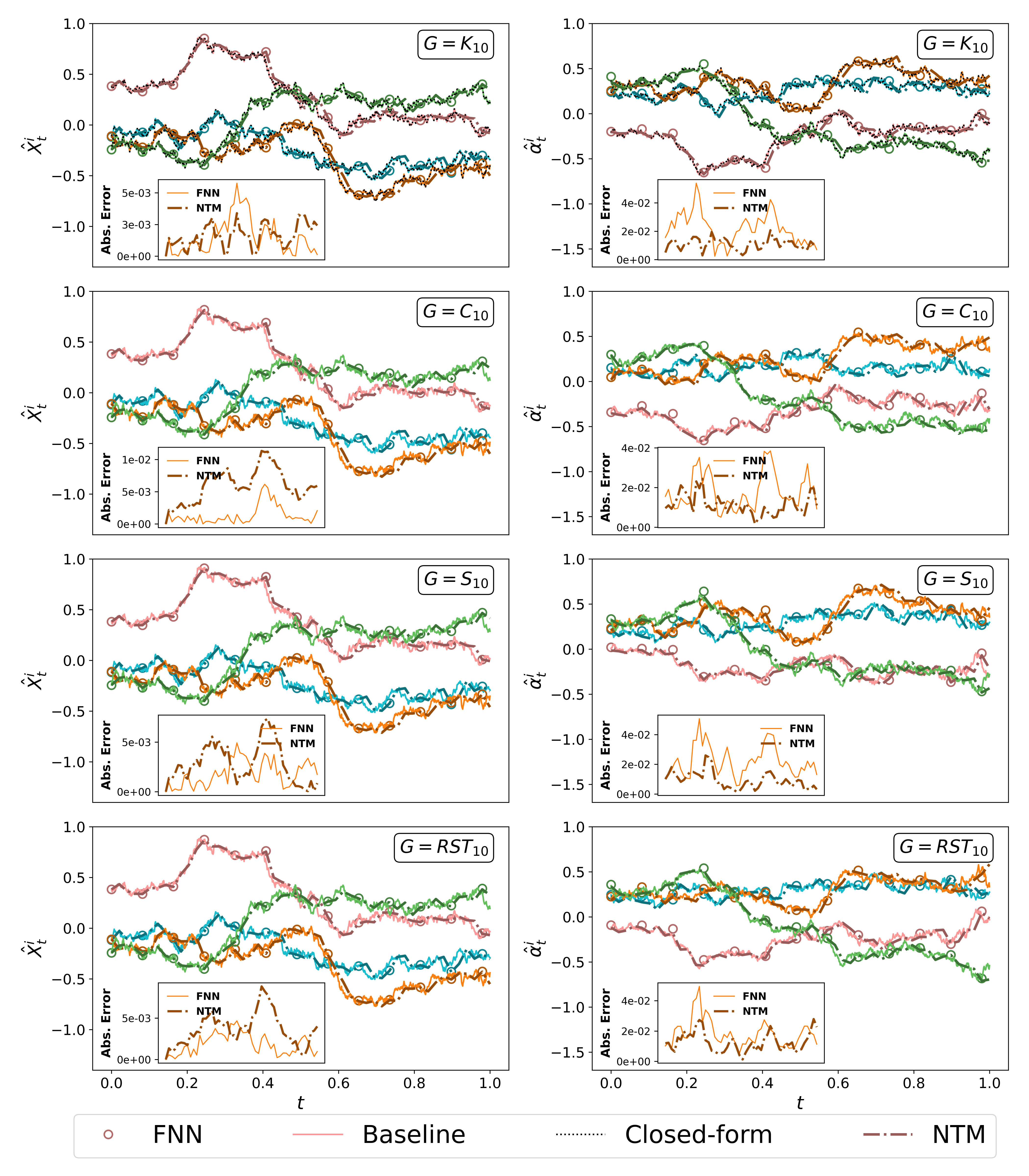}
    \caption{
    Equilibrium state (left) and strategy (right) trajectories for \(N=10\) players in the LQ game on various graphs (cf. Section~\ref{sec:numerics-lq}).
    Solid lines: baseline solution; dark-colored circles: Deep BSDE; dark-colored dotted lines: NTM-DBSDE; black dotted lines: closed-form solution. Only four players (indexed by 1, 2, 3, 4) are shown for clarity.
    In the inset panels, the solid (resp. dotted) lines show the absolute error of the trajectories generated by Deep BSDE (resp. NTM-DBSDE).
    Only one representative player is displayed for clarity, and the strategy error is smoothed using a moving average with a window size of \(3\).}
    \label{fig:LQ_DBSDE}
\end{figure}

Figure~\ref{fig:LQ_DBSDE_loss} shows the training loss curves of different players in NTM-DBSDE and Deep BSDE. For both methods,
the loss decreases toward zero as training progresses, confirming numerical convergence. 
Table~\ref{tab:LQ_value_M3} reports the MRE across different algorithms for the LQ game on graphs. Figure~\ref{fig:LQ_DBSDE} compares equilibrium state and strategy trajectories for the LQ game. 
The consistently small MREs, in terms of value functions, and the close alignment of trajectories produced by Deep BSDE and NTM-DBSDE demonstrate that NTM achieves performance comparable to standard FNN architectures.

\subsection{Multi-Agent Portfolio Games (Section~\ref{sec:numerics-portfolio})}\label{app:M3_Port}

\begin{table}[ht!]
\caption{MRE for the multi-agent portfolio game in Section~\ref{sec:numerics-portfolio}}
\label{tab:port_value_M3}
\centering
  \begin{tabular}{c|ccc}
    \toprule
   {Problem} & {\(G = C_{10}\)} & {\(G = S_{10}\)} & {\(G = \mathrm{RST}_{10}\)}\\
      \midrule
    DP & \(5.41e{-3}\) & \(7.01e{-3}\) & \(5.73e{-3}\) \\
    NTM-DP & \(5.60e{-3}\) & \(6.25e{-3}\) & \(5.17e{-3}\) \\
    \bottomrule
  \end{tabular}
\end{table}

Since the multi-agent portfolio game involves a controlled diffusion coefficient, the Deep BSDE method cannot be directly applied.
Accordingly,  we report only the MRE values in Table~\ref{tab:port_value_M3}. The numbers are comparable across algorithms, leading to conclusions consistent with those in Appendix~\ref{app:M3_LQ}.

\subsection{A Variant of the LQ Game (Section~\ref{sec:numerics_nonlq})}\label{app:M3_non_LQ}

\begin{table}[htbp]
\caption{MRE for the variant of the LQ game in Section~\ref{sec:numerics_nonlq}}
\label{tab:nonLQ_value_M3}
\centering
  \begin{tabular}{c|ccc}
    \toprule
   {Problem} & {\(G = C_{10}\)} & {\(G = S_{10}\)} & {\(G = \mathrm{RST}_{10}\)}\\
      \midrule
    NTM-DP & \(1.90e{-2}\) & \(1.01e{-2}\) & \(1.30e{-2}\) \\
    NTM-DBSDE & \(2.10e{-2}\) & \(2.20e{-2}\) & \(2.06e{-2}\) \\
    \bottomrule
  \end{tabular}
\end{table}

\begin{figure}[htbp]
    \centering
    \includegraphics[width=\linewidth]{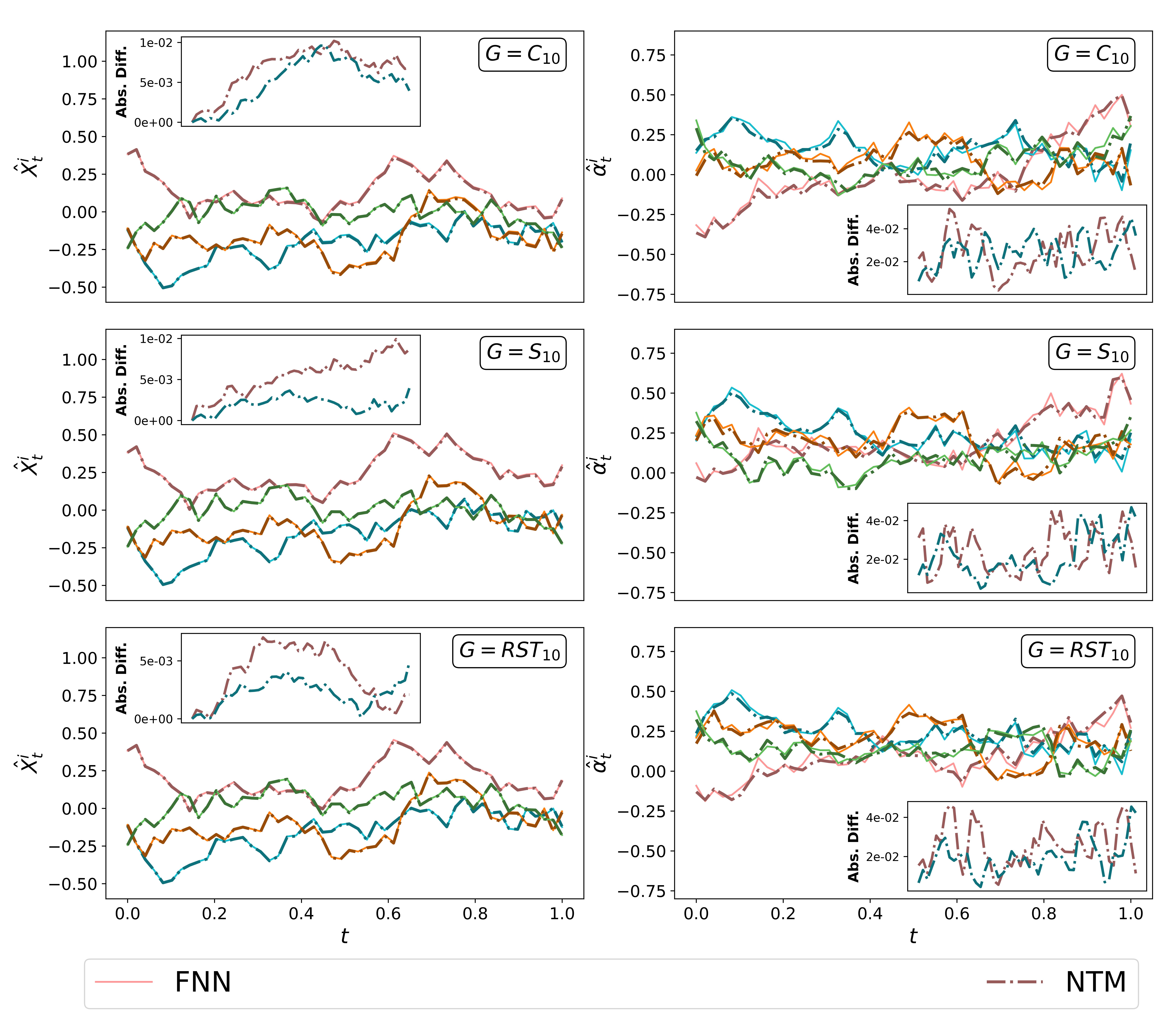}
    \caption{Equilibrium state (left) and strategy (right) trajectories for \(N=10\) players in the variant of the LQ game on various graphs (cf. Section~\ref{sec:numerics_nonlq}).
    Solid lines: baseline solution from Deep BSDE; dark-colored dotted lines: NTM-DBSDE. Only four players (indexed by 1, 2, 3, 4) are shown for clarity.
    In the inset panels, dotted lines show the absolute difference between Deep BSDE and NTM-DBSDE trajectories.
    Only one representative player is displayed for clarity, and the strategy error is smoothed using a moving average with a window size of \(3\).}
    \label{fig:nonLQ_DBSDE}
\end{figure}

Table~\ref{tab:nonLQ_value_M3} reports the MRE across different algorithms for the variant of the LQ game on graphs.
Figure~\ref{fig:nonLQ_DBSDE} compares equilibrium state and strategy trajectories for the variant of the LQ game on various graphs.

\medskip

In brief, we conclude that the non-trainable version of DBSDE works as well as that of DP, which shows the general applicability of NTM for game models and game-solving algorithms.

\subsection{Performance of GCN-Based Architectures}\label{app:GCN_fail}

Complementing the performance comparison between FNN and NTM architectures, we briefly examine the performance of GCN-based architectures for solving games on graphs.
Throughout the remainder of this section, the GCN architecture of our consideration is identical to that used in supervised learning experiments (cf. Section~\ref{sec:SVL}).
The games are solved using a GCN-based variant of the DP algorithm (denoted GCN-DP), in which each player maintains an independent GCN at each time step.

Our first observation is that, despite potentially having fewer parameters in large-scale settings, GCN requires substantially more memory than FNN and NTM when solving games on graphs.
For instance, when solving the same game with \(N =10\) players on an Nvidia GeForce RTX 2080 Ti GPU with about 10.5 GB of available memory, GCN encounters memory overflow, whereas FNN and NTM both operate normally.
We attribute this observation to the storage of computation graphs in \texttt{PyTorch}, which is necessary for gradient evaluation during backpropagation.
For FNN, NTM and GCN with the same width/hidden feature dimension \(F\), the hidden layer outputs of FNN and NTM are tensors with \(N_{\mathrm{batch}}F\) entries, while those of GCN have \(NN_{\mathrm{batch}}F\) entries.
The additional factor of \(N\) leads to a significantly larger memory footprint for storing the computation graph, thereby explaining the higher memory demand of GCN.

Consequently, we evaluate the game-solving performance of GCNs on a smaller-scale setting with \(N=5\) players, which is computationally much easier than the \(10\)-player case.
We test GCN across all three models in Section~\ref{sec:numerics} under the same model parameters.
We observe that the numerical results are largely insensitive to the choice of training hyperparameters and random seeds.
Therefore, we adopt the same set of training hyperparameters as those reported in Appendix~\ref{app:hyper_game}.

\begin{figure}
    \centering
    \includegraphics[width=1.0\linewidth]{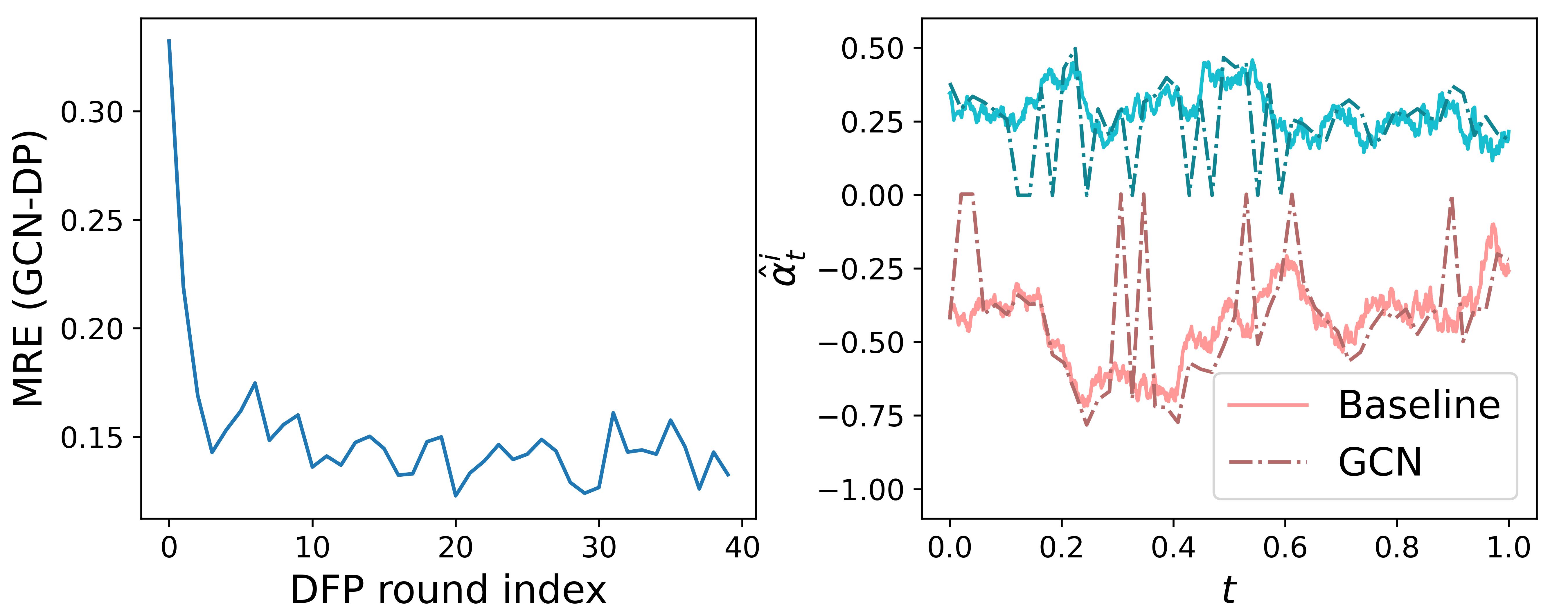}
    \caption{MRE curves under GCN-DP (left) and equilibrium strategy trajectories (right) for \(N=5\) players in the LQ game on the cycle graph \(G = C_5\) (cf. Section~\ref{sec:numerics-lq}).
    In the right panel, solid lines denote the baseline solution and dark-colored dotted lines are generated by GCN-DP. Different players are distinguished by different colors. Only two players (indexed by 1, 2) are shown for clarity.}
    \label{fig:GCN_failure}
\end{figure}

Figure~\ref{fig:GCN_failure} illustrates the MRE evolution during the training of GCN-DP and compares the resulting equilibrium strategy trajectories in the LQ game on the cycle graph (cf. Section~\ref{sec:numerics-lq}).
The left panel shows that the MRE initially decreases but then plateaus at around \(15\%\), failing to admit further improvements.
This indicates that GCN-DP does not fully capture the correct equilibrium value function.
The right panel reveals a characteristic failure pattern of GCN in solving games on graphs: while it generally captures the correct magnitude of the equilibrium strategy, it exhibits sudden spikes at certain time steps.
We note that this observation is consistent with the instability of GCN reported in Section~\ref{sec:SVL}.
At these time steps, the approximation fails, which in turn propagates and degrades the learning performance at subsequent time steps.
The corresponding quantitative evaluation metrics for GCN-DP are \(\mathrm{RMSE}_X = 1.26\%\), \(\mathrm{RMSE}_\alpha = 23.2\%\), and \(\mathrm{MRE} = 13.26\%\).

Interestingly, for the multi-agent portfolio game on the star graph (cf. Section~\ref{sec:numerics-portfolio}), we observe markedly different behavior. 
In this setting, GCN-DP achieves performance comparable to FNN and NTM in learning the equilibrium, with quantitative evaluation metrics given by \(\mathrm{RMSE}_X = 1.54\%\), \(\mathrm{RMSE}_\alpha = 0.67\%\), and \(\mathrm{MRE} = 1.18\%\).
We hypothesize that the improved performance of GCN-DP may be attributed to the simplicity of the equilibrium strategy (which is constant-valued). 
However, this explanation remains empirical and preliminary, and a rigorous understanding is left for future work.

Regarding the non-LQ game on graphs (cf. Section~\ref{sec:numerics_nonlq}), we observe behavior similar to that in the LQ setting and therefore omit the corresponding results for brevity. 
We have verified that the same qualitative behavior persists across multiple graph structures, and thus report results only on a representative graph.

In summary, while GCN-DP performs well in the portfolio game setting, its performance is not consistent across models. 
In contrast, the instability observed in the LQ setting suggests that GCN-based approaches may lack robustness in capturing more complex equilibrium structures.

\section{Game-Solving Results using NTM Architecture with \(M=1\)}\label{app:M1}

In this section, we check the game-solving performance of NTM with width \(M=1\).
Based on numerical results reported in Appendix~\ref{app:SVL_approximation}, such an architecture still admits adequate expressivity.
We aim to show that, reducing the width in the NTM architecture does not have a significant impact on the empirical performance, but enlarges the gain in sparsity by further lowering the parameter count.

Numerical experiments are conducted in the same environment using the same hyperparameters as those reported in Section~\ref{sec:numerics}, except that the NTM architecture mentioned in this section has width \(M=1\).
We report RMSE and MRE as evaluation metrics (see Section~\ref{sec:numerics} and Appendix~\ref{app:M3}). 
The reported RMSE values correspond to single evaluation runs (rather than averages over multiple runs), each computed using \(N_{\mathrm{test}}=5000\) paths.
Nevertheless, as demonstrated in Section~\ref{sec:numerics}, the variability across evaluation runs is small.
Unless otherwise noted, the equilibrium state and strategy trajectories exhibit behavior similar to those presented in Section~\ref{sec:numerics}, and are therefore omitted for brevity.

\subsection{Linear-Quadratic Stochastic Differential Games (Section~\ref{sec:numerics-lq})}\label{app:M1_LQ}

\begin{table}[ht!]
\caption{\(\mathrm{RMSE}_X\) and \(\mathrm{RMSE}_\alpha\) for the LQ game on graphs in Section~\ref{sec:numerics-lq} when \(M=1\)}
\label{tab:LQ_RMSE_M1}
\centering
  \begin{tabular}{c|cccccc}
    \toprule
    \multirow{2}{*}{Problem} &
      \multicolumn{2}{c}{\(G = C_{10}\)} &
      \multicolumn{2}{c}{\(G = S_{10}\)} &
      \multicolumn{2}{c}{\(G = \mathrm{RST}_{10}\)}\\
    & \(\mathrm{RMSE}_X\) & \(\mathrm{RMSE}_\alpha\) & \(\mathrm{RMSE}_X\) & \(\mathrm{RMSE}_\alpha\) & \(\mathrm{RMSE}_X\) & \(\mathrm{RMSE}_\alpha\)\\
      \midrule
    DP & \(1.91e{-2}\) & \(2.70e{-2}\) & \(2.11e{-2}\) & \(2.48e{-2}\) & \(1.90e{-2}\) & \(2.69e{-2}\)\\
    NTM-DP & \(1.90e{-2}\) & \(2.65e{-2}\) & \(2.11e{-2}\) & \(2.44e{-2}\) & \(1.90e{-2}\) & \(2.64e{-2}\)\\
    DBSDE & \(1.89e{-2}\) & \(3.45e{-2}\) & \(2.10e{-2}\) & \(3.14e{-2}\) & \(1.89e{-2}\) & \(3.47e{-2}\)\\
    NTM-DBSDE & \(1.90e{-2}\) & \(3.21e{-2}\) & \(2.11e{-2}\) & \(3.60e{-2}\) & \(1.90e{-2}\) & \(3.22e{-2}\)\\
    \bottomrule
  \end{tabular}
\end{table}

\begin{table}[ht!]
\caption{MRE for the LQ game on graphs in Section~\ref{sec:numerics-lq} when \(M=1\)}
\label{tab:LQ_value_M1}
\centering
  \begin{tabular}{c|ccc}
    \toprule
   {Problem} & {\(G = C_{10}\)} & {\(G = S_{10}\)} & {\(G = \mathrm{RST}_{10}\)}\\
      \midrule
    DP & \(2.36e{-2}\) & \(2.09e{-2}\) & \(2.36e{-2}\) \\
    NTM-DP & \(2.46e{-2}\) & \(2.04e{-2}\) & \(2.33e{-2}\) \\
    DBSDE & \(2.74e{-2}\) & \(2.35e{-2}\) & \(2.45e{-2}\) \\
    NTM-DBSDE & \(1.96e{-2}\) & \(1.85e{-2}\) & \(1.66e{-2}\) \\
    \bottomrule
  \end{tabular}
\end{table}

Tables~\ref{tab:LQ_RMSE_M1}--\ref{tab:LQ_value_M1} present the RMSE and MRE across different algorithms for the LQ game on graphs.
Consistently small RMSEs and MREs show that, measured in terms of paths and values, NTM with \(M=1\) achieves a comparable performance to FNN, regardless of the graph and the algorithm.

\subsection{Multi-agent portfolio games (Section~\ref{sec:numerics-portfolio})}\label{app:M1_Port}

\begin{table}[ht!]
\caption{\(\mathrm{RMSE}_X\) and \(\mathrm{RMSE}_\alpha\) for Portfolio games on graphs in Section~\ref{sec:numerics-portfolio} when \(M=1\)}
\label{tab:port_RMSE_M1}
\centering
  \begin{tabular}{c|cccccc}
    \toprule
    \multirow{2}{*}{Problem} &
      \multicolumn{2}{c}{\(G = C_{10}\)} &
      \multicolumn{2}{c}{\(G = S_{10}\)} &
      \multicolumn{2}{c}{\(G = \mathrm{RST}_{10}\)}\\
    & \(\mathrm{RMSE}_X\) & \(\mathrm{RMSE}_\alpha\) & \(\mathrm{RMSE}_X\) & \(\mathrm{RMSE}_\alpha\) & \(\mathrm{RMSE}_X\) & \(\mathrm{RMSE}_\alpha\)\\
      \midrule
    DP & \(1.82e{-2}\) & \(9.53e{-3}\) & \(1.78e{-2}\) & \(9.48e{-3}\) & \(1.80e{-2}\) & \(9.38e{-3}\)\\
    NTM-DP & \(1.41e{-2}\) & \(3.00e{-3}\) & \(1.45e{-2}\) & \(4.19e{-3}\) & \(1.41e{-2}\) & \(2.98e{-3}\)\\
    \bottomrule
  \end{tabular}
\end{table}

\begin{table}[ht!]
\caption{MRE for multi-agent portfolio games in Section~\ref{sec:numerics-portfolio} when \(M=1\)}
\label{tab:port_value_M1}
\centering
  \begin{tabular}{c|ccc}
    \toprule
   {Problem} & {\(G = C_{10}\)} & {\(G = S_{10}\)} & {\(G = \mathrm{RST}_{10}\)}\\
      \midrule
    DP & \(5.41e{-3}\) & \(7.01e{-3}\) & \(5.73e{-3}\) \\
    NTM-DP & \(5.26e{-3}\) & \(6.31e{-3}\) & \(5.30e{-3}\) \\
    \bottomrule
  \end{tabular}
\end{table}

\begin{figure}
    \centering
    \includegraphics[width=\linewidth]{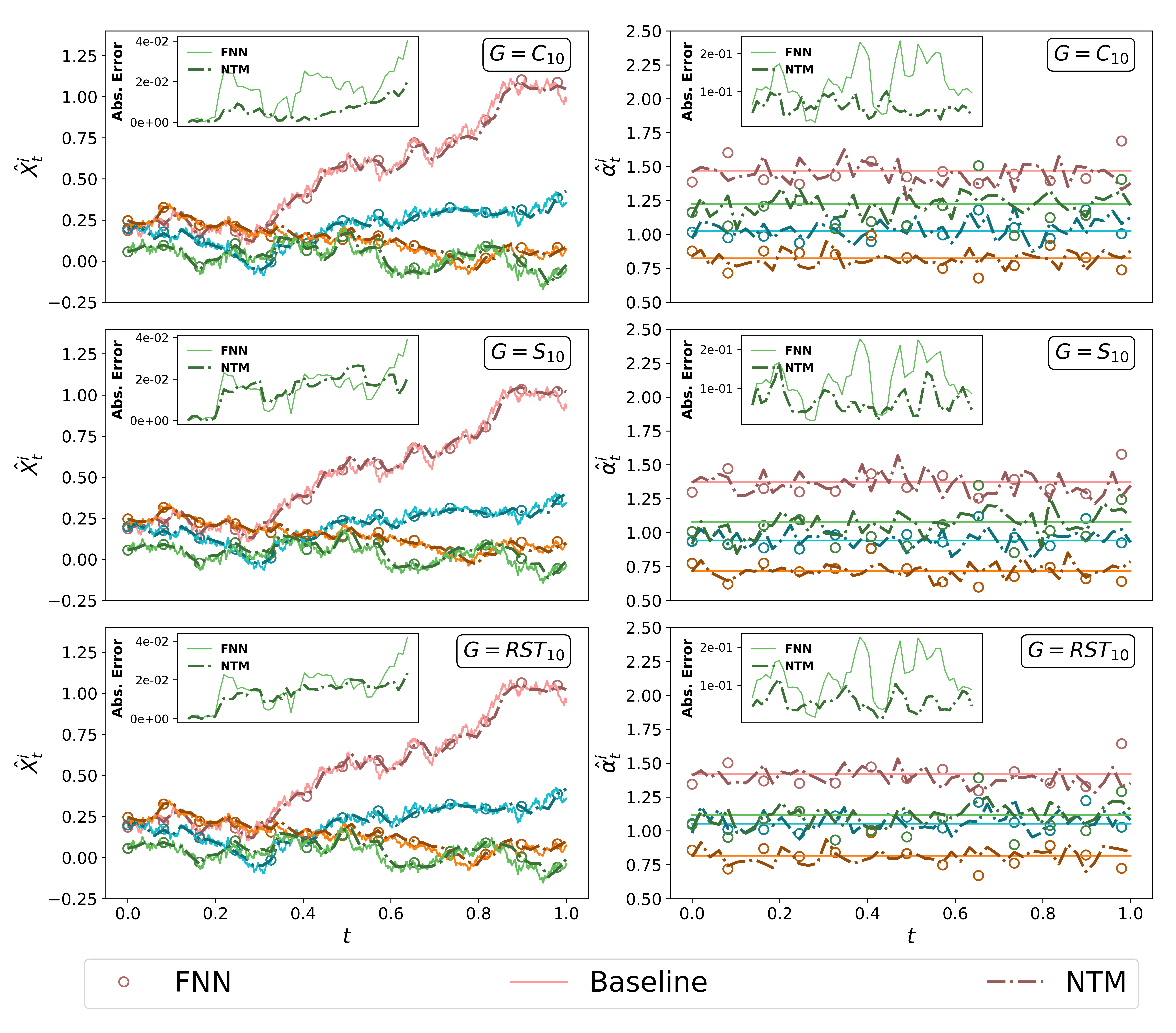}
    \caption{Equilibrium state (left) and strategy (right) trajectories for \(N=10\) players in the Portfolio game on various graphs (cf. Section~\ref{sec:numerics-portfolio}).
    Solid lines: baseline solution; dark-colored circles: DP; dark-colored dotted lines: NTM-DP with \(M=1\). 
    Only four players (indexed by 1, 4, 6, 9) are shown for clarity. 
    In the inset panels, the solid (resp. dotted) lines show the absolute error of the trajectories generated by DP (resp. NTM-DP with \(M=1\)).
    Only one representative player is displayed for clarity, and the strategy error is smoothed using a moving average with a window size of \(3\).}
    \label{fig:Port_DP_M1}
\end{figure}

Since the multi-agent portfolio game has a controlled diffusion coefficient, DBSDE does not apply.
Therefore, we only present tables of RMSE and MRE values, together with equilibrium trajectories solved under (NTM-)DP, as complementary materials.
Tables~\ref{tab:port_RMSE_M1}--\ref{tab:port_value_M1} present the RMSE and MRE across different algorithms for the multi-agent portfolio game.
Figure~\ref{fig:Port_DP_M1} demonstrates the comparison of equilibrium state and strategy trajectories for the multi-agent portfolio game on various graphs for the algorithm DP.
The observations and conclusions align with those mentioned in Appendix~\ref{app:M1_LQ}.
Importantly, we emphasize that the equilibrium strategy trajectories produced by NTM with \(M=1\) (cf. Figure~\ref{fig:Port_DP_M1}) exhibit significantly less fluctuation compared to those generated by NTM with \(M=3\) (cf. Figure~\ref{fig:Port_DP}).
This suggests that, while the NTM architecture already mitigates overfitting due to its lower model complexity, using \(M=1\) further enhances this advantage.

\subsection{A Variant of the LQ Game on Graphs (Section~\ref{sec:numerics_nonlq})}\label{app:M1_non_LQ}

\begin{table}[ht!]
\caption{\(\mathrm{RMSE}_X\) and \(\mathrm{RMSE}_\alpha\) for the variant of the LQ game in Section~\ref{sec:numerics_nonlq} when \(M=1\)}
\label{tab:nonLQ_RMSE_M1}
\centering
  \begin{tabular}{c|cccccc}
    \toprule
    \multirow{2}{*}{Problem} &
      \multicolumn{2}{c}{\(G = C_{10}\)} &
      \multicolumn{2}{c}{\(G = S_{10}\)} &
      \multicolumn{2}{c}{\(G = \mathrm{RST}_{10}\)}\\
    & \(\mathrm{RMSE}_X\) & \(\mathrm{RMSE}_\alpha\) & \(\mathrm{RMSE}_X\) & \(\mathrm{RMSE}_\alpha\) & \(\mathrm{RMSE}_X\) & \(\mathrm{RMSE}_\alpha\)\\
      \midrule
    NTM-DP & \(2.09e{-4}\) & \(3.18e{-3}\) & \(1.74e{-4}\) & \(2.71e{-3}\) & \(1.94e{-4}\) & \(3.13e{-3}\)\\
    NTM-DBSDE & \(1.16e{-4}\) & \(1.47e{-2}\) & \(1.55e{-4}\) & \(1.84e{-2}\) & \(1.15e{-4}\) & \(1.46e{-2}\)\\
    \bottomrule
  \end{tabular}
\end{table}

\begin{table}[ht!]
\caption{MRE for the variant of the LQ game in Section~\ref{sec:numerics_nonlq} when \(M=1\)}
\label{tab:nonLQ_value_M1}
\centering
  \begin{tabular}{c|ccc}
    \toprule
   {Problem} & {\(G = C_{10}\)} & {\(G = S_{10}\)} & {\(G = \mathrm{RST}_{10}\)}\\
      \midrule
    NTM-DP & \(2.15e{-2}\) & \(1.41e{-2}\) & \(1.52e{-2}\) \\
    NTM-DBSDE & \(1.42e{-2}\) & \(1.61e{-2}\) & \(1.35e{-2}\) \\
    \bottomrule
  \end{tabular}
\end{table}

Tables~\ref{tab:nonLQ_RMSE_M1}--\ref{tab:nonLQ_value_M1} present the RMSE and MRE across different algorithms for the variant of the LQ game on graphs.
The observations and conclusions align with those mentioned in Appendix~\ref{app:M1_LQ}.

\medskip

In brief, we conclude that NTM with \(M=1\) is sufficient for numerically solving games on graphs.

\section{Implementation Details and Hyperparameters for Numerical Experiments}\label{app:hyperparam}

\subsection{The Supervised Learning Experiment in Section~\ref{sec:SVL}}\label{app:hyper_SVL}

\textbf{Implementation details.}
For numerical implementation, \(f_3\) and \(f_4\) are directly constructed, while \(f_1\) and \(f_2\) require solving the Riccati system numerically. We adopt an explicit Runge-Kutta method of order 8, discretizing the time horizon \([0,T]\) into \(N_T = 1000\) sub-intervals.

For each target function \(f:\R^N\to\R\) and neural network architecture \(\phi^{\mathrm{NN}}:\R^N\to\R\), training is carried out over \(N_{\mathrm{epoch}}\) epochs.
In each epoch, \(N_{\mathrm{batch}}\) state samples \(x^1,\ldots,x^{N_{\mathrm{batch}}}\) are drawn from the unit hypercube \([0,1]^N\) using Latin hypercube sampling (LHS) for better space-filling. The learning target is evaluated on these samples to form labeled pairs \((x^j,f(x^j))\), and the training loss is computed as the mean squared error:
\begin{equation}
    \mathrm{loss} := \frac{1}{N_{\mathrm{batch}}}\sum_{m=1}^{N_{\mathrm{batch}}} [\phi^{\mathrm{NN}}(x^m) - f(x^m)]^2.
\end{equation}
Gradients are then backpropagated, and network parameters are updated via the optimizer. For the Chebyshev GCN, we additionally apply a learning rate scheduler that reduces the learning rate by a factor \(\gamma\in(0,1)\) per \(\tau\) epochs to improve convergence.

\medskip
\noindent \textbf{Random seed policy.}
We fix the random seeds for all libraries (e.g., \texttt{NumPy} and \texttt{PyTorch}) at the beginning of the experiment to ensure reproducibility. Multiple runs are then executed sequentially, each using the evolving state of the random number generator. This produces distinct (effectively independent) randomness across runs while ensuring that the entire experiment is fully reproducible.

\medskip
\noindent \textbf{Hyperparameter choices.} All architectures use the same depth and width across test cases. Specifically, FNN consists of 4 layers, with 32 neurons in each of the two hidden layers. 
NTM \(\phi^{\mathrm{NTM}}_{i,K,M,G}\) also has 4 layers, with the channel width \(M=3\) and \(i=1\).
The Chebyshev GCN has 4 layers, with feature dimensions \(F_1 = 1,\ F_2 = 64,\ F_3 = 1\) in its graph convolution layers. Following standard practice, FNN uses the hyperbolic tangent activation \(\sigma = \tanh\), while NTM and Chebyshev GCN use the rectified linear unit activation \(\sigma = \mathrm{ReLU}\).

For training, we use the Adam optimizer with batch size \(N_{\mathrm{batch}} = 256\). Chebyshev GCN includes a learning rate scheduler with hyperparameters \(\tau = 500\) and \(\gamma = 0.5\).
Each run of the supervised learning experiments is trained for a fixed number of epochs, after which training is terminated.
The number of training epochs scales with task complexity: \(N_{\mathrm{epoch}} = 2000\times j\) for the \(j\)-th test case \((j\in\{1,2,3,4\})\).

The learning rate \(\eta\) depends on both the task and the architecture.
Let \(\eta^{\mathrm{ARC}}_j\) denote the learning rate for architecture \(\mathrm{ARC}\) in the \(j\)-th test case. The values are:
\begin{align}
    &\eta^{\mathrm{FNN}}_1 =\eta^{\mathrm{NTM}}_1= \eta^{\mathrm{FNN}}_2 = \eta^{\mathrm{NTM}}_2 = 0.001,\quad \eta^{\mathrm{FNN}}_3 =\eta^{\mathrm{NTM}}_3= 0.005,
    \quad \eta^{\mathrm{FNN}}_4 = \eta^{\mathrm{NTM}}_4 = 0.01,\\
    &\eta^{\mathrm{CHEB}}_1 = \eta^{\mathrm{CHEB}}_2 = 0.002,\qquad \eta^{\mathrm{CHEB}}_3 = 0.01, \qquad \eta^{\mathrm{CHEB}}_4 = 0.02.
\end{align}

\subsection{Game-Solving Experiments in Section~\ref{sec:numerics}}\label{app:hyper_game}

\noindent \textbf{Implementation details.}
For non-trainable entries, we modify the default initialization of NN parameters in \texttt{PyTorch}. Trainable entries retain Xavier initialization, while non-trainable entries are initialized as zeros. Gradient hooks are then registered to block gradient updates for these entries, keeping them fixed at zero. Since \texttt{requires\_grad} in \texttt{PyTorch} is defined at the tensor rather than entry level, gradient hooks are necessary. For the same reason, training time is not significantly reduced despite the smaller number of parameters. Achieving numerical acceleration would require further development of efficient algorithms tailored for sparse neural network training (e.g., \cite{nikdan2023sparseprop}), which lies beyond the scope of this paper.

For simulating processes, the time horizon \([0,T]\) is divided into \(N_T\) subintervals of equal lengths \(h:=T/N_T\). 
Let \(\Delta := \{kh:k\in\{0,1,\ldots,N_T-1\}\}\) denote the time discretization grid. Forward and backward SDEs are forwardly simulated using the Euler scheme under this discretization \(\Delta\).
For instance, in DP and NTM-DP, the sample path \(\{\tilde{X}_t\}\) of the state process~\eqref{eqn:state_dynamics} is simulated as
\begin{equation}
    \tilde{X}^i_{t+h} =\tilde{X}^i_t + b^i(t,\tilde{X}^i_t,\tilde{X}^{\mathcal{N}^{\ell(i)}}_t,\tilde{\alpha}^i_t)\,h + \sigma^i(t,\tilde{X}^i_t,\tilde{X}^{\mathcal{N}^{\ell(i)}}_t,\tilde{\alpha}^i_t) \sqrt{h}\xi^i_t + \sigma_0^i(t,\tilde{X}^i_t,\tilde{X}^{\mathcal{N}^{\ell(i)}}_t,\tilde{\alpha}^i_t) \sqrt{h}\xi^0_t,
\end{equation}
where \(\ \xi^0_t,\ldots,\xi^N_t\overset{\mathrm{i.i.d.}}{\sim} \mathcal{N}(0,1),\ \forall i\in[N],\ t\in\Delta\) and the approximated feedback strategy \(\tilde{\alpha}\) results from the forward propagation of NNs
\begin{equation}
    \tilde{\alpha}_t := \begin{bmatrix}\phi^{\mathrm{NTM}}_{1,K,M,G}(t,\tilde{X}_t),\phi^{\mathrm{NTM}}_{2,K,M,G}(t,\tilde{X}_t),\ldots,\phi^{\mathrm{NTM}}_{N,K,M,G}(t,\tilde{X}_t)\end{bmatrix}\transpose.
\end{equation}

For strategy parameterization, as outlined in Algorithms~\ref{alg:NTM-DP}--\ref{alg:NTM-DBSDE}, a family of independent NN architectures is maintained at each time \(t\in\Delta\).
As empirically observed, this helps capture the time inhomogeneity of the equilibrium strategies, particularly over long time horizons.
As a result, in both (NTM-)DP and (NTM-)DBSDE, each player \(i\in[N]\) is associated with \(N_T\) independent neural networks, resulting in a total of \(N_TN\) independent NNs to parameterize the NE.

Training consists of \(N_{\mathrm{round}}\) rounds of DFP, each containing \(N_{\mathrm{epoch}}\) epochs. In each epoch, \(N_{\mathrm{batch}}\) sample paths are simulated to compute the loss function. Parameters are updated using Adam with initial learning rate \(\eta\), and with a scheduler that reduces the rate by a factor \(\gamma\in(0,1)\) per \(\tau\) DFP rounds.

Different from prior work \cite{hu2019deep,han2020deep}, we adopt alternating DFP instead of simultaneous DFP, which is known to exhibit a lower space complexity. In alternating DFP, all players update their strategies using the most recent information rather than those from the previous round. This requires specifying the order of updates within each round; in the absence of prior knowledge, we sample a random permutation of players at the beginning of each round. 
Numerical experiments show the empirical validity of alternating DFP, which has received limited theoretical or numerical attention in the existing literature.
For further discussion of simultaneous versus alternating DFP, see \cite[Section~5.3]{hu2019deep}.

\medskip
\noindent \textbf{Random seed policy.}
We fix the random seeds for all libraries (e.g., \texttt{NumPy} and \texttt{PyTorch}) at the beginning of the experiment to ensure reproducibility. Multiple runs are then executed sequentially, each using the evolving state of the random number generator. This produces distinct (effectively independent) randomness across runs while ensuring that the entire experiment is fully reproducible.

\medskip
\noindent \textbf{Hyperparameter choices.}
Across models, we use the same network depth and width: each FNN has 3 layers with 32 hidden neurons, while each NTM has 3 layers with channel width $M = 3$. Both architectures employ \texttt{ReLU} activations with ResNet-type skip connections \cite{he2016deep}, which empirically improve training stability over long horizons. For (NTM-)DBSDE, each player requires an additional network to approximate the mapping \(\R^N\ni x\mapsto v^i(0,x)\in\R\). This is implemented as a 4-layer FNN of width 32 with \texttt{Tanh} activation.

The following hyperparameters work robustly across models and algorithms: small perturbations of these values do not qualitatively alter the results.
\begin{equation}
    N_T = 50,\quad N_{\mathrm{round}} = 40,\quad N_{\mathrm{epoch}} = 150,\quad N_{\mathrm{batch}} = 256,\quad \eta = 0.001,\quad \gamma = 0.5,\quad \tau = 30.
\end{equation}

\end{document}